\documentclass[11pt,a4paper]{article}
\usepackage[a4paper, portrait, margin=1.1811in]{geometry}
\usepackage[english]{babel}

\usepackage{amsmath,amsfonts,amssymb}
\usepackage{algorithmic,algorithm}
\usepackage{array}
\usepackage{textcomp}
\usepackage{multicol}
\usepackage{url,verbatim}
\usepackage{graphicx,float}
\usepackage{amsthm}
\usepackage{authblk}
\usepackage{setspace}

\usepackage{subcaption}
\usepackage{xcolor}
\usepackage[colorlinks=true, allcolors=blue]{hyperref}
\usepackage{bm}
\usepackage{tikz}
\usetikzlibrary{graphs, graphs.standard}
\usepackage{optidef}

\usepackage{lineno}
\modulolinenumbers[5]

\newtheorem{definition}{Definition}
\newtheorem{theorem}{Theorem}
\newtheorem{proposition}{Proposition}
\newtheorem{remark}{Remark}

\begin{document}


\setlength{\parskip}{12pt} 
\setlength{\parindent}{0pt} 
\onehalfspacing  
\setlength{\intextsep}{20pt} 

\mathtoolsset{showonlyrefs,showmanualtags} 


\title{Graph-Based Correlation Matrix Generation: A Convex Optimization Approach}

\author[1]{Ali Fakhar\thanks{Corresponding author: ali.fakhar@univ-grenoble-alpes.fr}}
\author[1]{Kévin Polisano}
\author[2]{Irène Gannaz}
\author[1]{Sophie Achard}
\affil[1]{Univ. Grenoble Alpes, CNRS, Grenoble INP, Inria, LJK, F-38000 Grenoble, France}
\affil[2]{Univ. Grenoble Alpes, CNRS, Grenoble INP, G-SCOP, 38000 Grenoble, France}
\date{}

\maketitle

\begin{abstract}
This work addresses the generation of theoretical correlation matrices with prescribed sparsity patterns associated to graph structures. We propose a novel convex optimization framework in which an initial matrix is projected onto an elliptope under a positive semidefiniteness constraint. Several numerical schemes are implemented and compared. The problem falls within the broader class of matrix completion, where off-diagonal entries corresponding to absent edges are fixed to zero and diagonal entries are fixed to one. Beyond this structural constraint, the approach offers greater flexibility than existing methods by allowing control over the mean of the off-diagonal entry distribution, enabling the generation of correlation matrices that better reflect realistic data. This procedure is not designed to yield a uniform distribution over the feasible set; rather, it provides a principled and tunable way to construct correlation matrices suitable for benchmarking statistical methods for graphical model inference. Theoretical guarantees on the existence of solutions are established, both in the general setting and under the additional mean constraint. Simulation studies illustrate the properties of the generated matrices with respect to graph structure. The methodology is applied to two real-world datasets from neuroscience and finance, and a comparison with GAN-based correlation matrix generation is provided.
\end{abstract}

\noindent\textbf{Keywords:} Correlation matrices, graphical models, convex optimization, matrix completion, structured graphs, generative models

\bigskip

\section{Introduction}
\label{sec:introduction}

Graphical models provide a principled framework for representing and
inferring dependencies among random variables, with applications spanning
genetics \cite{grechkin2015pathway}, proteomics \cite{akbani2014pan},
disease characterization \cite{armour2017network}, functional brain
connectivity \cite{huang2010learning}, and risk management \cite{hull2012risk}.
A central object in this framework is the correlation matrix (or its inverse,
the precision matrix). Assessing the quality of
graph inference procedures requires simulation studies, which in turn
necessitate generating \emph{theoretical} correlation matrices with a
prescribed zero pattern, a task that is non-trivial in general.

Several methods have been proposed to generate structured correlation
matrices. Classical approaches such as the vines and onion procedures
\cite{lewandowski2009generating}, building on the characterization of
\cite{joe2006generating}, produce matrices whose entries are uniformly
distributed over the set of correlation matrices, but do not accommodate
structural (graph-based) constraints. Methods based on the Cholesky
decomposition \cite{pourahmadi2015distribution, cordoba2020generating}
can incorporate such constraints, but are restricted to \emph{chordal}
graphs, which inherently admit a Perfect Elimination Ordering (PEO). To handle
non-chordal structures, \cite{cordoba2020generating} introduced diagonal
dominance and partial orthogonalization approaches; however, both methods
suffer from significant limitations, the former concentrates generated
entries near zero, while the latter depends on the arbitrary labeling of
graph nodes.

Moreover, a consistent empirical observation in neuroscience \cite{achard2022generation} is that brain connectivity correlations exhibit a positive shift, whereas existing generation methods produce matrices with entry distributions centered around zero. Applying a similar perspective to financial market data via Generative Adversarial Networks (GANs) \cite{Gautier2020} invites a deeper exploration of these distributional characteristics. This contrast between synthetic and real-world correlation matrices poses a fundamental challenge for the validity of realistic simulation studies.

In this paper, we propose a novel approach for generating structured
correlation matrices based on convex optimization. Given an initial matrix
$\bar{\mathbf{C}}$ and a graph $G$, we formulate the problem as the
projection of $\bar{\mathbf{C}}$ onto the set $\mathcal{C}(G)$ of
correlation matrices compatible with $G$, in the Frobenius sense. This
formulation is applicable to \emph{arbitrary} graph structures, including
non-chordal graphs, and yields a unique, node-ordering-independent solution.
Crucially, by introducing an additional linear inequality constraint on the
mean of the generated entries, our method can produce correlation matrices
whose distributional properties match those of empirical data, such as the
positive shift observed in functional brain connectivity or financal data.
We present three
solvers for the resulting convex program --- a dual approach \cite{malick2004dual},
a general-purpose convex solver via CVXPY \cite{diamond2016cvxpy}, and the
specialized quadratic semidefinite solver QSDPNAL \cite{li2018qsdpnal} ---
and compare their computational performance across problem dimensions ranging
from $p = 51$ to $p = 1{,}000$ vertices.

The remainder of this paper is organized as follows.
Section~\ref{sec:preliminaries} establishes the preliminaries and foundational
definitions for graph-structured correlation matrices.
Section~\ref{sec:state_of_the_art} reviews existing state-of-the-art generation
methods and details their limitations.
Section~\ref{sec:framework} introduces our proposed convex optimization framework
and discusses its theoretical properties.
Section~\ref{sec:solvers} outlines the three algorithms used to solve the
optimization problem along with numerical post-processing.
Section~\ref{sec:baseline_experiments} presents baseline numerical experiments on
the geometry of the elliptope, initialization effects, graph structure effects,
and solver benchmarks.
Section~\ref{sec:mean_constraint} details the inclusion of the mean constraint to
control entry distributions, including theoretical bounds and feasibility analysis.
Section~\ref{sec:applications} provides real-world applications and experiments.
Finally, Section~\ref{sec:conclusion} concludes the paper.


\section{Preliminaries and Problem Formulation}
\label{sec:preliminaries}

From a statistical perspective, consider a random vector $\bm{x} \in \mathbb{R}^p$
with covariance matrix
$\mathbf{\Sigma} = \mathbb{E}[(\bm{x} - \mathbb{E}[\bm{x}])(\bm{x} -
\mathbb{E}[\bm{x}])^{\top}]$.
The covariance matrix $\mathbf{\Sigma}$ captures the linear dependencies between
the components of $\bm{x}$. The corresponding correlation matrix $\mathbf{C}$ is
obtained by normalizing the covariance matrix,
\begin{equation}
\mathbf{C} = (\text{diag}(\mathbf{\Sigma}))^{-1/2} \mathbf{\Sigma}
(\text{diag}(\mathbf{\Sigma}))^{-1/2},
\label{eq:corr_from_cov}
\end{equation}
where $\text{diag}(\mathbf{\Sigma})$ denotes the diagonal matrix formed by the
diagonal elements of $\mathbf{\Sigma}$. This transformation standardizes each
variable to have unit variance, making the correlation coefficients
$c_{ij} = \frac{\sigma_{ij}}{\sqrt{\sigma_{ii}\sigma_{jj}}}$ scale-invariant and
directly comparable.

A key property of covariance and correlation matrices is their positive
semidefiniteness. A real symmetric $p \times p$ matrix $\mathbf{A}$ is positive
semidefinite (PSD), denoted $\mathbf{A} \succeq 0$, if
$\bm{v}^{\top}\mathbf{A}\bm{v} \geq 0$ for all $\bm{v} \in \mathbb{R}^p$, and
positive definite (PD), denoted $\mathbf{A} \succ 0$, if
$\bm{v}^{\top}\mathbf{A}\bm{v} > 0$ for all non-zero $\bm{v} \in \mathbb{R}^p$.
The PSD property of correlation matrices is inherited from the covariance matrix,
as for any vector $\bm{v} \in \mathbb{R}^p$, we have
\begin{equation*}
\bm{v}^{\top}\mathbf{\Sigma}\bm{v} =
\mathbb{E}[(\bm{v}^{\top}(\bm{x} - \mathbb{E}[\bm{x}]))^2] \geq 0.
\end{equation*}
In the context of generating correlation matrices with prescribed zero patterns,
we work with the broader class of PSD matrices ($\mathbf{\Sigma} \succeq 0$) to
ensure feasibility of the optimization problem.

Let us denote by $\mathcal{C}$ the set of correlation matrices in
$\mathbb{R}^{p \times p}$. A matrix $\mathbf{C} \in \mathcal{C}$ is a symmetric
PSD matrix satisfying
\begin{equation}
\text{diag}(\mathbf{C}) = \mathbf{1}, \quad \text{and} \quad
-1 \leq c_{ij} \leq 1 \quad \forall i, j \in \{1, 2, \ldots, p\}.
\label{eq:corr_constraints}
\end{equation}

A graph $G = (V, E)$ is a mathematical structure representing pairwise relations
between objects, where $V = \{v_1, v_2, \ldots, v_p\}$ is the set of vertices
and $E \subset V \times V$ is the set of edges. An edge $(i,j) \in E$ indicates
that vertices $i$ and $j$ are connected. We are interested in correlation matrices
that are associated with a given graph structure $G$. Specifically, we seek
correlation matrices $\mathbf{C}$ such that $c_{ij} = 0$ if $(i,j) \notin E$
for all $i \neq j$. We denote by $\mathcal{C}(G)$ the set of correlation matrices
associated with graph $G$:
\begin{equation}
\begin{split}
\mathcal{C}(G) = \big\{ \mathbf{C} = (c_{ij}) : \;
&\mathbf{C} \text{ is PSD, satisfies } \eqref{eq:corr_constraints}, \\
&\text{and } c_{ij} = 0 \text{ for all } (i,j) \notin E \big\}.
\end{split}
\label{eq:omega_G}
\end{equation}

In this paper, our primary objective is the generation of PSD matrices under a
prescribed zero pattern. As such, our framework is general and not exclusively
restricted to either correlation or precision matrices. However, we briefly
clarify their distinct statistical interpretations: under a Gaussian distribution,
a zero in the correlation matrix indicates marginal independence between two
variables~\cite[Chapter 5]{lauritzen1996graphical}, while a zero in the precision matrix indicates conditional independence.
Conversely, for non-Gaussian distributions, these zeros imply marginal
uncorrelatedness and conditional uncorrelatedness, respectively, which do not
strictly guarantee statistical independence~\cite{baba2004partial}.

This formulation places our problem within the broader framework of \emph{matrix
completion}. Classically, the positive semidefinite (or positive definite) matrix
completion problem asks whether a \emph{partially specified} symmetric matrix, with entries fixed on a set $E$ and left \emph{free} (unspecified) on its
complement, can be completed to a full PSD matrix by an appropriate choice of
the free entries~\cite{grone1984positive}. A classical result in this literature~\cite{grone1984positive}
states that every partial matrix with specified diagonal and specified entries on
the edges of a chordal graph $G$, whose every fully-specified principal
submatrix is PSD, admits a PSD completion.
Our setting can be viewed as a special case of this framework in which the roles
of $E$ and its complement are reversed relative to the classical completion
problem. Rather than leaving the entries on $E^c$ (non-edges) \emph{free} and
asking for feasibility, we \emph{fix} them to zero as part of the problem
data, together with the unit-diagonal constraint, and treat the entries on $E$
as the variables.

\section{State-of-the-Art Methods}
\label{sec:state_of_the_art}

In this section, we provide a critical review of existing methods available for
generating correlation matrices with prescribed graph structures.

To the best of our knowledge, three principal methods have been proposed in the
literature for generating correlation matrices under a prescribed zero pattern. These methods are known as a \textit{Cholesky decomposition} approach~\cite{cordoba2018metropolis},
\textit{partial orthogonalization}, and \textit{diagonal dominance}, the latter
two both introduced in~\cite{cordoba2020generating}. The Cholesky-based method
constructs $\mathbf{C} = \mathbf{L}\mathbf{L}^\top$ from a lower triangular
factor $\mathbf{L}$, and is restricted to \emph{chordal} graphs. The zero
pattern of $\mathbf{C}$ is preserved by the Cholesky factorization if and only
if the associated graph admits a Perfect Elimination Ordering (PEO), which holds
precisely when the graph is chordal~\cite{vandenberghe2015chordal}. Moreover, the same correspondence between the zero pattern of $\mathbf{C}$
and that of its Cholesky factor $\mathbf{L}$ makes it possible to sample the
free entries of $\mathbf{L}$ so as to achieve a uniform sampling over
$\mathcal{C}(G)$ for chordal graphs~\cite{cordoba2020generating}. Intuitively,
a PEO allows the vertices to be eliminated one at a time such that each
eliminated vertex's remaining neighbors form a clique; this ordering lets the
Cholesky factorization proceed without ever generating fill-in entries at
positions corresponding to non-edges of $G$, so the prescribed zeros are exactly
reproduced. For non-chordal graphs, no such ordering exists, fill-in is
unavoidable, and the resulting matrix violates the required zero pattern. The
remaining two methods, diagonal dominance and partial orthogonalization, do not
rely on a PEO and are therefore applicable to \emph{arbitrary} graph structures,
including non-chordal ones. Since this paper targets the generation of
correlation matrices under general, possibly non-chordal zero patterns, we
restrict our detailed review and experimental comparison to these latter two
methods. We examine their algorithmic formulations below and highlight their
limitations across distributional properties, node-ordering dependency, and
structural restrictions.

When considering criteria for uniformity, all existing methods, to the best of our knowledge, fail to generate samples uniformly within the feasible space $\mathcal{C}(G)$. It is crucial, however, to distinguish between uniformity of sampling, which pertains to the selection of matrices according to the Lebesgue measure over the manifold $\mathcal{C}(G)$, and the uniformity of the resulting entry distributions. A uniform sampling of correlation matrices over $\mathcal{C}(G)$ does not imply that the individual entries $c_{ij}$ are uniformly distributed in $[-1, 1]$. Because every valid correlation matrix must be positive semidefinite (PSD) and satisfy the unit-diagonal constraint $\text{diag}(\mathbf{C}) = \bm{1}$, the entries are confined to a compact convex body known as the elliptope. Within this space, geometric constraints inherently force the distribution of entries to be non-uniform. Consequently, even if a method were to achieve perfect uniform sampling over the feasible space, the marginal distributions of its entries would still exhibit significant bias as a result of the underlying geometry of $\mathcal{C}(G)$. Indeed, as demonstrated by Joe \cite{joe2006generating}, even under the complete graph assumption, the marginal distribution of an off-diagonal entry $c_{ij}$ follows a symmetric beta distribution on the interval $[-1, 1]$, rather than a uniform one.

\subsection{Diagonal Dominance}
\label{subsec:diagonal_dominance}

The diagonal dominance approach, detailed in~\cite{cordoba2020generating},
constructs a positive definite matrix by first sampling off-diagonal entries
uniformly from $[-1, 1]$, subject to the prescribed zero pattern, and then
enforcing diagonal dominance via the update rule:
\begin{equation}
    \forall i \in \{1, \ldots, p\}, \quad \tilde{c}_{ii} \leftarrow
    \sum_{\substack{j=1 \\ j \neq i}}^{p} |\tilde{c}_{ij}| + \delta_i,
\end{equation}
where $\delta_i > 0$ is a small positive random perturbation ensuring strict
positive definiteness (omitting $\delta_i$ yields a positive semidefinite matrix).
The resulting matrix is then normalized to recover a valid correlation matrix:
\begin{equation}
    \mathbf{C} =
    \mathrm{diag}(\tilde{\mathbf{C}})^{-1/2}\,\tilde{\mathbf{C}}\,
    \mathrm{diag}(\tilde{\mathbf{C}})^{-1/2}.
\end{equation}

While computationally inexpensive and applicable to arbitrary graph structures,
diagonal dominance suffers from a critical limitation. The normalization step
systematically shrinks off-diagonal entries toward zero. As illustrated later in
our numerical experiments (Figure~\ref{fig:density_comparison}), the resulting distribution of non-diagonal, non-zero
entries is heavily concentrated near zero, making it poorly suited for simulating
datasets where correlations are moderate to strong.

This behavior arises because the diagonal entries grow proportionally to the sum
of absolute off-diagonal values, causing the ratio
$\tilde{c}_{ij} / \sqrt{\tilde{c}_{ii}\tilde{c}_{jj}}$ to remain small regardless
of the initial magnitude of $\tilde{c}_{ij}$. The bimodal shape of the diagonal
dominance distribution is a consequence of the saturation of the nonlinear
normalization map. An expression of the density is provided in \ref{app:bimodal}.

\subsection{Partial Orthogonalization}
\label{subsec:partial_orthogonalization}

The partial orthogonalization method, proposed in~\cite{cordoba2020generating},
offers an alternative that is applicable to non-chordal graphs. Starting from an
initial matrix $\mathbf{C}_0 = \mathbf{Q}_0 \mathbf{Q}_0^\top$ that satisfies a
set of the desired zero pattern (obtained, for instance, by changing non-chordal graph to chordal graph with adding edges, such that the
graph $G$ forms a chordal graph and applying a Cholesky-based
procedure~\cite{cordoba2018metropolis}), the algorithm iteratively orthogonalizes
rows of $\mathbf{Q}_0$ with respect to specific subsets of other rows.

Formally, for each row vector $\bm{q}_{i\cdot}$, the update enforces:
\begin{equation}
    \bm{q}_{i\cdot} \leftarrow \bm{q}_{i\cdot} -
    \sum_{\substack{j < i \\ (i,j) \notin E}}
    \frac{\langle \bm{q}_{i\cdot}, \bm{q}_{j\cdot} \rangle}
    {\|\bm{q}_{j\cdot}\|^2} \bm{q}_{j\cdot},
\end{equation}
using a modified Gram-Schmidt procedure. After all rows have been processed,
the factor matrix $\mathbf{Q}$ yields
$\mathbf{C} = \mathbf{Q}\mathbf{Q}^\top \in \mathcal{C}(G)$.

Although partial orthogonalization produces correlation matrices with a broader
distribution of off-diagonal values than diagonal dominance, the procedure heavily depends on the node labeling of the graph.
Different orderings of vertices lead to different output distributions, meaning it
lacks node-ordering independence.

More generally, none of the state-of-the-art methods offers control over the
moments of the entry distribution, for instance, none can enforce a shift
of the mean toward positive values.
\section{Proposed Convex Optimization Framework}
\label{sec:framework}

To generate structured correlation matrices without suffering from near-zero
shrinkage or node-labeling dependence, we formulate the matrix generation task as
a matrix projection problem. Given an initial matrix $\bar{\mathbf{C}}$, we find its
closest graph-compatible representation by solving the following optimization
problem:
\begin{mini}
{\mathbf{C}}{\frac{1}{2} \|\mathbf{C} - \bar{\mathbf{C}}\|_F^2}
{}{}
\addConstraint{\mathbf{C}}{\in \mathcal{C}(G),}
\label{eq:optimization_problem}
\end{mini}
where $\|\cdot\|_F$ denotes the Frobenius norm.

The choice of the squared Frobenius norm as the objective function is motivated
by its leading to a quadratic semidefinite optimization problem, which exhibits
strong convergence properties compared to linear formulations \cite{boyd2004convex}.
Moreover, this projection framework directly allows practitioners to guide the
generation process by supplying an empirical or specialized matrix $\bar{\mathbf{C}}$
to inject desired global properties.

It is important to note that the element-wise bound constraint $|c_{ij}| \leq 1$
specified in \eqref{eq:corr_constraints} is implicitly guaranteed by the positive
semidefiniteness constraint. A symmetric matrix $\mathbf{C}$ is PSD if and only if
all its principal minors are non-negative~\cite{meyer2000matrix}. For any
$2 \times 2$ principal submatrix 
$\begin{pmatrix}
1 & c_{ij} \\
c_{ij} & 1
\end{pmatrix},$
positive semidefiniteness strictly requires its determinant to be non-negative
($1 - c_{ij}^2 \geq 0$), which simplifies to $|c_{ij}| \leq 1$. Therefore,
explicit boundary constraints are redundant and can be omitted.

The optimization problem \eqref{eq:optimization_problem} satisfies the conditions
necessary to guarantee a unique solution:
\begin{enumerate}
    \item \textbf{Strict Convexity:} The objective function
    $f(\mathbf{C}) = \frac{1}{2}\|\mathbf{C} - \bar{\mathbf{C}}\|_F^2$ is strictly
    convex with respect to $\mathbf{C}$.
    \item \textbf{Set Convexity:} The constraint set $\mathcal{C}(G)$ is convex.
    The set of positive semidefinite matrices forms a convex cone, the unit diagonal
    constraint forms an affine equality hyperplane, and the structural zero patterns
    $c_{ij} = 0$ for $(i,j) \notin E$ are linear equality constraints. The
    intersection of these closed convex sets is convex.
    \item \textbf{Non-Emptiness:} The feasible set is non-empty because the identity
    matrix $\mathbf{I}$ trivially satisfies all constraints for any graph $G$
    ($\mathbf{I} \succeq 0$, $\text{diag}(\mathbf{I})=\mathbf{1}$, and all
    off-diagonal entries are 0, satisfying the zero condition).
\end{enumerate}

By standard principles of convex analysis \cite{boyd2004convex}, a strictly convex
objective function over a non-empty, closed convex feasible set admits a unique
global minimizer. Therefore, problem \eqref{eq:optimization_problem} always yields
a unique solution, eliminating any dependence on node ordering or algorithmic
initialization trajectories.

Because of uniqueness, the notion of initialization in our framework refers not to the starting point of an iterative algorithm, but to the choice of $\bar{\mathbf{C}}$ itself, which fully determines the output $\hat{\mathbf{C}}$. Two choices are considered in this work:
\begin{itemize}
    \item \textbf{Uniform random initialization:} The natural choice for 
    $\bar{\mathbf{C}}$ would be to sample it uniformly with respect to 
    $\mathcal{C}(G)$ as it is addressed in \cite{pourahmadi2015distribution, lewandowski2009generating}. However, as we will show, uniformity is not preserved 
    under our projection mapping even under this initialization. For simplicity, 
    we therefore sample the lower triangular (and diagonal) entries of 
    $\bar{\mathbf{C}}$ independently and uniformly from $[-1, 1]$, with the 
    upper triangular entries set by symmetry. This offers coverage of the full hypercube $[-1,1]^{p(p-1)/2}$ in the space of 
    symmetric matrices $\mathcal{S}^p$, prior to projection onto $\mathcal{C}(G)$.

    \item \textbf{Empirical initialization:} $\bar{\mathbf{C}}$ is set to the
    empirical correlation matrix computed from observed data. In this case,
    problem \eqref{eq:optimization_problem} finds the closest valid correlation
    matrix in $\mathcal{C}(G)$ to the observed one in the Frobenius sense,
    preserving the global statistical characteristics of the initial matrix.
\end{itemize}

More generally, the mapping
$\bar{\mathbf{C}} \mapsto \hat{\mathbf{C}} = \mathrm{proj}_{\mathcal{C}(G)}
(\bar{\mathbf{C}})$ is a continuous, nonexpansive map (being a projection onto a
closed convex set). Therefore, the distribution of the output $\hat{\mathbf{C}}$
varies smoothly with that of $\bar{\mathbf{C}}$, i.e. two initial
matrices close in
Frobenius norm produce output matrices that are also close.

Dealing with the strictness of the zero pattern, ideally, one might desire a
formulation where $c_{ij} = 0$ if and only if $(i,j) \notin E$. However, our
problem formulation enforces $c_{ij} = 0$ if $(i,j) \notin E$, which means that
while non-edges strictly guarantee zero entries, we may occasionally observe
$c_{ij} = 0$ for some edges $(i,j) \in E$. We refrain from imposing strict non-zero constraints ($c_{ij} \neq 0$) for
edges, as this would break convexity of the feasible domain; see
\ref{app:measure} for a formal justification.

\section{Three Methods for Solving the Optimization Problem}
\label{sec:solvers}

We now present three different numerical approaches to solve the convex
optimization problem \eqref{eq:optimization_problem}. Each method offers distinct
trade-offs in terms of computational scalability, ease of implementation, and
algorithmic architecture.

\subsection{Dual Approach}
\label{subsec:malick_dual}

The dual approach, adapted from Malick \cite{malick2004dual}, transforms the
constrained semidefinite least-squares problem into an unconstrained
optimization problem over a lower-dimensional dual space.

\subsubsection{Theoretical Framework}
Consider a general semidefinite least-squares problem:
\begin{mini}|l|
{\mathbf{X}}{\frac{1}{2} \left\| \mathbf{X} - \bar{\mathbf{X}} \right\|_F^2}
{\label{eq:sdls_general}}{}
\addConstraint{\mathbf{X}}{\succeq 0}
\addConstraint{\mathcal{A}\mathbf{X}}{= \bm{\alpha},}
\end{mini}
where $\mathcal{A}: \mathcal{S}^p \to \mathbb{R}^m$ is a linear operator mapping
the space of $p \times p$ symmetric matrices $\mathcal{S}^p$ to the constraint
space $\mathbb{R}^m$. The Lagrangian associated with \eqref{eq:sdls_general} is
defined as:
\begin{equation}
\mathcal{L}(\mathbf{X}, \mathbf{Y}, \bm{y}) =
\frac{1}{2}\|\mathbf{X} - \bar{\mathbf{X}}\|_F^2
- \langle \mathbf{Y}, \mathbf{X} \rangle_F
+ \langle \bm{y}, \mathcal{A}\mathbf{X} - \bm{\alpha} \rangle_{\mathbb{R}^m},
\label{eq:lagrangian}
\end{equation}
where $\mathbf{Y} \in \mathcal{S}^p$ ($\mathbf{Y} \succeq 0$) and
$\bm{y} \in \mathbb{R}^m$ represent the dual variables. Here,
$\langle \cdot, \cdot \rangle_F$ denotes the Frobenius inner product
$\langle \mathbf{A}, \mathbf{B} \rangle_F = \text{Tr}(\mathbf{A}\mathbf{B})$,
and $\langle \cdot, \cdot \rangle_{\mathbb{R}^m}$ represents the standard
Euclidean inner product on $\mathbb{R}^m$. The adjoint operator
$\mathcal{A}^*: \mathbb{R}^m \to \mathcal{S}^p$ is uniquely determined via the
duality relation:
\begin{equation}
\langle \mathcal{A}\mathbf{X}, \bm{y} \rangle_{\mathbb{R}^m} =
\langle \mathbf{X}, \mathcal{A}^{*}\bm{y} \rangle_F,
\quad \forall \mathbf{X} \in \mathcal{S}^p, \; \bm{y} \in \mathbb{R}^m.
\label{eq:adjoint_def}
\end{equation}
By setting the first-order optimality condition with respect to $\mathbf{X}$ to
zero, we obtain the primal-dual relation
$\mathbf{X} = \bar{\mathbf{X}} + \mathbf{Y} - \mathcal{A}^* \bm{y}$.

\subsubsection{Application to Our Projection Problem}
In our context, the matrix $\mathbf{X}$ corresponds to the graph-compatible
correlation matrix $\mathbf{C}$, and $\bar{\mathbf{X}}$ is the initial
matrix $\bar{\mathbf{C}}$. The total number of linear equality constraints is
$m = p + q$, where $p$ is the number of diagonal elements and $q$ is the number
of off-diagonal entries forced to zero. Let the index pairs
$(i_1, j_1), (i_2, j_2), \ldots, (i_q, j_q)$ denote the explicit set of
non-edges in the graph, satisfying $(i_k, j_k) \notin E$ for all
$k \in \{1, \ldots, q\}$.

Let $\mathbf{E}_{ij}$ denote the symmetric elementary matrix with ones at indices
$(i,j)$ and $(j,i)$ and zeros elsewhere. The linear operator $\mathcal{A}$
evaluates the matrix entries corresponding to both the diagonal and non-edge
constraints:
\begin{equation}
 \mathcal{A}\mathbf{C} =
 \begin{pmatrix}
   \langle \mathbf{E}_{11}, \mathbf{C} \rangle_F \\ \vdots \\
   \langle \mathbf{E}_{pp}, \mathbf{C} \rangle_F \\
   \langle \mathbf{E}_{i_1j_1}, \mathbf{C} \rangle_F \\ \vdots \\
   \langle \mathbf{E}_{i_qj_q}, \mathbf{C} \rangle_F
 \end{pmatrix}
 =
 \begin{pmatrix} 1 \\ \vdots \\ 1 \\ 0 \\ \vdots \\ 0 \end{pmatrix}
 = \bm{\alpha}.
\end{equation}
The dual vector $\bm{y} \in \mathbb{R}^m$ can be partitioned into
$\bm{v} \in \mathbb{R}^p$ (the multipliers for the unit diagonal) and
$\bm{w} \in \mathbb{R}^q$ (the multipliers for the graph structural zeros).
Utilizing the adjoint definition, the operator maps these dual variables back
into matrix space:
\begin{equation}
\mathcal{A}^*(\bm{y}) =
\sum_{k=1}^p v_k \mathbf{E}_{kk} + \sum_{k=1}^q w_k \mathbf{E}_{i_k j_k}
= \mathrm{diag}(\bm{v}) + \mathbf{Z},
\label{eq:adjoint_specific}
\end{equation}
where $\mathbf{Z}$ is a symmetric sparse matrix containing the dual variables
$\bm{w}$ at the structural non-edge positions. The dual problem simplifies to:
\begin{maxi}|l|
{\bm{y}}{-\frac{1}{2}\|\mathrm{proj}_{\succeq 0}
(\bar{\mathbf{C}} - \mathcal{A}^*\bm{y})\|_F^2
+ \frac{1}{2}\|\bar{\mathbf{C}}\|_F^2
- \langle \bm{y}, \bm{\alpha} \rangle_{\mathbb{R}^m},}
{\label{eq:dual_our_problem}}{}
\end{maxi}
where $\mathrm{proj}_{\succeq 0}(\cdot)$ denotes the orthogonal projection onto
the positive semidefinite cone, computed via eigenvalue decomposition. We solve
this low-dimensional, unconstrained dual problem using the L-BFGS-B
algorithm~\cite{byrd1995limited} and reconstruct the unique primal optimal matrix
via $\mathbf{C}^* = \mathrm{proj}_{\succeq 0}(\bar{\mathbf{C}} - \mathcal{A}^*\bm{y}^*)$.

\subsection{Convex Optimization via CVXPY}
\label{subsec:cvxpy_solver}

The second approach utilizes CVXPY \cite{diamond2016cvxpy}, a high-level,
object-oriented Python modeling framework for convex optimization. CVXPY
automates problem canonicalization by transforming the abstract formulation
\eqref{eq:optimization_problem} into a standard conic form.

In our experiments, we employ CVXOPT \cite{andersen2011interior} as the backend
solver, which executes a primal-dual interior-point method. At each iteration
it solves a Newton system for a search direction along the central path of a
logarithmic-barrier reformulation of \eqref{eq:optimization_problem}. 
We refer to \cite{andersen2011interior} for the full algorithmic details; the
key point for our purposes is that each iteration requires forming and 
factorizing a dense $m \times m$ Kronecker-structured system, which is the 
source of the memory bottleneck discussed in Section~\ref{subsec:benchmarks}.

\subsection{Quadratic Semidefinite Programming via QSDPNAL}
\label{subsec:qsdpnal_solver}

The third approach implements QSDPNAL \cite{li2018qsdpnal}, a specialized MATLAB
package designed for high-dimensional quadratic semidefinite programming (QSDP).

\subsubsection{Problem Configuration Without Supplementary Constraints}
Without additional mean restrictions, the objective maps directly to the standard
framework handled by the \texttt{LSnal} routine
\begin{mini}|l|
{\mathbf{X}}{\frac{1}{2}\|\mathcal{H}(\mathbf{X})\|_F^2
+ \langle \mathbf{C}_0, \mathbf{X} \rangle}
{\label{eq:qsdp}}{}
\addConstraint{\mathcal{A}(\mathbf{X})}{= \bm{\alpha}}
\addConstraint{\mathbf{X}}{\succeq 0,}
\end{mini}
where $\mathcal{H}$ is set to the identity operator $\mathrm{Id}$, the constant
matrix is defined as $\mathbf{C}_0 = -\bar{\mathbf{C}}$, and $\mathbf{X}$ is
identified with the optimization variable $\mathbf{C}$.

The solver uses a two-level augmented Lagrangian method. The outer level
targets the linear equality constraints by solving a subproblem of the form:
\begin{equation}
\min_{\mathbf{X} \succeq 0} \;
\frac{1}{2}\|\mathcal{H}(\mathbf{X})\|_F^2
+ \langle \mathbf{C}_0 - \mathcal{A}^*(\bm{y}^k), \mathbf{X} \rangle
+ \frac{\sigma_k}{2}\|\mathcal{A}(\mathbf{X}) - \bm{\alpha}\|^2,
\label{eq:alm_subproblem}
\end{equation}
where $\bm{y}^k$ is the current estimate of the multiplier vector, and
$\sigma_k > 0$ is a penalty parameter dynamically adjusted across iterations.
The inner level handles the non-smooth subproblem utilizing a semi-smooth Newton~\cite{zhao2010newton} algorithm.

\subsubsection{Configuration with Additional Linear Inequality Constraints}
When an explicit mean constraint is added to bound or shift the generated entry
distributions, the problem is handled by the \texttt{LSnalI} function, which
expands \eqref{eq:qsdp} to incorporate linear inequality structures
\begin{mini}|l|
{\mathbf{X}}{\frac{1}{2}\|\mathcal{H}(\mathbf{X})\|_F^2
+ \langle \mathbf{C}_0, \mathbf{X} \rangle}
{\label{eq:qsdp_ineq}}{}
\addConstraint{\mathcal{A}(\mathbf{X})}{= \bm{\alpha}}
\addConstraint{\mathcal{B}(\mathbf{X})}{\geq \bm{\beta}}
\addConstraint{\mathbf{X}}{\succeq 0,}
\end{mini}
where $\mathcal{A}$ and $\bm{\alpha}$ are unchanged from \eqref{eq:qsdp}, and
the additional linear operator $\mathcal{B}$
and vector $\bm{\beta}$ encode supplementary linear
inequality constraints on $\mathbf{X}$.

\subsection{Numerical Post-Processing}
\label{subsec:postprocessing}

Due to finite numerical machine precision, solvers can output a reconstructed
matrix $\tilde{\mathbf{C}}$ whose minimum eigenvalue is slightly negative
(e.g., $\approx -10^{-9}$). To ensure that the output is strictly valid, we
introduce a uniform post-processing routine (for matrices with minimum eigenvalue which are slightly negative)
\begin{equation}
    \epsilon = \max\left(0, -\lambda_{\min}(\tilde{\mathbf{C}})\right).
\end{equation}
We then construct the diagonally shifted matrix
$\tilde{\mathbf{C}}_\epsilon = \tilde{\mathbf{C}} + \epsilon \mathbf{I}$, and
scale it to ensure a unit diagonal:
\begin{equation}
    \mathbf{C} = \frac{1}{1+\epsilon}\,\tilde{\mathbf{C}}_\epsilon.
    \label{eq:renorm}
\end{equation}
This operation yields a mathematically rigorous correlation matrix
$\mathbf{C} \in \mathcal{C}(G)$. In practice, $\epsilon$ remains bounded near
the machine precision limit, leaving the final projection error negligible.

\subsection{Comparison of the Three Methods}

Each of the three methods has distinct characteristics. The \textbf{Dual approach} is theoretically elegant and computationally efficient. It transforms the problem into an unconstrained optimization in the dual space, which can be solved using efficient quasi-Newton methods. \textbf{CVXPY} offers the greatest ease of use and flexibility. It is ideal for rapid prototyping and for incorporating additional constraints. However, it may not be the most efficient choice for very large-scale problems. \textbf{QSDPNAL} is specifically designed for quadratic PSD problems and can handle problems that are intractable for general-purpose solvers. However, it requires MATLAB and is less flexible in terms of adding arbitrary constraints compared to CVXPY.

In terms of memory usage, the limitation we observed in practice is specific to
CVXPY's interior-point backend (CVXOPT). Indeed, it explicitly forms and factorizes the dense Kronecker-structured KKT
system arising at each interior-point iteration (see
Section~\ref{subsec:cvxpy_solver}), its memory footprint scales poorly. This is
in contrast to the dual approach, which only stores a vector of size
$m = p+q$, and QSDPNAL, whose augmented Lagrangian and semismooth Newton scheme
avoid ever forming such a dense system explicitly.

\section{Numerical Experiments (Baseline)}
\label{sec:baseline_experiments}
The complete implementation of our algorithm is publicly accessible. The source codes are available at \url{https://gricad-gitlab.univ-grenoble-alpes.fr/fakhara/Graph-Based_Correlation_Matrix_Generation}.

\subsection{The Elliptope and Non-Uniformity}
\label{subsec:elliptope}

Several classical methods for generating correlation matrices are designed to
produce samples that are \emph{uniform} over the space of correlation matrices
$\mathcal{C}$. 

For instance, Joe~\cite{joe2006generating} characterizes the uniform
distribution over the elliptope $\mathcal{C}$ using partial correlations, and Lewandowski
et al.\ \cite{lewandowski2009generating} propose the vines and onion methods,
which also target uniformity. Designing an uniform sampling over the subset $\mathcal{C}(G)$ associated with a given graph
$G$ has only been achieved in the chordal graph setting, via a
Metropolis--Hastings-based approach~\cite{cordoba2020generating} building on
the Cholesky-based construction discussed in
Section~\ref{sec:state_of_the_art}.

Our proposed method does \emph{not} guarantee uniformity over $\mathcal{C}(G)$.
Rather, it produces a deterministic solution---the projection of $\bar{\mathbf{C}}$
onto $\mathcal{C}(G)$ in the Frobenius sense. We illustrate this non-uniformity
concretely in the simplest non-trivial case, $p = 3$ with the zero constraint
$c_{23} = 0$, i.e., the graph $G$ consists of edges $(1,2)$ and $(1,3)$ only.

To understand the geometric characteristics of the feasible space $\mathcal{C}(G)$,
we consider the complete graph with $p = 3$ nodes. A $3 \times 3$ correlation
matrix is defined by three off-diagonal scalar variables,
$x = c_{12}$, $y = c_{13}$, and $z = c_{23}$. The positive semidefiniteness
condition $\mathbf{C} \succeq 0$ implies the determinant must be non-negative:
\begin{equation}
    \det(\mathbf{C}) = 1 - x^2 - y^2 - z^2 + 2xyz \geq 0.
    \label{eq:elliptope_equation}
\end{equation}
The 3D volume bounded by \eqref{eq:elliptope_equation} is known as the
\emph{elliptope} $\mathcal{E}_3$.

A valid correlation matrix under the constraint $c_{23} = 0$ takes the form
\[
\mathbf{C} =
\begin{pmatrix}
1 & x' & y' \\
x' & 1 & 0 \\
y' & 0 & 1
\end{pmatrix},
\]
which is PSD if and only if $(x')^2 + (y')^2 \leq 1$, so the feasible set
$\mathcal{C}(G)$ is parameterized by the unit disk
$\mathcal{D} = \{(x', y') : (x')^2 + (y')^2 \leq 1\}$.

\subsubsection[Closed Form of the Projection in the p=3 Case]{Closed Form of the Projection in the $p=3$ Case}

Expanding the Frobenius objective $\frac{1}{2}\|\mathbf{C} - \bar{\mathbf{C}}\|_F^2$
entry by entry, and noting that the diagonal entries are fixed ($c_{ii} = 1$) and
$c_{23}$ is constrained to zero, only the free entries $c_{12} = x'$ and
$c_{13} = y'$ appear in the objective.
Specifically, problem~\eqref{eq:optimization_problem} reduces to:
\begin{mini}
{x',\, y'}{(x - x')^2 + (y - y')^2 + z^2}
{}{}
\addConstraint{(x')^2 + (y')^2 \leq 1,}
\label{eq:reduced_p3}
\end{mini}
where $z^2$ is a constant term arising from fixing $c_{23} = 0$. The closed-form
solution is:
\begin{equation}
    (\hat{c}_{12},\, \hat{c}_{13}) \;=\;
    \begin{cases}
        (x,\; y)
            & \text{if } x^2 + y^2 \leq 1, \\[6pt]
        \displaystyle\frac{(x,\; y)}{\sqrt{x^2 + y^2}}
            & \text{if } x^2 + y^2 > 1.
    \end{cases}
    \label{eq:proj_formula}
\end{equation}

\begin{figure}[!ht]
    \centering
    \begin{subfigure}[b]{0.42\textwidth}
        \centering
        \includegraphics[width=0.9\textwidth]{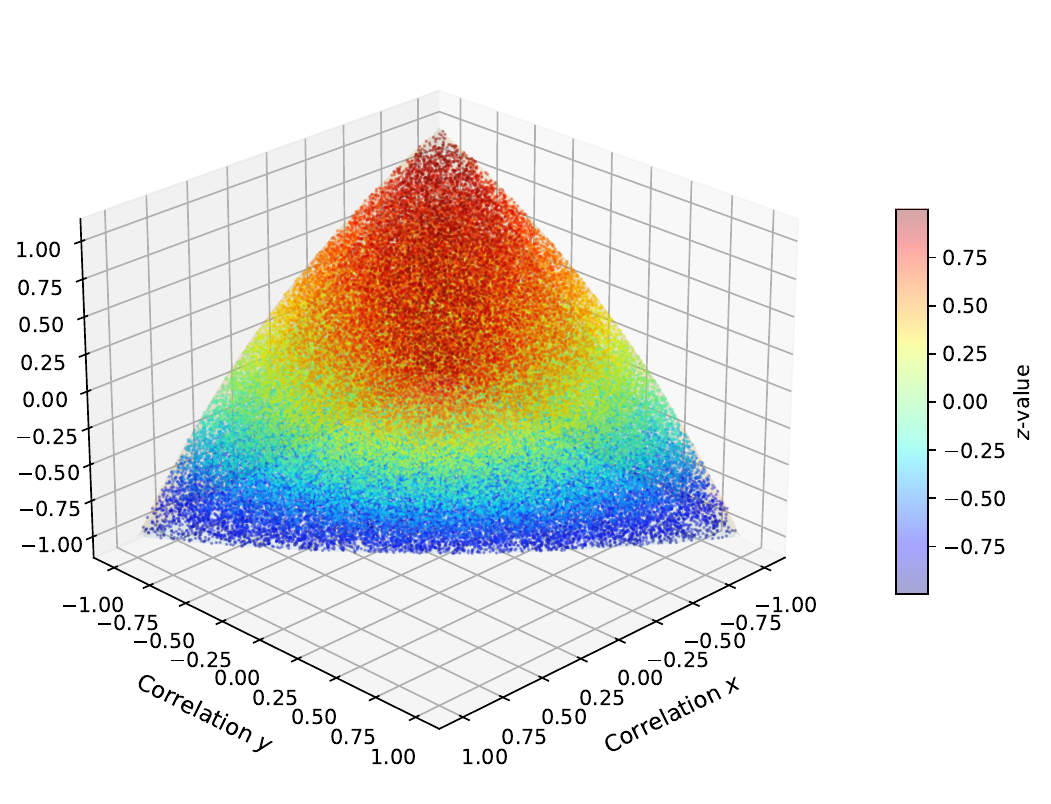}
        \caption{Three-dimensional visualization of the elliptope $\mathcal{E}_3$,
        the set of all $3 \times 3$ correlation matrices. The bounding surfaces
        represent the manifold where $\det(\mathbf{C}) = 0$. Points inside the
        volume (colored by $z$-value) are sampled uniformly from the interior via
        rejection sampling.}
        \label{fig:elliptope3d}
    \end{subfigure}
    \hspace{1cm}
    \begin{subfigure}[b]{0.42\textwidth}
        \centering
        \includegraphics[width=0.9\textwidth]{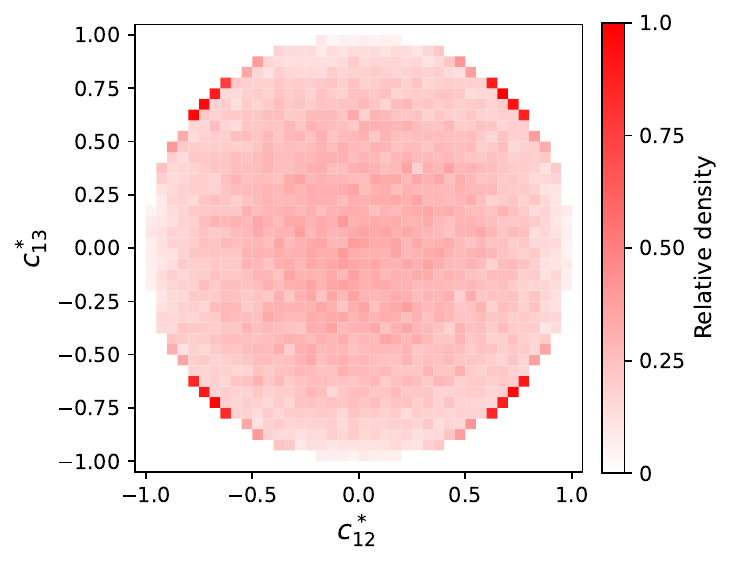}
        \caption{Density of the projected entries $(\hat c_{12}, \hat c_{13})$ over
        $N=100{,}000$ independent uniform draws of $\bar{\mathbf{C}}$ from
        $\mathcal{E}_3$, for the graph $G$ with edges $(1,2)$ and $(1,3)$ only
        (so $\hat c_{23} = 0$).}
        \label{fig:projection2d}
    \end{subfigure}
    \caption{Geometric illustration of the $p=3$ case. \emph{(a)}: the 3D
    elliptope $\mathcal{E}_3$. \emph{(b)}: the non-uniform density of projected
    solutions $(\hat c_{12}, \hat c_{13})$ over the unit disk $\mathcal{D}$, proving
    that the projection map does not preserve uniformity.}
    \label{fig:elliptope}
\end{figure}

To generate Figure~\ref{fig:elliptope}, we sample entries $x, y, z$ uniformly 
from $[-1, 1]$ and filter them via rejection sampling to satisfy the PSD 
condition ($\det(\mathbf{C}) > 0$), yielding a uniform distribution within 
the elliptope. Classical approaches such as the vines and onion procedures 
\cite{lewandowski2009generating} provide alternative mechanisms for generating 
samples uniformly over the elliptope without rejection sampling. We then project 
these points onto the subspace defined by the zero-constraint $c_{23} = 0$ using 
the closed-form formula \eqref{eq:proj_formula}.

\subsubsection{Non-uniformity of the projection}

For the $p=3$ case with the graph constraint $c_{23} = 0$, the constrained 
feasible set is
\begin{equation}
\mathcal{C}(G) = \left\{ \mathbf{C} = \begin{pmatrix} 1 & c_{12} & c_{13} \\ 
c_{12} & 1 & 0 \\ c_{13} & 0 & 1 \end{pmatrix} : \mathbf{C} \succeq 0 \right\},
\label{eq:elliptope_constrained}
\end{equation}
which is parameterized by the unit disk $\mathcal{D} = \{(c_{12}, c_{13}) : 
c_{12}^2 + c_{13}^2 \leq 1\}$.

The key observation is that \emph{uniform sampling from the full elliptope $\mathcal{E}_3$ 
does not induce a uniform distribution on the constrained subspace $\mathcal{C}(G)$}. 
If we sample $(x, y, z)$ uniformly from the 3D elliptope 
and then project onto the subspace $z = 0$ (or equivalently, apply the projection 
formula $\eqref{eq:proj_formula}$), the resulting distribution of $(c_{12}, c_{13})$ 
exhibits marked non-uniformity. 

The heatmap in Figure~\ref{fig:elliptope}(b) provides direct empirical evidence. Over $N=100{,}000$ uniform samples from $\mathcal{E}_3$, the projected entries 
$(c_{12}, c_{13})$ concentrate heavily at the boundary of the disk (dark red regions), 
particularly at the corners where $|c_{12}| \approx |c_{13}|$. The interior 
of the disk (light pink regions) is much less densely populated. This non-uniformity 
arises because the elliptope geometry in 3D compresses more measure toward the 
$(c_{12}, c_{13})$-boundary than toward the center. The projection operation 
further concentrates this mass at the boundary by mapping all exterior points 
radially inward.

Formally, the marginal density of $(c_{12}, c_{13})$ under the 
uniform distribution over $\mathcal{E}_3$ is obtained by 
integrating out $z$:
\begin{equation}
p(c_{12}, c_{13}) = \int_{z_{\min}(c_{12}, c_{13})}^{z_{\max}(c_{12}, c_{13})} 
f_{\mathcal{E}_3}(c_{12}, c_{13}, z) \, dz,
\end{equation}
where $f_{\mathcal{E}_3}$ denotes the uniform density over 
$\mathcal{E}_3$ and $z_{\min}, z_{\max}$ are the extremes of 
the feasible interval for $z$ given $(c_{12}, c_{13})$, 
determined by the PSD constraint 
$1 - c_{12}^2 - c_{13}^2 - z^2 + 2c_{12}c_{13}z \geq 0$. Two competing effects explain the non-uniform density visible in 
Figure~\ref{fig:elliptope}(b). First, at the \emph{boundary} of the disk 
$\partial\mathcal{D}$ (where $c_{12}^2 + c_{13}^2 = 1$), the feasible interval 
$[z_{\min}, z_{\max}]$ collapses to a single point, yet the projection 
\eqref{eq:proj_formula} maps \emph{all} exterior points 
$(x,y)$ with $x^2 + y^2 > 1$ radially onto $\partial\mathcal{D}$, 
concentrating a large mass there. Second, at the \emph{center} of the disk 
(where $c_{12}^2 + c_{13}^2 \approx 0$), points already lying inside the 
disk are kept fixed by \eqref{eq:proj_formula}, and the interval 
$[z_{\min}, z_{\max}]$ is at its widest, meaning more 
triples $(x, y, z)$ from the full elliptope project to nearby interior points, 
also raising the density there. At intermediate radii the two effects are weakest, 
yielding a lower density. The result is a non-uniform distribution over $\mathcal{C}(G)$, 
confirming that the projection map does not preserve uniformity.

This non-uniformity is not a deficiency of our method per se, but a consequence
of its design objective. Our approach generates correlation matrices as close as
possible to a given $\bar{\mathbf{C}}$ while respecting the prescribed graph
structure, making it particularly suitable for simulation studies where the
synthetic matrix should resemble a given empirical one.

\subsection{Effect of Graph Structure}
\label{subsec:graph_structure}

The graph structure $G$ is one of the main parameters governing the generated
correlation matrices. We consider five families of random graphs, which differ
both in their local configuration and in their global connectivity patterns.
Figure~\ref{fig:graph_examples} illustrates one small representative instance of
each family.

\begin{figure}[!ht]
    \centering
    \begin{minipage}{0.18\textwidth}
        \centering
        \begin{tikzpicture}[nodes={circle, draw, fill=blue!20, inner sep=2pt}, scale=0.85]
            \node (1) at (0,1.5) {\small 1};
            \node (2) at (1,1) {\small 2};
            \node (3) at (0.8,-0.5) {\small 3};
            \node (4) at (-0.8,-0.5) {\small 4};
            \node (5) at (-1,1) {\small 5};
            \draw (1)--(2) (1)--(4) (2)--(3) (2)--(5) (3)--(5) (4)--(5);
        \end{tikzpicture}
        \subcaption{Erd\H{o}s--R\'{e}nyi}
        \label{subfig:erdos}
    \end{minipage}
    \hfill
    \begin{minipage}{0.18\textwidth}
        \centering
        \begin{tikzpicture}[nodes={circle, draw, fill=red!20, inner sep=2pt}, scale=0.85]
            \foreach \i in {1,...,6}{
                \node (\i) at ({60*\i}:1.2cm) {\small \i};
            }
            \draw (1)--(2)--(3)--(4)--(5)--(6)--(1);
            \draw (1)--(4) (2)--(5);
        \end{tikzpicture}
        \subcaption{Watts--Strogatz}
        \label{subfig:watts}
    \end{minipage}
    \hfill
    \begin{minipage}{0.18\textwidth}
        \centering
        \begin{tikzpicture}[nodes={circle, draw, fill=green!20, inner sep=2pt}, scale=0.85]
            \node (H) at (0,0) {\small H};
            \foreach \i in {1,...,5}{
                \node (\i) at ({72*\i}:1.2cm) {\small \i};
                \draw (H)--(\i);
            }
            \draw (1)--(2);
        \end{tikzpicture}
        \subcaption{Barab\'{a}si--Albert}
        \label{subfig:barabasi}
    \end{minipage}
    \hfill
    \begin{minipage}{0.18\textwidth}
        \centering
        \begin{tikzpicture}[nodes={circle, draw, fill=orange!20, inner sep=2pt}, scale=0.85]
            \node (1) at (0,1) {\small 1};
            \node (2) at (1,0) {\small 2};
            \node (3) at (0,-1) {\small 3};
            \node (4) at (-1,0) {\small 4};
            \draw (1)--(2)--(3)--(4)--(1);
            \draw (1)--(3);
        \end{tikzpicture}
        \subcaption{Chordal}
        \label{subfig:chordal}
    \end{minipage}
    \hfill
    \begin{minipage}{0.18\textwidth}
        \centering
        \begin{tikzpicture}[nodes={circle, draw, fill=purple!20, inner sep=2pt}, scale=0.85]
            \node (1) at (0,1.2) {\small 1};
            \node (2) at (1,0.4) {\small 2};
            \node (3) at (0.7,-1) {\small 3};
            \node (4) at (-0.7,-1) {\small 4};
            \node (5) at (-1,0.4) {\small 5};
            \draw (1)--(2) (3)--(4) (4)--(5) (1)--(5);
            \draw[dashed] (2)--(4);
        \end{tikzpicture}
        \subcaption{SBM}
        \label{subfig:sbm}
    \end{minipage}
    \caption{Illustrative examples of the five graph families considered in our experiments. From left to right: Erd\H{o}s--R\'{e}nyi \cite{erdos1960evolution}, Watts--Strogatz \cite{watts1998collective}, Barab\'{a}si--Albert \cite{albert1999diameter}, Chordal \cite{vandenberghe2015chordal}, and Stochastic Block Model (SBM) \cite{abbe2018community}.}
    \label{fig:graph_examples}
\end{figure}

\paragraph{Erd\H{o}s--R\'{e}nyi graphs \cite{erdos1960evolution}}
In an Erd\H{o}s--R\'{e}nyi random graph $G(p, d)$, each of the $\binom{p}{2}$
possible edges between $p$ nodes is included independently with probability
$d \in [0,1]$, referred to as the \emph{edge density}. 
The degree distribution is binomial, and there are
no hubs or community structures. This makes it a natural baseline model for sparse
random zero patterns, and the density parameter $d$ directly controls the
proportion of non-zero entries in the generated correlation matrix. However, the resulting graph has no particular structure bias.

\paragraph{Watts--Strogatz graphs \cite{watts1998collective}}
The Watts--Strogatz model generates \emph{small-world} networks by starting from
a regular ring lattice and rewiring edges with a given probability. The resulting
graphs exhibit high clustering coefficients combined with short average path
lengths. In terms of the generated correlation matrices, this local structure
induces clusters of correlated variables separated by sparse inter-cluster
connections.

\paragraph{Barab\'{a}si--Albert graphs \cite{albert1999diameter}}
The Barab\'{a}si--Albert model generates \emph{scale-free} networks via a
preferential attachment mechanism. New nodes are more likely to connect to already
highly connected nodes (hubs, for example like node (H) in Figure~\ref{subfig:barabasi}). The resulting degree distribution follows a power
law. The presence of hubs means that a few variables are correlated with many
others, which tends to create denser rows and columns in the correlation matrix.

\paragraph{Stochastic Block Models \cite{abbe2018community}}
In a Stochastic Block Model (SBM), nodes are partitioned into blocks
(communities), with higher edge probability within blocks than between them. This
induces a block structure in the zero pattern of the correlation matrix, naturally
modeling scenarios where variables can be grouped into correlated clusters.

\paragraph{Chordal graphs \cite{vandenberghe2015chordal}}
A graph is chordal if every cycle of length four or more contains a chord (an
edge connecting two non-adjacent vertices in the cycle). Chordal graphs are of
particular importance in graphical models because they admit a \emph{perfect
elimination ordering} (PEO). In our study, chordal graphs are generated by
constructing random split graphs \cite{bender1985almost}, which form a well-known
subclass of chordal graphs. Among the ideas for generating chordal graphs randomly \cite{cseker2022generation}, we
use split graphs because they give us more direct control over the density of
the graph. Given a number of nodes $n$ and a target edge density
$d \in (0, 1)$, we calculate the exact target number of edges
$m = \lfloor d \frac{n(n-1)}{2} \rceil$. The graph vertices are partitioned into
a clique of size $k$ and an independent set of size $n-k$. A valid clique size
$k \in \{1, \dots, n\}$ satisfying the bounds
$\frac{k(k-1)}{2} \leq m \leq \frac{k(k-1)}{2} + k(n-k)$
is selected uniformly at random from the feasible set. After adding all internal
edges within the clique, we sample exactly $m - \frac{k(k-1)}{2}$ edges uniformly
at random from the available bipartite connections between the clique and the
independent set. By construction, the resulting graph is a split graph, ensuring
it is strictly chordal while precisely matching the prescribed edge density.
While many competing methods are restricted to chordal graphs, our
optimization-based approach handles all five families.

Figure~\ref{fig:fig3} displays the distribution of correlation values for various graph types at 50\% density, with each distribution averaged over 50 independent samples generated via CVXPY. As graph density increases, the choice of graph structure exerts no significant effect on the distribution of non-zero, non-diagonal entries, as all configurations converge toward the complete graph. Figure~\ref{fig:fig4} compares the computational execution times using CVXPY for a fixed dimension of $p=51$ across varying graph models and densities, with each data point averaged over 50 randomly generated graphs. While overall execution time decreases as density increases, empirical observations indicate that for a fixed density, computational performance varies across different graph types.

\begin{figure}[!ht]
\begin{multicols}{2}
    \includegraphics[width=0.9\linewidth]{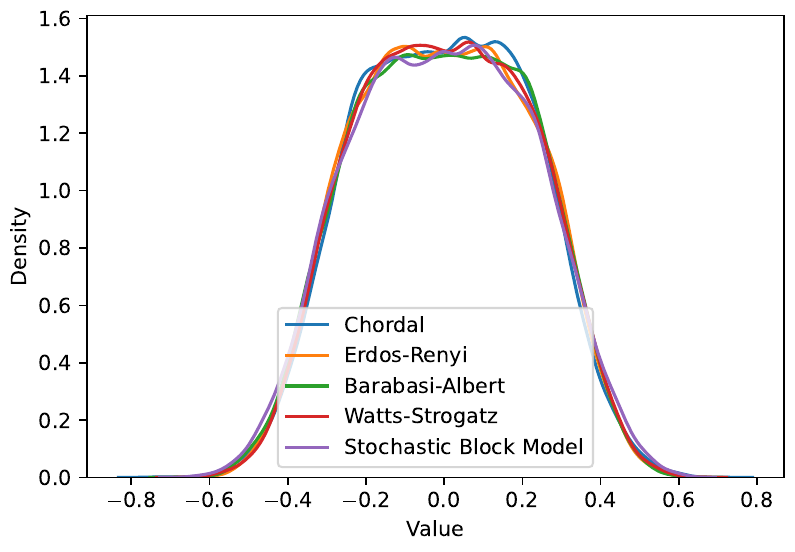}
\caption{Distribution of non-diagonal, non-zero correlation values for five graph
types at 50\% edge density, averaged over 50 i.i.d.\ randomly generated graphs
with $p = 51$ nodes, solved via CVXPY.}
\label{fig:fig3}
\columnbreak

\vspace*{-0.8cm}

    \includegraphics[width=0.9\linewidth]{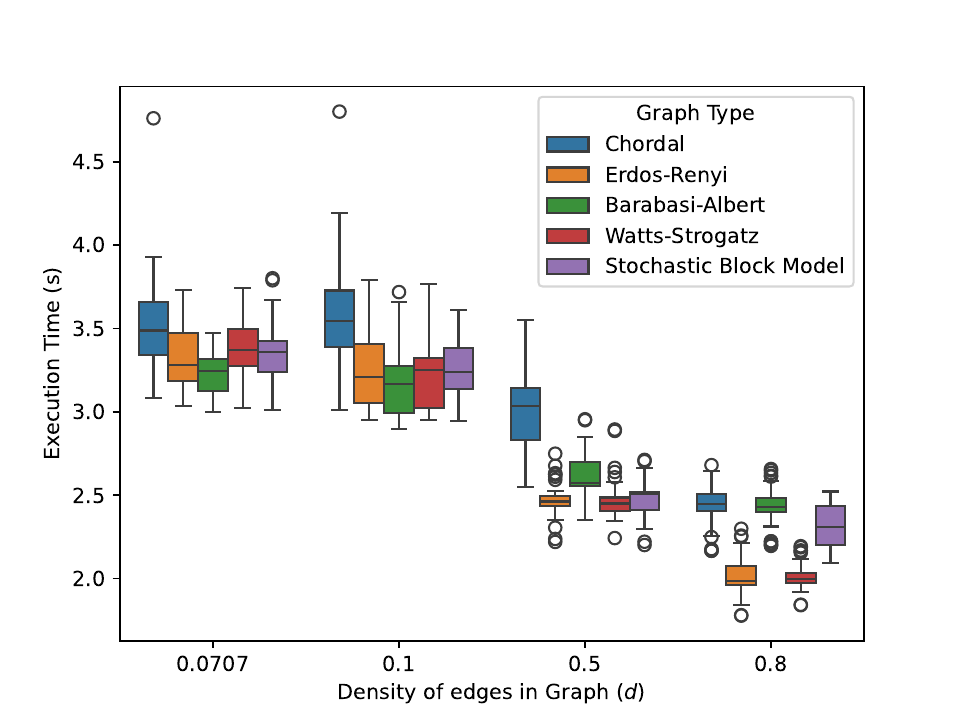}
    \caption{Comparison of execution times using CVXPY for a fixed dimension of
    51 across varying graph densities and models (averaged over 50 i.i.d.\ randomly generated graphs for each case.)}
    \label{fig:fig4}
  \end{multicols}  
\end{figure}

\subsection{Computational Cost and Scaling Benchmarks}
\label{subsec:benchmarks}

We evaluate the timing for dimensions $p \in \{51, 100, 1000\}$ on a random
Erd{\H{o}}s-R{\'e}nyi graph with edge density $d = 0.5$. All methods were run
with tolerance $10^{-6}$ and a maximum of $10{,}000$ iterations. The results are
summarized in Table~\ref{tab:computation_times}.

\begin{table}
    \centering
    \caption{Computation times (in seconds) 
    for the three methods
    across three problem dimensions $p$. All methods converged to the optimal
    solution. For $p = 1000$, CVXPY switched from CVXOPT to the SCS solver due
    to memory limitations.}
    \label{tab:computation_times}
    \begin{tabular}{lccc}
        \hline
        \textbf{Method} & $p = 51$ & $p = 100$ & $p = 1000$ \\
        \hline
        \textbf{Dual approach \cite{malick2004dual}} & 0.039 s & 0.090 s & 10.80 s \\
        \textbf{CVXPY (CVXOPT / SCS) \cite{diamond2016cvxpy}} & 2.76 s & 48.78 s & 9.49 h \\
        \textbf{QSDPNAL \cite{li2018qsdpnal}} & 0.50 s & 0.51 s & 48.39 s \\
        \hline
    \end{tabular}
\end{table}

The empirical benchmarks reveal clear architectural differences:
\begin{itemize}
    \item \textbf{CVXPY (CVXOPT):} Suffers from steep memory and computational
    scaling due to the construction and factorization of the massive
    Kronecker-structured KKT system described in
    Section~\ref{subsec:cvxpy_solver}. It becomes impractical for higher
    dimensions.
    \item \textbf{Dual Approach (L-BFGS-B):} Demonstrates robust scalability because it completely bypasses the need to
    store secondary barrier Hessians, operating directly on a dual variable space
    of size $m = p + q$.
    \item \textbf{QSDPNAL:} Outperforms CVXPY by multiple
    orders of magnitude as $p$ scales. By integrating a semi-smooth Newton method
    inside an augmented Lagrangian routine, it achieves remarkable speed and
    enables high-dimensional generation ($p = 1000$) in under 50 seconds.
\end{itemize}

All three methods achieve the same objective value across all dimensions (to within
numerical tolerance), confirming that they converge to the same unique solution of
problem~\eqref{eq:optimization_problem}. For moderate dimensions
($p \lesssim 100$), any of the three methods is applicable; for large-scale
problems ($p \gtrsim 500$), the dual approach or QSDPNAL should be preferred,
with CVXPY reserved for prototyping or when additional constraint flexibility is
required.

\subsection{Comparison with Baseline Methods}

Figure~\ref{fig:density_comparison} illustrates the empirical densities of
non-diagonal, non-zero entries produced by each method. All methods are evaluated
on 50 independently generated Erd\H{o}s-R\'enyi graphs with $p = 51$ nodes and
graph density $d = 0.5$. The diagonal dominance method concentrates entries near zero (see
\ref{app:bimodal} for a derivation), while partial orthogonalization produces
a wider distribution centered near zero, showing improvement over diagonal
dominance. Our method with $\bar{\mathbf{C}}$ which is realization of entry wise uniform distribution over the range of $[-1, 1 ]$ has a wider support, but still real world data motivating us with the additional mean constraint developed in the next section.

\begin{figure}[!ht]
    \centering
    \includegraphics[width=0.45\linewidth]{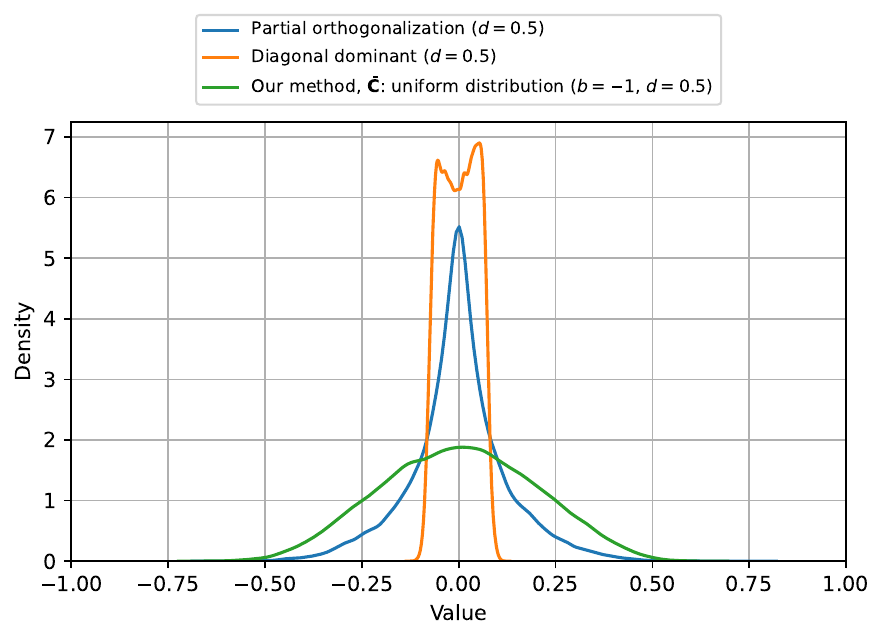}
\caption{Density of non-diagonal, non-zero entries in generated correlation
matrices for five methods (diagonal dominance, partial orthogonalization, our
method with uniform initialization, our method with empirical initialization, and
the empirical data itself), evaluated on 50 i.i.d.\ Erd\H{o}s--R\'enyi graphs
with $p = 51$ nodes and edge density $d = 0.5$.}
\label{fig:density_comparison}
\end{figure}

\section{Additional Constraint: Mean Threshold}
\label{sec:mean_constraint}

A significant limitation of existing correlation generation methods is their
tendency to produce entry distributions centered around zero. Empirical evidence
from diverse fields, such as functional brain connectivity (fMRI) in
neuroscience and asset returns in finance, frequently reveals a systematic
positive shift in the correlation structure. In this section we dive in more theoritical aspect of it and in next section we present numerical results.

To address this limitation, we extend our optimization framework
\eqref{eq:optimization_problem} by introducing a linear inequality constraint that
enforces a minimum threshold on the average value of the generated off-diagonal
entries.

\subsection{Incorporating the Mean Constraint}

To generate correlation matrices that can match the distributional characteristics observed in real data, we extend the basic optimization problem \eqref{eq:optimization_problem} by introducing a constraint on the mean of the non-diagonal entries. Specifically, for a given threshold parameter $t \geq -1$, we impose the constraint:
\begin{equation}
\frac{1}{2|E|} \sum_{\substack{i,j=1\\i \neq j\\(i,j) \in E}}^p c_{ij} \geq t,
\label{eq:mean_constraint}
\end{equation}
which ensures that the average correlation among variables connected in the graph is at 
least $t$. Here, $t \in [-1, 1]$ plays the role of a \emph{threshold} on the mean 
correlation, where larger values of $t$ enforce a stronger positive shift in the generated 
entries.

This constraint can be equivalently written in compact form as 
$\frac{1}{2|E|} \langle \mathbf{C} - \mathbf{I}, \mathbf{1}_{p \times p} \rangle \geq t$, 
where $\mathbf{1}_{p \times p}$ denotes the matrix of all ones. Setting $t = -1$ (or any 
value $t \leq -1$) is equivalent to imposing no constraint on the mean, as correlation 
coefficients are naturally bounded below by $-1$. The extended optimization problem becomes:
\begin{mini}[2]
{\mathbf{C}}{\frac{1}{2} \|\mathbf{C} - \bar{\mathbf{C}}\|_F^2}
{}{}
\addConstraint{\mathbf{C}}{\in \mathcal{C}(G)}
\addConstraint{\frac{1}{2|E|} \sum_{\substack{i,j=1\\i \neq j\\(i,j) \in E}}^p c_{ij}}{\geq t.}
\label{eq:optimization_problem_with_mean}
\end{mini}
The introduction of this linear inequality constraint preserves the convexity of the feasible set, as it defines a half-space in the space of symmetric matrices. Consequently, problem \eqref{eq:optimization_problem_with_mean} remains a convex quadratic program with a unique solution whenever the constraint set is non-empty. By varying the threshold parameter $t$, practitioners can control the shift in the distribution of correlation values, thereby generating synthetic data that matches specific empirical characteristics or studying the impact of different signal-to-noise ratios in simulation studies. Beyond controlling the mean, this framework can accommodate other constraints 
of practical interest, such as lower bounds on the minimum correlation (useful 
for controlling signal-to-noise ratios), upper bounds on the maximum correlation, 
or variance and kurtosis constraints on the off-diagonal entries, provided they 
maintain the convexity of the feasible region.

\subsection{Theoretical Bound on the Threshold Value}
\label{subsec:threshold_bounds}
Unlike problem~\eqref{eq:optimization_problem}, whose feasible set is always
non-empty (the identity matrix trivially satisfies $\mathcal{C}(G)$ for any
$G$), the mean-constrained problem~\eqref{eq:optimization_problem_with_mean}
need not be feasible. Indeed, for a given threshold $t$, the intersection of
$\mathcal{C}(G)$ with the half-space $\{\mathbf{C}:\text{mean}\geq t\}$ can be
empty, depending on both $t$ and the structure of $G$.

Before analyzing the structural limitations imposed by specific graph structures,
it is instructive to examine the simplest possible case, the \emph{complete graph}
$K_p$, where every pair of nodes is connected and no zero pattern is imposed.
A natural family of structured correlation matrices is the constant off-diagonal family
\begin{equation}
    \mathbf{C}_\tau = (1-\tau)\mathbf{I} + \tau \mathbf{1}\mathbf{1}^\top,
    \label{eq:equicorr}
\end{equation}
where $\tau \in \mathbb{R}$ and $\mathbf{1} \in \mathbb{R}^p$ is the all-ones
vector, so the mean off-diagonal entry equals $\tau$ exactly.

Since $\mathbf{C}_\tau$ is a rank-one perturbation
of the identity, its eigenvalues are $\lambda_1 = 1 + (p-1)\tau$ (eigenvector
$\mathbf{1}$) and $\lambda_2 = 1 - \tau$ (multiplicity $p-1$). The condition
$\mathbf{C}_\tau \succeq 0$ therefore requires
\begin{equation}
    -\frac{1}{p-1} \leq \tau \leq 1.
    \label{eq:equicorr_range}
\end{equation}
The upper bound $\tau \leq 1$ is dimension-independent, while the lower bound
$\tau \geq -1/(p-1)$ tightens as $p$ grows, making strongly negative global
correlations structurally impossible in high dimensions.

\subsubsection{Circle Graphs}
\label{subsubsec:threshold_bounds}

To formalize the limits imposed by the graph structure, consider a general
chordless cycle $C_k$ of length $k \geq 4$, with vertices $V = \{1, 2, \dots, k\}$
and edges $E = \{(1,2), (2,3), \dots, (k-1,k), (k,1)\}$. For any valid
correlation matrix $\mathbf{C} \in \mathcal{C}(C_k)$, the diagonal entries are
$c_{ii} = 1$, and non-edges must vanish ($c_{ij} = 0$ for all $(i,j) \notin E$).

A fundamental property of the PSD cone is that the Frobenius
inner product of any two PSD matrices is non-negative~\cite[Section 2.6.1]{boyd2004convex}. Therefore, for any PSD test
matrix $\mathbf{M} \succeq 0$, we must have
$\langle \mathbf{C}, \mathbf{M} \rangle = \operatorname{Tr}(\mathbf{C}\mathbf{M})
\geq 0$. Because $c_{ij} = 0$ for non-edges, the cross-terms vanish and the
inequality simplifies to
\begin{equation}
    \sum_{i=1}^{k} m_{ii}
    + 2 \sum_{(i,j) \in E} c_{ij}\, m_{ij} \geq 0.
    \label{eq:psd_inner_product}
\end{equation}

We construct $\mathbf{M}$ explicitly as a Gram matrix to guarantee
$\mathbf{M} \succeq 0$. Specifically, let $r \geq 1$ be an integer and let
$\bm{u}_1, \dots, \bm{u}_k \in \mathbb{R}^r$ be a collection of vectors, one
per vertex of the cycle. We define the $(i,j)$ entry of $\mathbf{M}$ by
\begin{equation}
    m_{ij} = \bm{u}_i^\top \bm{u}_j,
    \label{eq:gram_entry}
\end{equation}
so that $\mathbf{M}$ is the Gram matrix of $\{\bm{u}_i\}_{i=1}^k$, which is
PSD by construction. We further impose the normalization $\|\bm{u}_i\|^2 = 1$
for all $i$, so that $m_{ii} = \bm{u}_i^\top \bm{u}_i = 1$. The goal is to
choose the vectors $\bm{u}_1, \dots, \bm{u}_k$ such that all adjacent inner
products take a common negative value:
\begin{align}
    m_{i,i+1} &= \bm{u}_i^\top \bm{u}_{i+1} = -\alpha \quad \forall\, (i,i+1) \in E,\nonumber \\
    m_{k,1} &= \bm{u}_k^\top \bm{u}_1 = -\alpha, \label{eq:uniform_weight_b}
\end{align}
for some $\alpha > 0$. Under this uniform edge-weight construction, substituting
$m_{ii} = 1$ and $m_{ij} = -\alpha$ for all $(i,j) \in E$
into \eqref{eq:psd_inner_product} gives:
\begin{equation}
    k - 2\alpha \sum_{(i,j) \in E} c_{ij} \geq 0,
    \quad \text{i.e.,} \quad
    \frac{1}{k}\sum_{(i,j) \in E} c_{ij} \leq \frac{1}{2\alpha}.
    \label{eq:general_bound}
\end{equation}
The bound \eqref{eq:general_bound} becomes tighter as $\alpha$ increases.
The maximum achievable $\alpha$, subject to the existence of unit-norm vectors
$\bm{u}_1, \dots, \bm{u}_k \in \mathbb{R}^r$ (for some $r$) satisfying
\eqref{eq:uniform_weight_b}, depends on the parity of $k$.

\textbf{Case 1: Even chordless cycles} ($k \geq 4$, $k$ even).
For even $k$, the uniform negativity condition \eqref{eq:uniform_weight_b} can be
satisfied in $d = 1$ dimension. Define the scalar assignments
\begin{equation}
    u_j = (-1)^j \in \mathbb{R}, \quad j = 1, \dots, k,
    \label{eq:scalar_even}
\end{equation}
so that each $u_j \in \{-1, +1\}$ with $u_j^2 = 1$, satisfying the unit-norm
condition. Since $r=1$, the Gram matrix entry \eqref{eq:gram_entry} reduces to
the scalar product:
\begin{equation}
    m_{ij} = u_i \cdot u_j = (-1)^i \cdot (-1)^j = (-1)^{i+j}.
    \label{eq:gram_even}
\end{equation}
We verify \eqref{eq:uniform_weight_b} for all edges. For any interior adjacent
edge $(i, i+1)$:
\begin{equation*}
    m_{i,i+1} = (-1)^{i+(i+1)} = (-1)^{2i+1} = -1.
\end{equation*}
For the wrap-around edge $(k, 1)$:
\begin{equation*}
    m_{k,1} = (-1)^{k+1} = -1,
\end{equation*}
where the last equality holds because $k$ is even, making $k+1$ odd.
Thus condition \eqref{eq:uniform_weight_b} is satisfied with $\alpha = 1$, and
substituting into \eqref{eq:general_bound} yields:
\begin{equation}
    \frac{1}{k} \sum_{(i,j) \in E} c_{ij} \leq \frac{1}{2}.
    \label{eq:bound_even}
\end{equation}

\textbf{Case 2: Odd chordless cycles} ($k \geq 5$, $k$ odd).
For odd $k$, no scalar assignment ($r = 1$) can satisfy the uniform negativity
condition \eqref{eq:uniform_weight_b}. We therefore lift to
$r = 2$ dimensions. Define the unit vectors
\begin{equation}
    \bm{u}_j =
    \begin{pmatrix} \cos(j\theta) \\ \sin(j\theta) \end{pmatrix}
    \in \mathbb{R}^2,
    \qquad \theta = \frac{k-1}{k}\,\pi,
    \label{eq:trig_vectors}
\end{equation}
so that $\|\bm{u}_j\|^2 = \cos^2(j\theta) + \sin^2(j\theta) = 1$. The Gram
matrix entry \eqref{eq:gram_entry} evaluates via the cosine substraction formula:
\begin{equation}
    m_{ij} = \bm{u}_i^\top \bm{u}_j
    = \cos\bigl((i - j)\theta\bigr).
    \label{eq:gram_odd}
\end{equation}
We verify \eqref{eq:uniform_weight_b} for all edges. For any interior adjacent
edge $(i, i+1)$:
\begin{equation*}
    m_{i,i+1} = \cos\bigl((i-(i+1))\theta\bigr)
    = \cos(-\theta) = -\cos\!\left(\frac{\pi}{k}\right).
\end{equation*}
For the wrap-around edge $(k, 1)$:
$$m_{k,1} = \cos\bigl((k-1)\theta\bigr) = \cos\left(\frac{(k-1)^2}{k}\pi\right) = \cos\left(\left(k - 2 + \frac{1}{k}\right)\pi\right).$$
Since $k$ is odd, $k - 2$ is also odd, so $\cos((k-2)\pi) = -1$.
Using the addition formula:
\begin{equation*}
    m_{k,1}
    = \cos\!\left((k-2)\pi + \frac{\pi}{k}\right)
    = -\cos\!\left(\frac{\pi}{k}\right),
\end{equation*}
where we used $\sin((k-2)\pi) = 0$. Thus condition \eqref{eq:uniform_weight_b}
is satisfied with $\alpha = \cos(\pi/k)$ for all edges, and substituting into
\eqref{eq:general_bound} yields:
\begin{equation}
    \frac{1}{k} \sum_{(i,j) \in E} c_{ij} \leq \frac{1}{2\cos(\pi/k)}.
    \label{eq:bound_odd}
\end{equation}

The two cases are unified by noting that as $k \to \infty$,
$\cos(\pi/k) \to 1$ and both bounds converge to $1/2$. For odd $k$, the bound
$1/(2\cos(\pi/k))$ is strictly greater than $1/2$ and starts at
$1/(2\cos(\pi/5)) \approx 0.618$ for $C_5$, decreasing monotonically to $0.5$.

In either case, these bounds show that the mean correlation along the edges
of a chordless cycle $C_k$ cannot be pushed arbitrarily close to $1$.

\subsubsection{Tridiagonal (Path) Graphs}
\label{subsubsec:tridiagonal_bound}

A complementary structural bound arises for the \emph{path graph} $P_k$, whose
associated correlation matrices are exactly the tridiagonal matrices
$\mathbf{C} \in \mathcal{C}(P_k)$: vertices $V = \{1, \dots, k\}$ and edges
$E = \{(1,2), (2,3), \dots, (k-1,k)\}$, so that
\begin{equation}
\mathbf{C} =
\begin{pmatrix}
1     & b_1    &        &        &  \\
b_1   & 1      & b_2    &        &  \\
      & b_2    & 1      & \ddots &  \\
      &        & \ddots & \ddots & b_{k-1} \\
      &        &        & b_{k-1} & 1
\end{pmatrix},
\label{eq:tridiag_matrix}
\end{equation}
with $b_i \coloneqq c_{i,i+1}$ for $i = 1, \dots, k-1$ denoting the off-diagonal
entries along the path, and all entries outside the tridiagonal band forced to
zero by the graph structure. Unlike the cycle case, which we treated via a
Gram-matrix PSD argument, the tridiagonal structure admits a sharper tool from
the theory of chain sequences and continued fractions
\cite{andjelic2011sufficient}.

\begin{definition}[Chain sequence~\cite{andjelic2011sufficient}]
A sequence $\{\alpha_i\}_{i=1}^{k-1}$ with $\alpha_i > 0$ is a (finite, positive)
\emph{chain sequence} if there exists a parameter sequence $\{g_i\}_{i=0}^{k-1}$
with $0 \le g_0 < 1$ and $0 < g_i < 1$ for $i \ge 1$, such that
$\alpha_i = g_i(1 - g_{i-1})$ for all $i$.
\end{definition}

The positive definiteness of a general real symmetric tridiagonal matrix
\begin{equation}
\mathbf{A}_k =
\begin{pmatrix}
a_1   & b_1    &        &        &  \\
b_1   & a_2    & b_2    &       &  \\
      & b_2    & a_3    & \ddots &  \\
    &       & \ddots & \ddots & b_{k-1} \\
      &        &        & b_{k-1} & a_k
\end{pmatrix}, \qquad a_i > 0,
\label{eq:tridiag_general}
\end{equation}
is fully characterized in terms of chain sequences by the following classical
result.

\begin{theorem}[Wall--Wetzel~\cite{andjelic2011sufficient}]
\label{thm:wall_wetzel}
The real symmetric tridiagonal matrix $\mathbf{A}_k$ in~\eqref{eq:tridiag_general},
with diagonal entries $a_1, \dots, a_k > 0$, is positive definite if and only if
$\left\{ \dfrac{b_i^2}{a_i a_{i+1}} \right\}_{i=1}^{k-1}$ is a chain sequence.
\end{theorem}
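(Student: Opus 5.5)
The plan is to reduce positive definiteness to the signs of the leading principal minors (Sylvester's criterion) and to translate the three-term recurrence for those minors into the chain-sequence condition. Write $D_0 = 1$ and $D_j = \det \mathbf{A}_j$ for the leading $j\times j$ principal submatrix. Expanding along the last row gives the recurrence
\begin{equation*}
D_j = a_j D_{j-1} - b_{j-1}^2 D_{j-2}, \qquad j \ge 2,
\end{equation*}
with $D_1 = a_1$. By Sylvester's criterion, $\mathbf{A}_k \succ 0$ if and only if $D_j > 0$ for all $j = 1,\dots,k$.

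For the forward direction, assume $\mathbf{A}_k \succ 0$, so every $D_j>0$. The natural candidate for the parameter sequence is built from the pivots $\pi_j = D_j/D_{j-1}$, which are the diagonal entries of the $LDL^\top$ factorization. Dividing the recurrence by $a_j D_{j-1}$ gives
\begin{equation*}
\frac{\pi_j}{a_j} = 1 - \frac{b_{j-1}^2}{a_{j-1}a_j}\cdot\frac{a_{j-1}}{\pi_{j-1}}.
\end{equation*}
This suggests setting $g_0 = 0$ and $1 - g_j = \pi_{j+1}/a_{j+1}$ for $j\ge 1$. Then $\alpha_j := b_j^2/(a_j a_{j+1}) = g_j(1-g_{j-1})$: at $j=1$ this is immediate because $\pi_1 = a_1$, and for later $j$ it follows by rearranging the identity above. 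Positivity of the pivots gives $g_j<1$. Nonnegativity $g_j \ge 0$ holds because $\pi_{j+1} = a_{j+1} - b_j^2/\pi_j \le a_{j+1}$. Strict positivity $g_j>0$ requires $b_j \neq 0$.

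The main obstacle is that the definition requires $\alpha_i>0$, which excludes $b_i = 0$. I would handle this either by restricting the statement to $b_i \neq 0$, as is implicit in the definition, or by noting that a zero $b_i$ decouples $\mathbf{A}_k$ into independent blocks that can be treated separately.

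For the converse, take a chain sequence with parameters $g_i$. Define $\pi_{j+1} := a_{j+1}(1-g_j)>0$ for $j\ge 0$, so that $\pi_1 = a_1(1-g_0) \le a_1$. We cannot simply set $D_j = \prod_{i\le j} \pi_i$, since a general $g_0 \in [0,1)$ need not reproduce the pivots exactly. Instead I will use the monotonicity of chain sequences: if $\{\alpha_i\}$ is a chain sequence with parameters $g_i$, then the same $\alpha_i$ admit a parameter sequence starting at $g_0' = 0$, and the resulting parameters satisfy $g_j' \le g_j < 1$. This follows by induction from $g_j = \alpha_j/(1-g_{j-1})$ being increasing in $g_{j-1}$.

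With $g_0'=0$, the computation from the forward direction runs in reverse. The quantities $\pi_j = a_j(1-g_{j-1}')$ satisfy the pivot recurrence $\pi_j = a_j - b_{j-1}^2/\pi_{j-1}$ and are all positive. Hence $D_j = \prod_{i\le j}\pi_i > 0$ for every $j$, and Sylvester's criterion gives $\mathbf{A}_k \succ 0$.
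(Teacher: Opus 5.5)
Your argument is correct. It is essentially the classical proof of the Wall--Wetzel theorem. The paper gives no proof of its own; it states the result and cites \cite{andjelic2011sufficient}, so the only comparison is with the standard argument, which yours reproduces.

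Your route makes two things explicit that the citation hides. First, the chain-sequence parameters are normalized $LDL^\top$ pivots, $g_j = 1 - D_{j+1}/(a_{j+1}D_j)$. Second, the converse hinges on passing to the minimal parameter sequence with $g_0=0$. Your induction $g_j' = \alpha_j/(1-g_{j-1}')\le \alpha_j/(1-g_{j-1}) = g_j<1$ handles this cleanly. The check that $\prod_{i\le j}\pi_i$ obeys the same three-term recurrence as $\det\mathbf{A}_j$, with the same initial data, then closes the argument via Sylvester's criterion.

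On the edge case you flagged, you are right, and the issue is stronger than a technicality. Under the paper's definition, which forces $\alpha_i>0$, the ``only if'' direction is false whenever some $b_i=0$. For example, $\mathbf{A}_2=\mathbf{I}$ is positive definite, but $\{0\}$ is not a chain sequence.

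Your second remedy, splitting $\mathbf{A}_k$ into decoupled blocks, does not rescue the statement as written, because the full sequence $\{b_i^2/(a_ia_{i+1})\}$ still contains a zero. The clean fix is to read ``chain sequence'' in the nonnegative sense: $\alpha_i\ge 0$ and $0\le g_i<1$ for all $i$. Your proof then goes through verbatim with no restriction on the $b_i$. In the forward direction you already showed $0\le g_j<1$. In the converse, the minimal-parameter induction only uses $\alpha_j\ge 0$, which gives $0\le g_j'\le g_j<1$.

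For the paper's actual use of the theorem ($a_i\equiv 1$ and constant $b$), the case $b=0$ is trivial, so nothing downstream is affected.
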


For our tridiagonal correlation matrix~\eqref{eq:tridiag_matrix}, $a_i \equiv 1$,
so Theorem~\ref{thm:wall_wetzel} reduces to $\mathbf{C} \succ 0$ if and only if
$\{b_i^2\}_{i=1}^{k-1}$ is a chain sequence. The following result gives the
exact threshold for the largest \emph{constant} chain sequence.

\begin{theorem}[Ismail and Li~\cite{ismail1992bound}]
\label{thm:constant_chain}
The constant sequence $\{\alpha\}_{i=1}^{k-1}$ is a chain sequence if and only if
\begin{equation}
    0 < \alpha < \frac{1}{4\cos^2\left(\dfrac{\pi}{k+1}\right)}.
    \label{eq:chain_threshold}
\end{equation}
\end{theorem}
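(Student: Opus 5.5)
The plan is to reduce the statement to the spectrum of one explicit tridiagonal Toeplitz matrix, using Theorem~\ref{thm:wall_wetzel} as a bridge. Fix $\alpha>0$ and set $b=\sqrt{\alpha}$. Consider the $k\times k$ symmetric tridiagonal matrix $\mathbf{T}_k(b)$ with unit diagonal and all off-diagonal entries equal to $b$. This is $\mathbf{A}_k$ in~\eqref{eq:tridiag_general} with $a_i\equiv 1$ and $b_i\equiv b$. Then $b_i^2/(a_ia_{i+1})=\alpha$ for every $i$, so Theorem~\ref{thm:wall_wetzel} says that the constant sequence $\{\alpha\}_{i=1}^{k-1}$ is a chain sequence if and only if $\mathbf{T}_k(b)\succ 0$. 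The whole statement therefore reduces to deciding when $\mathbf{T}_k(b)$ is positive definite.

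For this, write $\mathbf{T}_k(b)=\mathbf{I}+b\,\mathbf{J}_k$, where $\mathbf{J}_k$ is the adjacency matrix of the path $P_k$. The spectrum of $\mathbf{J}_k$ is classical, and I would verify it directly. For $j=1,\dots,k$, take the vector $\bm{s}^{(j)}$ with entries $s^{(j)}_m=\sin\!\bigl(mj\pi/(k+1)\bigr)$, $m=1,\dots,k$. Set $s_0=s_{k+1}=0$, which holds automatically from the formula. The identity $\sin((m-1)x)+\sin((m+1)x)=2\cos x\,\sin(mx)$ then gives $\mathbf{J}_k\bm{s}^{(j)}=2\cos\!\bigl(j\pi/(k+1)\bigr)\bm{s}^{(j)}$, and the vanishing boundary values take care of the first and last rows.

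These $k$ eigenvalues are distinct, so they are the whole spectrum. Hence
\[
\lambda_{\min}\bigl(\mathbf{T}_k(b)\bigr)=1-2b\cos\!\Bigl(\frac{\pi}{k+1}\Bigr),
\]
because $b>0$ and the most negative cosine, attained at $j=k$, equals $-\cos(\pi/(k+1))$. Since $\cos(\pi/(k+1))>0$ for $k\ge 1$, positive definiteness is equivalent to $b<1/\bigl(2\cos(\pi/(k+1))\bigr)$. Squaring gives $\alpha<1/\bigl(4\cos^2(\pi/(k+1))\bigr)$. Together with the standing assumption $\alpha>0$ from the definition, this yields both directions of~\eqref{eq:chain_threshold} at once.

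The main obstacle is making sure the appeal to Theorem~\ref{thm:wall_wetzel} really matches the chain-sequence definition used here, in particular the convention $0\le g_0<1$ and the strict inequalities $0<g_i<1$. As a sanity check, or as a self-contained fallback, I would argue directly. Let $D_m$ be the leading principal minors of $\mathbf{T}_k(b)$, with $D_0=1$ and $D_1=1$. They satisfy $D_m=D_{m-1}-\alpha D_{m-2}$, and the choice $g_0=0$, $g_i=1-D_{i+1}/D_i$ solves $\alpha=g_i(1-g_{i-1})$ exactly when all $D_m>0$. Conversely, the recursion $g_i=\alpha/(1-g_{i-1})$ is increasing in $g_{i-1}$, so $g_0=0$ is the most permissive starting value. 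Consequently the sequence is a chain sequence if and only if all $D_1,\dots,D_k$ are positive, which by Sylvester's criterion is exactly $\mathbf{T}_k(b)\succ 0$. This recovers the same eigenvalue criterion and completes the plan.
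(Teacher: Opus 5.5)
Your proof is correct. It necessarily takes a different route from the paper, which gives no argument for this statement and simply cites Ismail and Li.

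In Section~\ref{subsubsec:tridiagonal_bound} the paper's logic runs from the chain-sequence threshold, through Theorem~\ref{thm:wall_wetzel}, to the positive-definiteness condition $|b|<1/(2\cos(\pi/(k+1)))$ for the constant tridiagonal matrix. You run the same bridge in the opposite direction. You compute the spectrum $\{1+2b\cos(j\pi/(k+1))\}_{j=1}^{k}$ of $\mathbf{I}+b\mathbf{J}_k$ explicitly, then pull positive definiteness back to the chain-sequence property.

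Fixing $b=\sqrt{\alpha}>0$ first is the right move. The paper's version of Theorem~\ref{thm:wall_wetzel} is literally false when some $b_i=0$ (e.g.\ for $\mathbf{I}$), because its chain sequences are required to be strictly positive. Your fallback, using $D_m=D_{m-1}-\alpha D_{m-2}$, $g_i=1-D_{i+1}/D_i$ and the monotonicity of $g\mapsto\alpha/(1-g)$, proves the needed special case of Wall--Wetzel from scratch. So your argument is fully self-contained.

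What your route buys is transparency. It shows that the paper's actual use of the statement, the bound \eqref{eq:tridiag_bound}, needs neither cited theorem: $\lambda_{\min}(\mathbf{I}+b\mathbf{J}_k)=1-2|b|\cos(\pi/(k+1))$ gives it in one line, since the spectrum of $\mathbf{J}_k$ is symmetric about $0$.

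What the chain-sequence formulation buys is generality. Theorem~\ref{thm:wall_wetzel} covers arbitrary $a_i,b_i$, where no closed-form spectrum exists. In the orthogonal-polynomial language of the cited source, your computation reads $D_m=\alpha^{m/2}U_m(1/(2\sqrt{\alpha}))$, with $U_m$ the Chebyshev polynomials of the second kind, whose largest zero is $\cos(\pi/(m+1))$.

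One cosmetic slip: $\cos(\pi/(k+1))>0$ needs $k\ge 2$, not $k\ge 1$. For $k=1$ the sequence is empty and the statement degenerates.
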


\textit{Application to the symmetric tridiagonal family.} Mirroring the
equicorrelation construction $\mathbf{C}_\alpha$ used for the complete graph
in~\eqref{eq:equicorr}, consider the symmetric tridiagonal family with
constant off-diagonal weight $b_i \equiv b$ for all $i = 1, \dots, k-1$ in
\eqref{eq:tridiag_matrix}. By Theorem~\ref{thm:wall_wetzel}, this matrix is
positive definite if and only if $\{b^2\}_{i=1}^{k-1}$ is a chain sequence,
which by Theorem~\ref{thm:constant_chain} holds if and only if
$|b| < 1/(2\cos(\pi/(k+1)))$. Since the mean off-diagonal entry of this family is
exactly $\frac{1}{k-1}\sum_{i=1}^{k-1} b_i = b$, this yields, within the
constant-entry ansatz, the bound
\begin{equation}
    \frac{1}{k-1} \sum_{i=1}^{k-1} b_i < \frac{1}{2\cos\left(\dfrac{\pi}{k+1}\right)}
    \label{eq:tridiag_bound}
\end{equation}
on the mean correlation along the edges of a path graph $P_k$, in direct
analogy with the cycle bounds~\eqref{eq:bound_even}--\eqref{eq:bound_odd}. As
$k \to \infty$, the right-hand side of \eqref{eq:tridiag_bound} converges to
$1/2$, matching the limiting value of the even-cycle bound
\eqref{eq:bound_even}, while for small $k$ the bound is strictly looser than
$1/2$ (e.g., for $k=3$, the right-hand side equals
$1/(2\cos(\pi/4)) = 1/\sqrt{2} \approx 0.707$).

These bounds are not strict, they are attained at the boundary of the feasible
region, which makes the relationship between graph structure and the
feasibility of the mean constraint~\eqref{eq:mean_constraint} transparent. The
feasible region of problem~\eqref{eq:optimization_problem_with_mean} is the
intersection of the PSD cone, the unit-diagonal affine subspace, the graph
zero constraints, and the half-space defined by the mean threshold $t$. Any
value of $t$ exceeding a structurally-imposed upper bound renders this
intersection empty, and any optimization algorithm attempting to enforce
\eqref{eq:mean_constraint} beyond such a bound is mathematically guaranteed to
fail.

It is tempting to conclude, on the basis of the cycle bounds
\eqref{eq:bound_even}--\eqref{eq:bound_odd}, that chordality alone is
sufficient to avoid such bottlenecks, since chordal graphs are by definition
free of chordless cycles of length $4$ or greater. However,
Section~\ref{subsubsec:tridiagonal_bound} shows this is not the case. Indeed, the
path graph $P_k$ is chordal (indeed acyclic), yet for $k \geq 4$ it still admits a strict upper bound $1/(2\cos(\pi/(k+1))) < 1$ on its mean correlation,
arising not from a chordless cycle but from the chain-sequence constraint of
Theorem~\ref{thm:wall_wetzel}. 

\section{Experiments and Real-World Applications}
\label{sec:applications}

In this section, we present numerical and real-world experiments that complement
the theoretical analysis of the preceding sections. We begin by empirically
mapping the feasibility region of problem~\eqref{eq:optimization_problem_with_mean}
as a joint function of the edge density $d$ and the mean threshold $t$, across
four graph families. We then compare our framework against the GAN-based approach
of \cite{Gautier2020} and validate it on two real-world datasets, resting-state
fMRI recordings from neuroscience and stock return correlations from financial
markets.

\subsection{Feasibility Regions and Bottlenecks}
\label{subsec:feasibility}

The feasibility heatmaps in Figure~\ref{fig:feasibility_heatmaps} display, for
each graph model, the proportion of instances for which a feasible solution to
problem~\eqref{eq:optimization_problem_with_mean} exists, as a joint function of
the edge density $d$ (x-axis) and the mean threshold $t$ (y-axis). Each pixel is
colored according to the proportion of trials at that $(d,t)$ configuration for
which a feasible solution was found. Fully red pixels correspond to proportion 1
(always feasible), while lighter pixels indicate partial or complete infeasibility.
A key observation is that feasibility is highly sensitive to both $d$ and $t$, and
the shape of the feasible regions is directly explained by the cycle structure of
the graph, as established in Section~\ref{subsec:threshold_bounds}.

The choice of initial matrix $\bar{\mathbf{C}}$ has no effect on the results
reported in this subsection. The feasibility of  problem~\eqref{eq:optimization_problem_with_mean} for a given graph $G$ and threshold $t$ depends only on whether the
intersection of $\mathcal{C}(G)$ with the half-space
$\{\mathbf{C} : \frac{1}{2|E|}\langle \mathbf{C}-\mathbf{I}, \mathbf{1}_{p\times p}\rangle \geq t\}$
is empty, a property of the constraint set alone, independent of $\bar{\mathbf{C}}$
or the objective function. Any other choice of initial matrix would therefore yield
identical feasibility heatmaps.

We now describe precisely how each trial underlying
Figure~\ref{fig:feasibility_heatmaps} is generated. For every trial, we draw a
mean threshold $t$ uniformly from $[-1, 1]$ and independently generate a
random graph $G$ from the corresponding family (Erd\H{o}s--R\'{e}nyi,
Watts--Strogatz, Barab\'{a}si--Albert, or chordal/split, as described in
Section~\ref{subsec:graph_structure}) using that family's own generative
parameters (e.g., the target edge probability for Erd\H{o}s--R\'{e}nyi, or
the rewiring probability for Watts--Strogatz). Because these generative
parameters do not always translate into the realized edge density in a
one-to-one way, we do not bin trials by the target parameter directly;
instead, once $G$ has been sampled
, we compute its \emph{realized} edge
density $d = |E| / \binom{p}{2}$ and use the resulting pair $(d, t)$ to
locate the trial on the heatmap. Each pixel then aggregates all trials whose
$(d, t)$ falls within its bin, and is colored by the proportion of those
trials for which problem~\eqref{eq:optimization_problem_with_mean} is
feasible under the sampled graph and threshold. 

The observed patterns are consistent with the theoretical bounds:
\begin{itemize}
    \item \textbf{Erd\H{o}s--R\'{e}nyi at intermediate densities ($d \approx 0.5$):}
    This regime is the most constrained. In an Erd\H{o}s--R\'{e}nyi graph
    $G(p, d)$ with $d \approx 0.5$, two competing effects create the most
    constrained regime. At low densities, few cycles exist at all. At high
    densities, most cycles acquire chords and cease to be chordless. At
    intermediate densities, the graph is dense enough to contain many cycles,
    yet sparse enough that most of these cycles still lack chords, resulting in
    a large number of chordless cycles of length $k \geq 4$ that each impose
    independent upper bounds on the mean correlation. By the
    theoretical bounds of Section~\ref{subsec:threshold_bounds}, each such
    chordless cycle $C_k$ imposes an independent upper bound on the mean
    correlation of its $k$ edges --- at most $1/2$ for even $k$ and at most
    $1/(2\cos(\pi/k))$ for odd $k$. Since the mean constraint
    \eqref{eq:mean_constraint} is a global average over all edges of the graph,
    it is simultaneously constrained by every chordless cycle present. More
    precisely, for each chordless cycle $C_k$ in the graph, the sub-matrix of
    $\mathbf{C}$ indexed by the vertices of $C_k$ must itself be a valid
    correlation matrix in $\mathcal{C}(C_k)$, and thus satisfies the
    cycle-specific upper bound. The global mean threshold $t$ is therefore
    bounded above by the most restrictive combination of all such local cycle
    constraints, rendering high values of $t$ structurally infeasible at
    $d \approx 0.5$.

    \item \textbf{Erd\H{o}s--R\'{e}nyi at low densities ($d \to 0$):} The graph
    is extremely sparse and consists predominantly of trees and isolated edges.
    Tree-structured subgraphs contain no cycles, and therefore impose no cyclic
    bottlenecks on the PSD cone. In the absence of chordless cycles, the
    zero-constraints on non-edges do not generate the cross-edge coupling that
    limits the mean correlation, and the feasibility region expands to accommodate
    larger values of $t$.

    \item \textbf{Erd\H{o}s--R\'{e}nyi at high densities ($d \to 1$):} As the
    graph approaches the complete graph, every cycle of length $k \geq 4$ is
    increasingly likely to possess at least one chord. By definition, a chord
    breaks the chordless cycle into smaller cycles, and as established in
    Section~\ref{subsec:threshold_bounds}, chordal graphs are entirely free of
    the cycle-induced upper bounds on mean correlation. Consequently, the cyclic
    constraints are progressively relaxed as $d \to 1$, and the feasibility
    region expands toward larger values of $t$.

\item \textbf{Chordal graphs:} By definition, chordal graphs contain no
    chordless cycles of length $4$ or greater, so none of the cycle-imposed
    upper bounds of Section~\ref{subsubsec:threshold_bounds} apply. However,
    as established in Section~\ref{subsubsec:tridiagonal_bound}, the absence
    of chordless cycles does not by itself guarantee that the mean
    correlation can approach $1$. Sparse, path-like chordal graphs remain
    subject to the chain-sequence bound of~\eqref{eq:tridiag_bound}. Our
    chordal graphs are generated as random split graphs
    (Section~\ref{subsec:graph_structure}), which contain a clique of size
    $k$ growing with the edge density $d$; this clique structure, rather
    than chordality per se, is what allows the feasibility region to widen
    substantially as $d$ increases, consistent with the equicorrelation
    bound~\eqref{eq:equicorr_range}, which shows that mean correlations close
    to $1$ are achievable precisely when many vertices form a densely
    interconnected (near-complete) subset.
\end{itemize}

It is important to note that the light blue regions appearing in certain heatmaps
do not indicate infeasibility of problem~\eqref{eq:optimization_problem_with_mean},
but rather the absence of graph samples at those $(d, t)$ configurations. This
occurs because not all graph families can be generated at arbitrary edge densities.
For instance, the Watts--Strogatz model constructs graphs by rewiring a regular
ring lattice, which constrains the achievable density range; similarly, the
Barab\'{a}si--Albert model grows graphs via preferential attachment, which
intrinsically limits the range of densities that can be produced for a given
number of nodes $p$. Consequently, for density values outside the generatable
range of a given model, no random graph instance is available and no feasibility
trial is recorded, resulting in the light blue coloring. This is a sampling
artifact of the graph generation procedure and carries no implication about the
theoretical feasibility of the optimization problem at those densities.

\begin{figure}[!ht]
    \hspace{-20pt}
    \begin{tabular}{cccc}
        \includegraphics[width=0.24\textwidth]{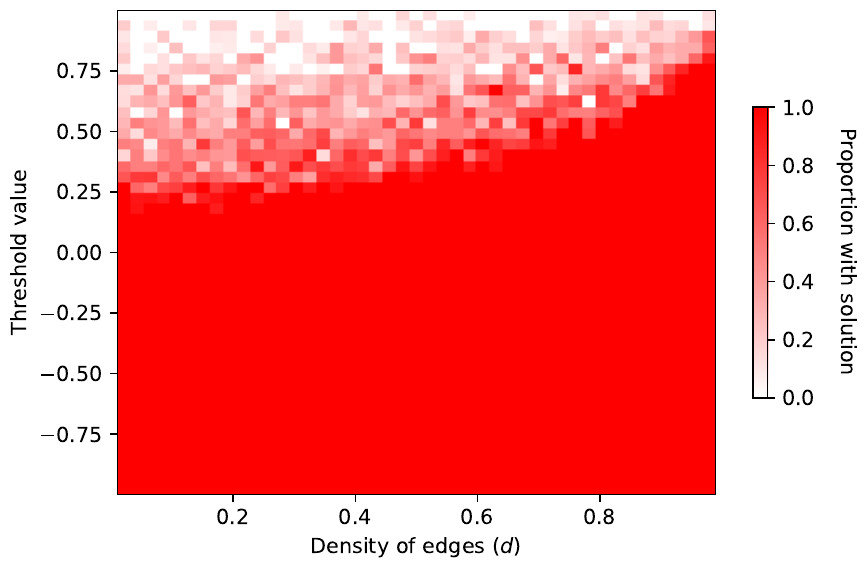} &
        \includegraphics[width=0.24\textwidth]{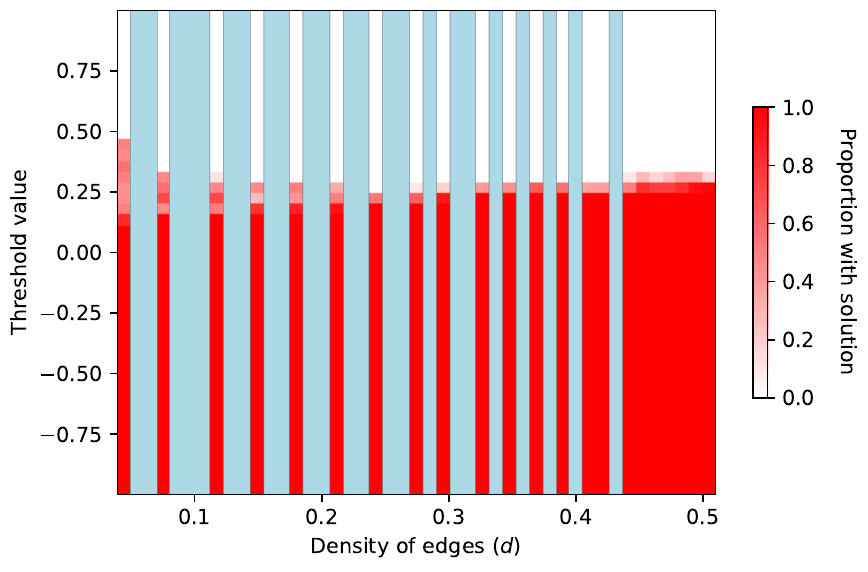} &
        \includegraphics[width=0.24\textwidth]{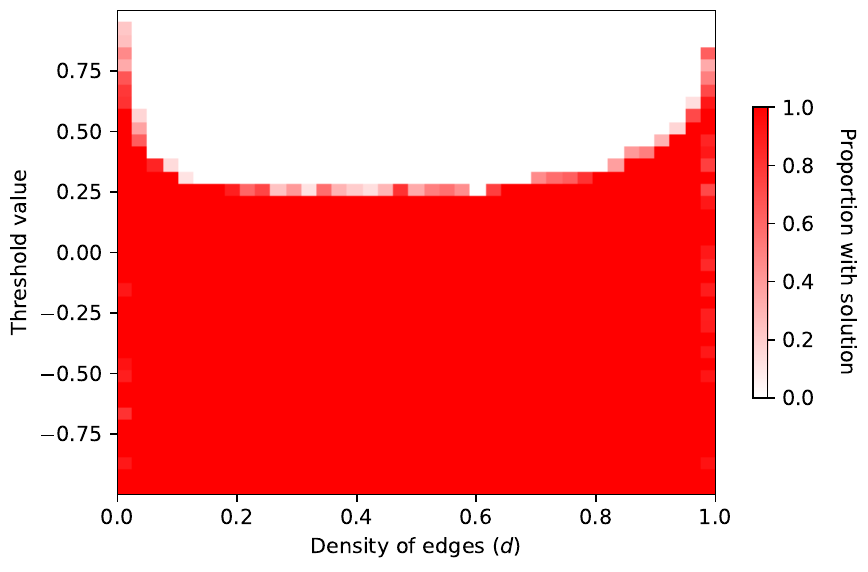} &
        \includegraphics[width=0.24\textwidth]{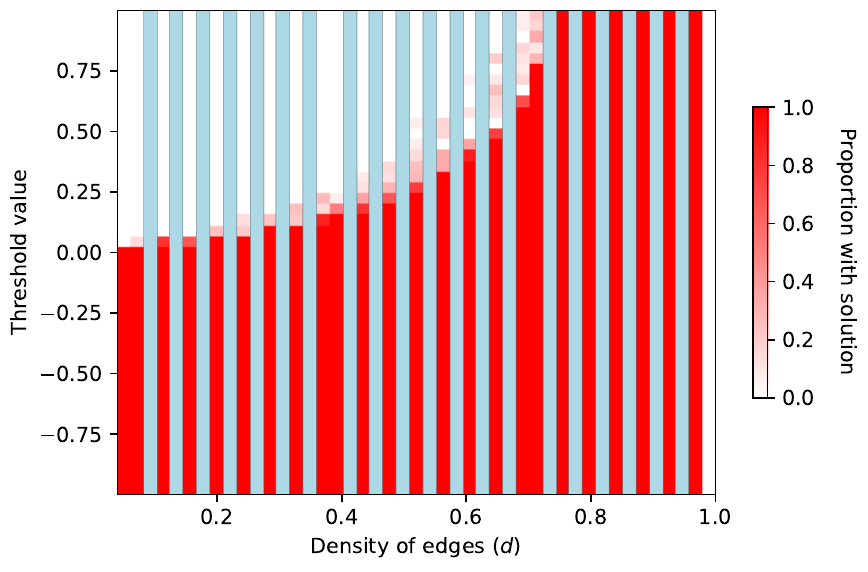} \\
        (a) Chordal & (b) Barab\'{a}si--Albert & (c) Erd\H{o}s--R\'{e}nyi & (d) Watts--Strogatz \\
    \end{tabular}
    \caption{Feasibility heatmaps for four graph models as a function of edge
    density $d$ (x-axis) and mean correlation threshold $t$ (y-axis). Each pixel
    encodes the proportion of trials, over 50 i.i.d.\ randomly generated graphs
    with $p = 51$ nodes, for which a feasible solution to
    problem~\eqref{eq:optimization_problem_with_mean} was found. Fully red pixels
    correspond to proportion 1 (always feasible), lighter pixels indicate partial
    or complete infeasibility, and light blue pixels indicate that no graph sample
    was available at that density for the corresponding model.}
    \label{fig:feasibility_heatmaps}
\end{figure}

\subsection{Real-World Applications}
\label{subsec:realworld}

To validate the practical utility of our framework, we apply it to two real-world
datasets where the distributional properties of correlation matrices are of direct
scientific or financial interest. We used Resting-state functional MRI data from
neuroscience, and stock return correlations from financial markets. In both cases,
empirical correlation matrices exhibit a pronounced positive shift in their
entry distributions.

\subsubsection{Neuroscience: Functional Brain Connectivity}
\label{subsubsec:neuroscience}

The empirical baseline for our neuroscience experiments is derived from rat
functional MRI (fMRI) recordings \cite{guillaume}. Rather than operating on raw
time series, our framework ingests localized frequency components. Specifically,
122 wavelet coefficients were extracted per region across $p = 51$ distinct brain
areas. These coefficients are obtained by applying a Daubechies order-8 discrete
wavelet transform to the original 30-minute resting-state recordings (sampled at
a 0.5-second repetition time for a total of $3{,}600$ time points). By isolating
wavelet scale 4, our analysis targets the functionally relevant $[0.06, 0.12]$~Hz
frequency band \cite{achard2022generation}.

\subsubsection{Financial Markets: S\&P 500}
\label{subsubsec:finance}

In financial simulation, we address the challenge of generating correlation
matrices derived from the S\&P 500 index \cite{Gautier2020}. Specifically, we
consider stocks and estimate pairwise correlations from their daily
returns over business days.

To validate our framework against a generative baseline, we compare against the
GAN-based approach of \cite{Gautier2020}. As the authors note, GAN-based
generation of correlation matrices suffers from scalability limitations (we restrict ourselves to $3 \times 3$ correlation matrices). To construct the training
dataset, three variables are selected at random from the full empirical matrix at
each iteration, for example, for the rat fMRI dataset ($p = 51$), random
$3 \times 3$ submatrices are extracted and used as training samples. This restriction to $3 \times 3$ matrices enables visualization within the
three-dimensional elliptope, as in the illustrative example of
Section~\ref{subsec:elliptope}.

Figures~\ref{fig:rat} and~\ref{fig:sp500} display, for the rat fMRI and S\&P 500
datasets respectively, three views: the elliptope (feasible PSD region), the
real data overlaid on the elliptope, and the GAN-generated samples. As visible in
both figures, the GAN-generated samples closely mimic the real data. Notably, the
two empirical datasets exhibit visually distinct geometric structures. The
financial market data is more spread throughout the elliptope, while the rat
fMRI data forms a denser, tree-like (spider-shaped) cluster near the boundary.
This contrast is faithfully reproduced by the GAN in both cases.

\begin{figure}[!ht]
    \centering
    \begin{subfigure}[b]{0.45\linewidth}
        \centering
        \includegraphics[width=0.9\linewidth]{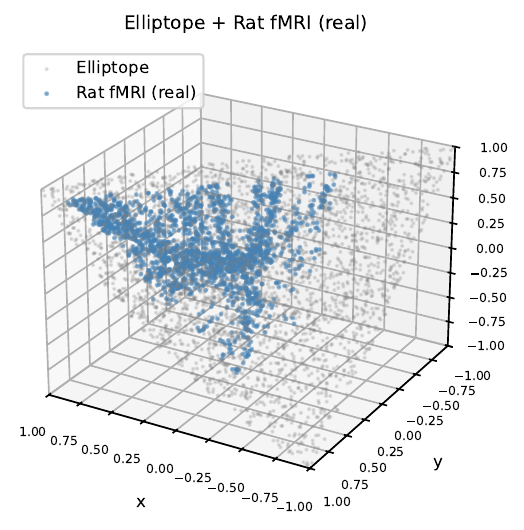}
        \caption{Elliptope and real data (training data)}
    \end{subfigure}
   \hspace{1cm}
    \begin{subfigure}[b]{0.45\linewidth}
        \centering
        \includegraphics[width=0.9\linewidth]{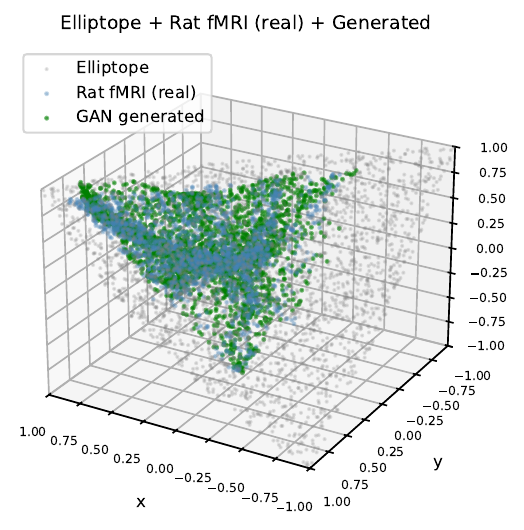}
        \caption{Elliptope, real data (training data) and generated data}
    \end{subfigure}
    \caption{GAN correlation generation trained on rat fMRI data \cite{Gautier2020}.
    Axes represent the off-diagonal entries of
    a $3 \times 3$ correlation matrix. The elliptope (feasible region,
    gray). \emph{(a)}: real rat fMRI data subset (blue) overlaid on the elliptope.
    \emph{(b)}: GAN-generated samples (green) overlaid on the elliptope and real
    data.}
    \label{fig:rat}
\end{figure}

\begin{figure}[!ht]
    \centering
    \begin{subfigure}[b]{0.45\linewidth}
        \centering
        \includegraphics[width=0.9\linewidth]{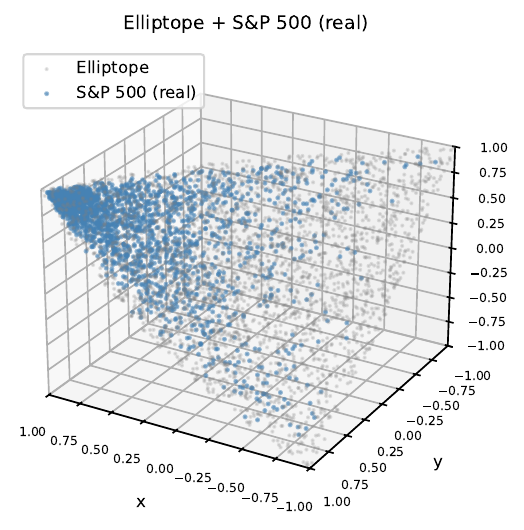}
        \caption{Elliptope and real data (training data)}
    \end{subfigure}
    \hspace{1cm}
    \begin{subfigure}[b]{0.45\linewidth}
        \centering
        \includegraphics[width=0.9\linewidth]{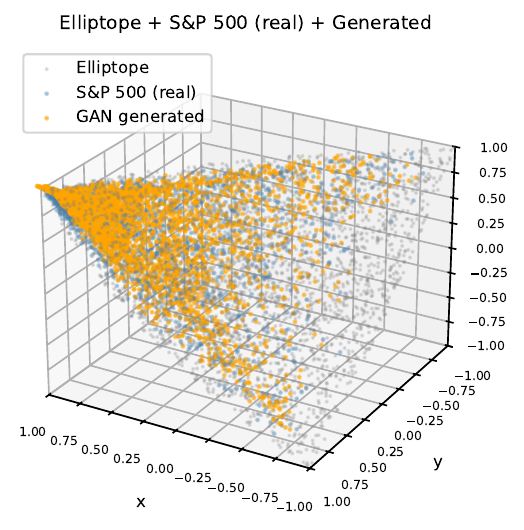}
        \caption{Elliptope, real data (training data) and generated data}
    \end{subfigure}
    \caption{GAN correlation generation trained on S\&P 500 data \cite{Gautier2020}.
    Layout identical to Figure~\ref{fig:rat}. The elliptope (feasible
    region, gray). \emph{(a)}: real S\&P 500 correlation data subset (blue).
    \emph{(b)}: GAN-generated samples (orange) overlaid on the elliptope and real
    data; as in the neuroscience case, several generated points fall outside the
    elliptope.}
    \label{fig:sp500}
\end{figure}

Figure~\ref{fig:our_method} shows the output of our method for a complete graph with three nodes in $\mathbb{R}^3$, where $\bar{\mathbf{C}}$ is drawn uniformly
from the elliptope $\mathcal{C}$ rather than entrywise on $[-1,1]$, since uniform-over-the-elliptope
sampling admits a simple constant density with respect to the
Lebesgue measure, which yields the explicit projection formula. The matrix
$\mathbf{C}_{optimal}$ is obtained by solving
problem~\eqref{eq:optimization_problem_with_mean} under a mean
threshold of $t = 0.2$.

The projected solutions in Figure~\ref{fig:our_method} exhibit a striking concentration on the
hyperplane boundary, visible as
a dense ring. This can be explained
rigorously via a Radon--Nikodym decomposition of the push-forward
measure induced by the projection map, which splits into an
absolutely continuous part and a
singular part; see \ref{app:radon_nikodym} for the full derivation.

\begin{figure}[!ht]
        \centering
        \includegraphics[width=0.3\linewidth]{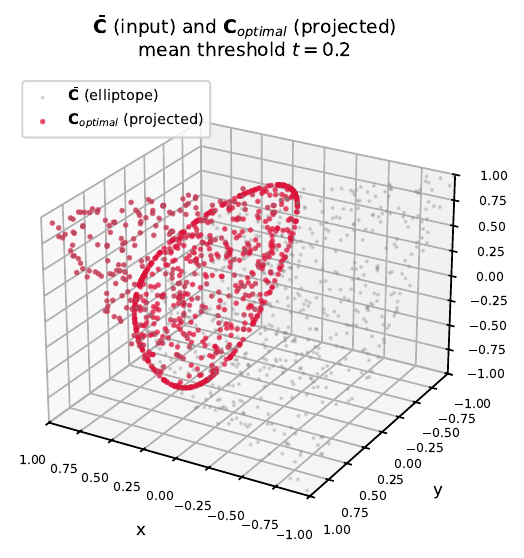}
        \caption{Generated correlation matrices via 
        problem~\eqref{eq:optimization_problem_with_mean} with mean 
        threshold $t = 0.2$, visualized inside the three-dimensional 
        elliptope. Gray points represent initial matrices 
        $\bar{\mathbf{C}}$ drawn uniformly from $\mathcal{C}$ 
        (the full elliptope), and red points are the corresponding 
        optimal projections $\mathbf{C}_{\mathrm{optimal}} \in 
        \mathcal{C}(G)$. Axes $x, y, z$ correspond to the three 
        off-diagonal entries $c_{12}, c_{13}, c_{23}$ of a $3\times 3$ 
        correlation matrix.}
    \label{fig:our_method}
\end{figure}

\subsection{Generation in Higher Dimension}
\label{subsec:higher_dim}

We now move beyond the illustrative $p=3$ case of Section~\ref{subsec:feasibility}
and validate our framework at a realistic problem size, using the same two
real-world datasets introduced above, resting-state functional MRI recordings
from neuroscience \cite{guillaume} and daily stock return correlations from the
S\&P~500 index \cite{Gautier2020}. The rat fMRI dataset has a native dimension of
$p = 51$ brain regions. To keep the two application domains directly comparable and since the S\&P~500 universe contains many more constituents than the
rat dataset has regions, we restrict the financial dataset to the first
$51$ assets as well, so that every experiment in this subsection, and the
remainder of Section~\ref{sec:applications}, is run at the same dimension
$p = 51$ for both datasets.

\subsubsection{Perturbing an Estimated Correlation Matrix}
\label{subsubsec:noise_robustness}

Randomly perturbing a baseline correlation matrix, known as \emph{correlation
matrix stress testing}, is well studied in risk management. Naively
perturbing individual entries almost always breaks positive
semidefiniteness, so existing methods restore validity by
construction, e.g.\ via a hyperspherical/Cholesky-angle reparametrization
\cite{rebonato2000most}.
These methods target local, scenario-driven stressing of a baseline matrix,
not generation under a prescribed graph structure or target moments, which
is our focus here.

We first assess how sensitive our projection-based construction is to noise
corruption of the input correlation matrix. Starting from the empirical
$51 \times 51$ correlation matrix $\bar{\mathbf{C}}$ of each dataset, we
perturb its off-diagonal entries with zero-mean Gaussian noise of standard
deviation $\sigma$, keeping the result symmetric, and then project the noisy
matrix back onto the PSD cone. This is precisely the unconstrained special
case of the dual approach of Section~\ref{subsec:malick_dual}. When no linear
equality or inequality constraints are active (no zero pattern, no unit-diagonal
requirement beyond what is already present), the dual problem \eqref{eq:dual_our_problem} is unconstrained and
its solution is realized directly by a single eigenvalue decomposition,
\begin{equation}
    \mathbf{C}_{\sigma}^{\mathrm{proj}}
    = \operatorname*{arg\,min}_{\mathbf{X} \succeq 0}
    \tfrac{1}{2}\|\mathbf{X} - \mathbf{C}_{\sigma}^{\mathrm{noisy}}\|_F^2
    = \operatorname{proj}_{\succeq 0}\!\bigl(\mathbf{C}_{\sigma}^{\mathrm{noisy}}\bigr),
    \label{eq:noise_projection}
\end{equation}
where $\mathbf{C}_{\sigma}^{\mathrm{noisy}}$ denotes the noise-corrupted matrix
at noise level $\sigma$ and $\operatorname{proj}_{\succeq 0}(\cdot)$ clips
negative eigenvalues to zero. Figure~\ref{fig:noise_overlay} overlays, for each
dataset and for $\sigma \in \{0.05, 0.15, 0.30\}$, the density of the
off-diagonal entries of the real data, the noisy matrix, and its PSD
projection~\eqref{eq:noise_projection}. As expected, the projection pulls the
noisy distribution back toward the real-data density, with the correction
becoming more pronounced as $\sigma$ grows.

\begin{figure}[!ht]
    \centering
    \begin{subfigure}[b]{0.48\textwidth}
        \centering
        \includegraphics[width=\textwidth]{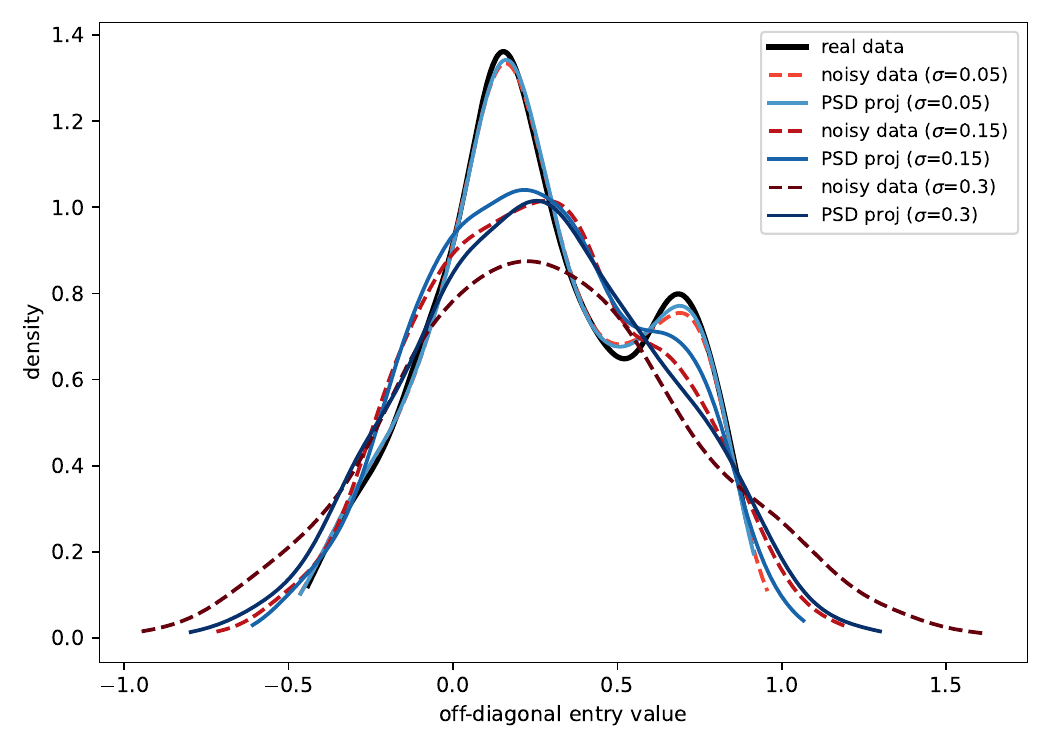}
        \caption{Rat fMRI ($p=51$).}
        \label{fig:noise_rat}
    \end{subfigure}
    \hfill
    \begin{subfigure}[b]{0.48\textwidth}
        \centering
        \includegraphics[width=\textwidth]{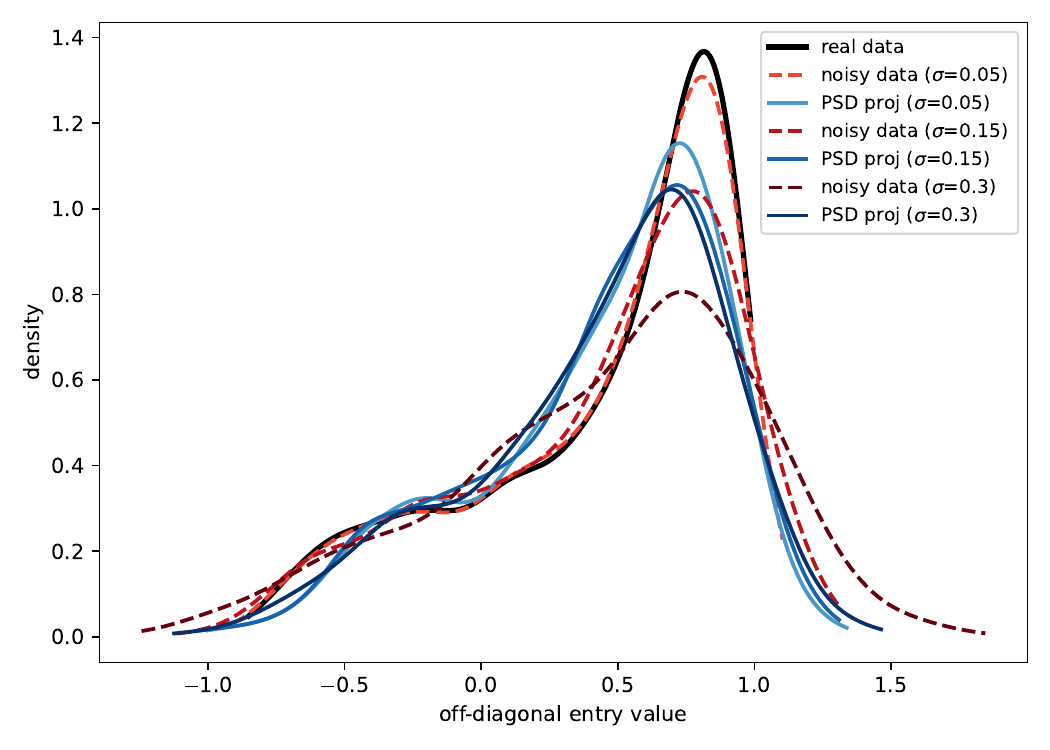}
        \caption{S\&P~500, first 51 assets.}
        \label{fig:noise_sp500}
    \end{subfigure}
    \caption{Density of off-diagonal correlation entries for the real data, the
    Gaussian-noise-corrupted matrix, and its PSD projection
    \eqref{eq:noise_projection}, at noise levels
    $\sigma \in \{0.05, 0.15, 0.30\}$, for \emph{(a)} the rat fMRI dataset and
    \emph{(b)} the S\&P~500 dataset (first 51 assets). Higher $\sigma$
    increasingly flattens and widens the noisy distribution, while the PSD
    projection consistently pulls it back toward the real-data density.}
    \label{fig:noise_overlay}
\end{figure}

\subsubsection{Effect of Initialization}
\label{subsubsec:init_moment_constraints}

We next examine how the choice of initialization $\bar{\mathbf{C}}$ affects
the solution of the mean-constrained projection
problem~\eqref{eq:optimization_problem_with_mean}, fitted to the empirical
mean $t_{\mathrm{mean}}$ of each dataset's off-diagonal entries. For each
dataset, we consider the two random initializations already introduced in
Section~\ref{sec:framework}: an \emph{entrywise-uniform} initialization, with
lower-triangular entries drawn i.i.d.\ from $\mathrm{Uniform}(-1,1)$ and unit
diagonal, and an \emph{elliptope-uniform} initialization, sampled uniformly
over the space of correlation matrices $\mathcal{C}$ via the onion method of
\cite{lewandowski2009generating}. For each initialization, we solve
problem~\eqref{eq:optimization_problem_with_mean} with $t = t_{\mathrm{mean}}$
via CVXPY.

Beyond the mean, our convex framework readily accommodates further moment
constraints on the generated off-diagonal entries in order to more closely
mimic additional distributional characteristics of real data, for
instance, a target variance can be added as a convex quadratic sublevel-set
constraint without changing the solvers of Section~\ref{sec:solvers}. We do
not develop such extensions further here; we only note that not every
moment admits a convex formulation (e.g., a target skewness is a cubic
functional of the entries), so incorporating higher-order moments beyond the
mean and variance would require departing from the convex framework.

Figures~\ref{fig:scenario_rat_combined}--\ref{fig:scenario_sp500_combined}
display, for each dataset and each initialization, the density of the
generated off-diagonal entries under the mean constraint, together with the
initialization and the real-data densities for reference.
In both cases, the mean constraint successfully pulls the solution's mode toward that of the real data, confirming that a single linear constraint on the first moment is enough to correct the gross positive shift missed by an unconstrained initialization, but the resulting shape still depends on the starting distribution $\bar{\mathbf{C}}$, since the objective in \eqref{eq:optimization_problem_with_mean} minimizes distance to $\bar{\mathbf{C}}$ rather than to the real-data density itself.

\begin{figure}[!ht]
    \centering
    \begin{subfigure}[t]{0.45\linewidth}
        \centering
        \includegraphics[width=0.8\linewidth]{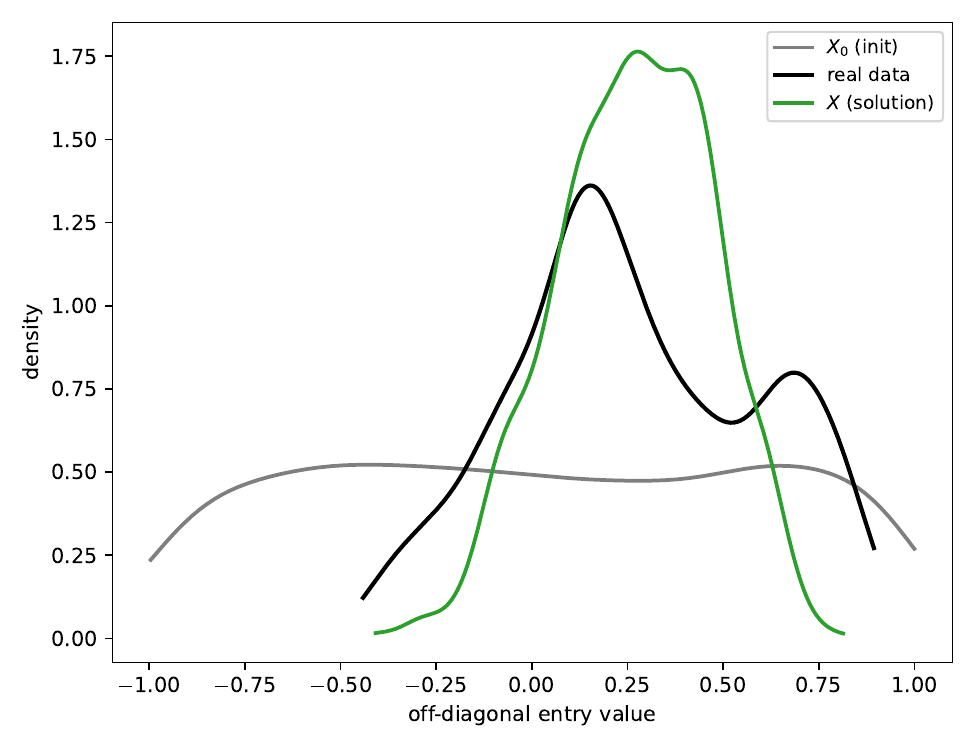}
        \caption{Mean (entrywise-uniform).}
    \end{subfigure}
    \hspace{1cm}
    \begin{subfigure}[t]{0.45\linewidth}
        \centering
        \includegraphics[width=0.8\linewidth]{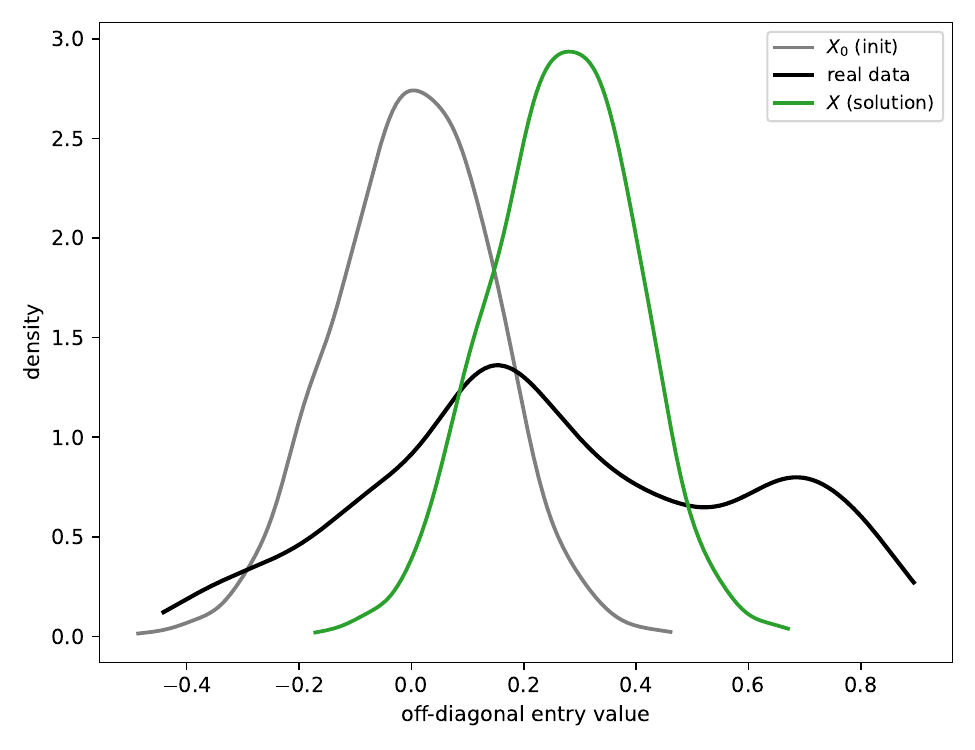}
        \caption{Mean (elliptope-uniform).}
    \end{subfigure}
    \caption{Rat fMRI dataset initialization scenarios under the mean
    constraint: density of off-diagonal entries of the initialization
    $\bar{\mathbf{C}}$, the real data, and the solution $\mathbf{C}$ of
    problem~\eqref{eq:optimization_problem_with_mean} with $t=t_{\mathrm{mean}}$.
    Left: entrywise-uniform initialization. Right: elliptope-uniform
    initialization, sampled uniformly over $\mathcal{C}$.}
    \label{fig:scenario_rat_combined}
\end{figure}

\begin{figure}[!ht]
    \centering
    \begin{subfigure}[t]{0.45\linewidth}
        \centering
        \includegraphics[width=0.8\linewidth]{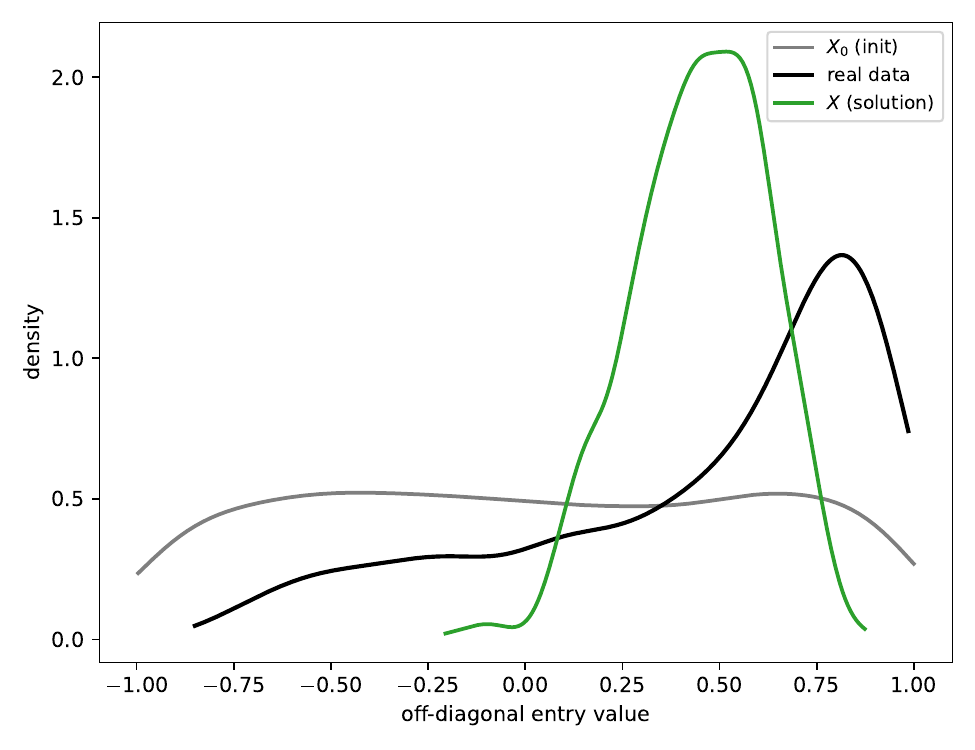}
        \caption{Mean (entrywise-uniform).}
    \end{subfigure}
    \hspace{1cm}
    \begin{subfigure}[t]{0.45\linewidth}
        \centering
        \includegraphics[width=0.8\linewidth]{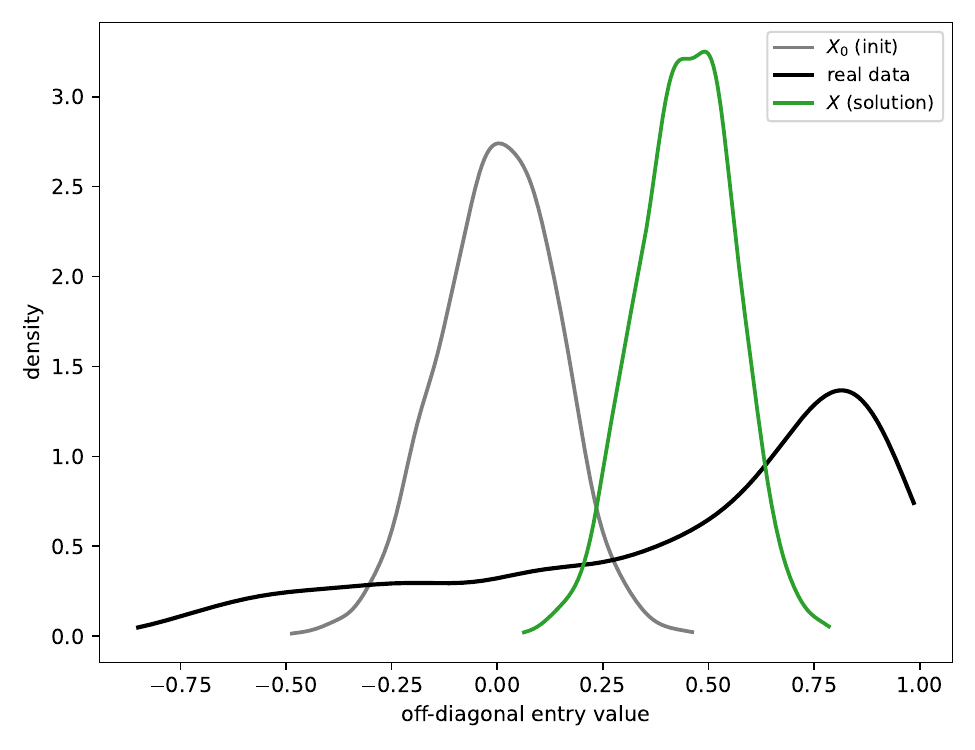}
        \caption{Mean (elliptope-uniform).}
    \end{subfigure}
    \caption{S\&P 500 dataset (first 51 assets) initialization scenarios
    under the mean constraint, same layout as
    Figure~\ref{fig:scenario_rat_combined}.}
    \label{fig:scenario_sp500_combined}
\end{figure}

\subsection{Effect of Graph Structure on Real Data}
\label{subsec:graph_effect_real}

We now reintroduce the graph-sparsity constraint of
Section~\ref{sec:preliminaries} into the two real-world applications. Rather
than solving the fully dense projection problem
\eqref{eq:optimization_problem_with_mean} as in
Section~\ref{subsec:higher_dim}, we now impose $c_{ij} = 0$ for pairs
$(i,j) \notin E$, so that the generated matrix respects a prescribed zero
pattern in addition to matching the target moments.

\subsubsection{Graph Estimated Directly From the Data}
\label{subsubsec:graph_real_data}

We first estimate the graph $G$ directly from each empirical correlation
matrix by retaining the $50\%$ of off-diagonal entries with the largest
absolute value (edge density $d = 0.5$) as edges. We then solve
problem~\eqref{eq:optimization_problem_with_mean} under this estimated graph
and a mean threshold $t$ matched to the empirical mean of the retained
entries. Simply zeroing the remaining $50\%$ of off-diagonal entries of the
empirical matrix does not, in general, preserve positive semi definiteness, the
resulting masked matrix typically has negative eigenvalues. For reference, we
also report the density of this \emph{masked} real matrix (the real
off-diagonal entries restricted to the estimated edge set, prior to any
projection), alongside the density of the full (unmasked) real matrix and of
the two projected solutions, with and without the mean constraint.

Figure~\ref{fig:real_vs_proj_rat} shows the density of the non-zero,
off-diagonal entries of a $51 \times 51$ sample correlation matrix for each
scenario. The solid black line represents a sample of the full real data,
while the resulting distribution for the masked real data at a $0.5$ edge
density (which is not necessarily PSD) is shown as the dotted orange line.
Because masking some entries to zero typically destroys the PSD property,
the projected solutions (shown in green and dotted red, corresponding to the
zero-only projection and the mean-constrained projection in
problems~\eqref{eq:optimization_problem} and~\eqref{eq:optimization_problem_with_mean},
respectively) must adjust their values to restore positive semidefiniteness;
hence, they are not an exact match with the masked real data (the orange line).
The main message of this figure is further demonstrated using a synthetic baseline:
if we initialize a matrix with uniform entries between $-1$ and $1$
(dotted purple line), the result of the optimization problem under the exact
same zero pattern and mean threshold constraint (dotted blue line) mimics the distribution of the masked real data (dotted orange line).

For the S\&P~500 dataset, the analogous experiment instead exposes a
structural limitation rather than a fit. As shown in
Figure~\ref{fig:real_vs_proj_sp500}, problem~\eqref{eq:optimization_problem_with_mean}
admits \emph{no feasible solution} once both the estimated graph structure
($d=0.5$) and the matched mean threshold of $t = 0.75$ are imposed jointly. The
zero-only projection (no mean constraint) remains feasible and is shown for
reference. This contrast between the two datasets illustrates a broader
point. The mean constraint is not always compatible with a given zero
pattern, and its feasibility must be checked jointly with $G$ rather than
imposed in isolation. For the S\&P~500 data, its empirical correlation
structure at $d=0.5$ is too strongly and too broadly correlated for any
matrix in $\mathcal{C}(G)$ to simultaneously satisfy the graph constraint
and match the empirical mean.

\begin{figure}[!ht]
    \centering
    \begin{subfigure}[t]{0.45\textwidth}
        \centering
        \includegraphics[width=0.8\textwidth]{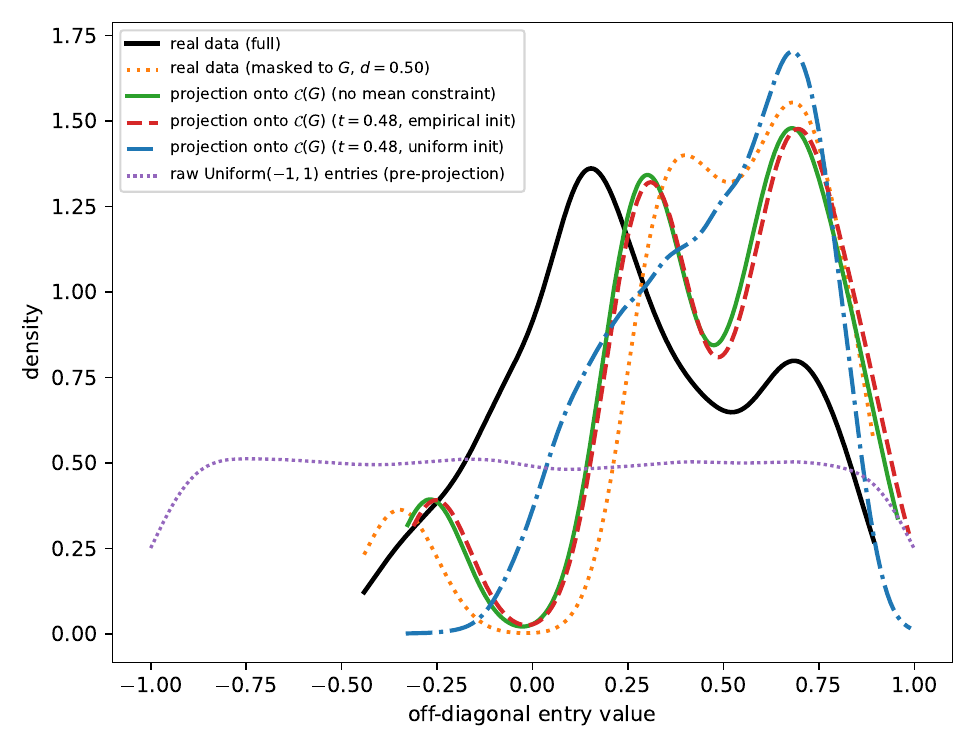}
        \caption{Rat fMRI ($p=51$): density of the full real off-diagonal
        and non-zero entries, the masked real entries (restricted to the
        estimated edge set at $d=0.5$, prior to projection), the projection
        onto $\mathcal{C}(G)$ with only the zero constraint, and the projection
        under the matched mean threshold $t=t_{\mathrm{mean}}$. Synthetic 
        uniform initialization and its projection are also shown to demonstrate 
        distribution mimicking.}
        \label{fig:real_vs_proj_rat}
    \end{subfigure}
    \hspace{1cm}
    \begin{subfigure}[t]{0.45\textwidth}
        \centering
        \includegraphics[width=0.8\textwidth]{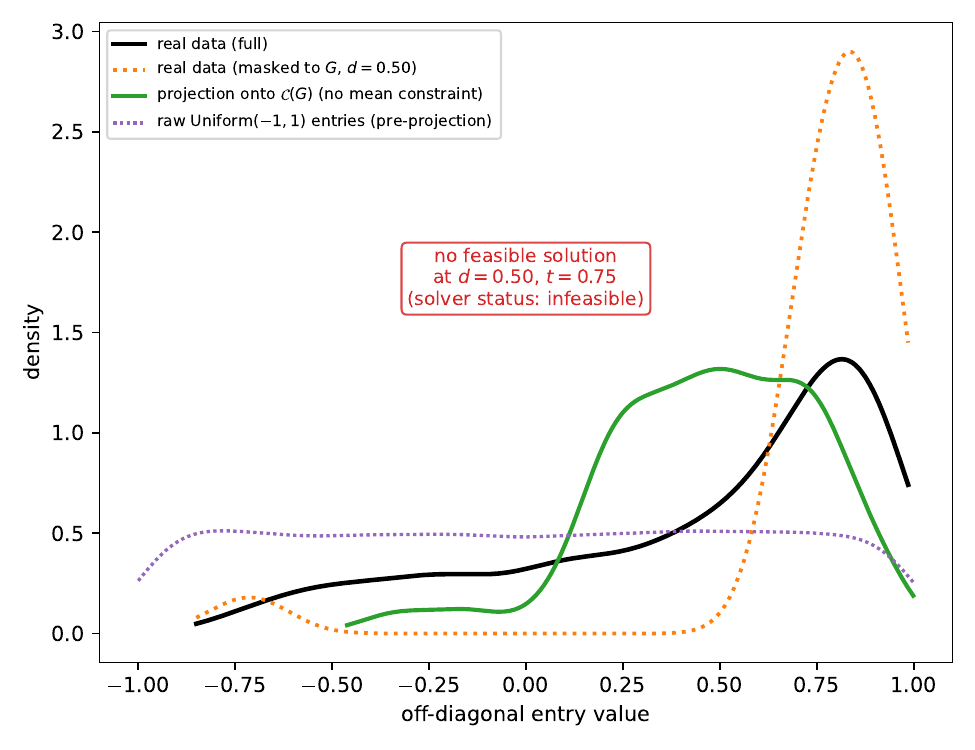}
        \caption{S\&P~500 (first 51 assets), same layout as panel (a). The
        mean-constrained projection is infeasible at $d=0.5$ under the
        empirical mean threshold; only the zero-only projection is
        shown, alongside the full and masked real data.}
        \label{fig:real_vs_proj_sp500}
    \end{subfigure}
    \caption{Real vs.\ projected correlation-entry densities under an
    estimated graph structure ($d=0.5$), for \emph{(a)} the rat fMRI
    dataset and \emph{(b)} the S\&P~500 dataset (first 51 assets).}
    \label{fig:real_vs_proj_combined}
\end{figure}

\subsubsection{Effect of Graph Family Under a Mean Threshold}
\label{subsubsec:graph_family_real}

Beyond a single graph estimated from the data, we also examine how the
\emph{family} of graph affects the generated distribution when a mean
threshold is imposed. Figure~\ref{fig:mean_constraint_distribution} reports the
empirical densities of the non-diagonal, non-zero entries obtained by our
method across the five random graph families of
Section~\ref{subsec:graph_structure}, under a mean threshold $t = 0.2$ and
edge density $d = 0.5$, averaged over 50 i.i.d.\ randomly generated graphs with
$p = 51$ nodes. Without the mean constraint, the entry distributions of the
five families are visually similar for a given density $d$ (Figure~\ref{fig:fig3}). Once the mean threshold $t=0.2$ is imposed, however,
the picture changes markedly, enforcing the mean constraint successfully
shifts the bulk of the generated coefficients into the positive regime,
closely tracing the target empirical profile. The choice of graph family
now has a clearly visible effect on the resulting distribution. This
contrast is explained by the feasibility bounds of
Section~\ref{subsec:threshold_bounds}: once the mean threshold $t=0.2$ is
imposed, each family is limited by its own graph-dependent feasibility
ceiling, since the number and length of chordless cycles present in a given
family's graphs determine how close that family already sits to its
structural upper bound on the mean correlation. Families with fewer or
longer chordless cycles (and hence a looser ceiling) can reach $t=0.2$ with
only a mild redistribution of entries, while families whose graphs contain
many short chordless cycles must redistribute their entries far more
aggressively to satisfy the same threshold, producing markedly different
resulting distributions across families.

Finally, the feasibility bounds established in Section~\ref{subsec:threshold_bounds} and illustrated in Figure~\ref{fig:feasibility_heatmaps} can be used diagnostically to assess the plausibility of a given graph family as the generative structure underlying empirical data. If the empirical mean correlation exceeds a threshold $t$ and the estimated graph density exceeds $d$, and the pair $(t, d)$ lies outside the feasible region for a given graph family in our simulations, this suggests that the empirical data is unlikely to have been generated under that graph family's structural assumptions. For instance, as illustrated in Figure~\ref{fig:feasibility_heatmaps}, the combination of mean $t = 0.5$ and density $d = 0.5$ is infeasible under the Erdős–Rényi model, so empirical data exhibiting these characteristics is unlikely to arise from an Erdős–Rényi-type dependency structure. 

\begin{figure}[!ht]
    \centering
    \includegraphics[width=0.45\linewidth]{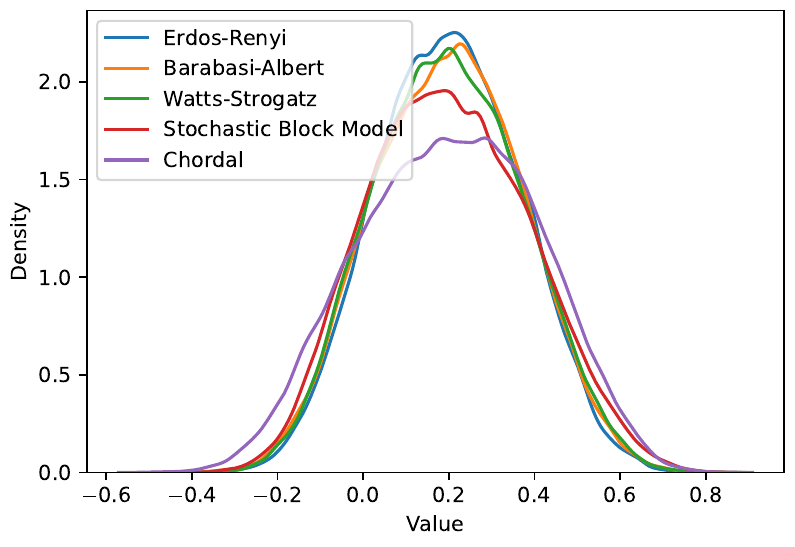}
    \caption{Empirical densities of non-diagonal, non-zero entries in generated
    correlation matrices under a mean threshold constraint $t = 0.2$ and edge
    density $d = 0.5$, estimated over 50 i.i.d.\ randomly generated graphs with
    $p = 51$ nodes, across the five graph families of
    Section~\ref{subsec:graph_structure}. Our method with an explicit mean
    constraint successfully reproduces the positive shift observed in
    empirical fMRI data, whereas baseline methods remain tightly centered
    around zero.}
    \label{fig:mean_constraint_distribution}
\end{figure}
\section{Conclusion}
\label{sec:conclusion}

In this paper, we introduced a convex optimization framework for generating
structured correlation matrices that are compatible with arbitrary graph
structures. By formulating the task as a projection problem in the Frobenius
sense, we guarantee a unique, node-ordering-independent solution that avoids
the systematic shrinkage toward zero associated with existing diagonal dominance
methods.

We demonstrated that our approach effectively accommodates real-world
distributional requirements through an additional linear mean-threshold
constraint, allowing for the generation of matrices that replicate the positive
shift observed in fMRI brain connectivity and financial market data. Through extensive numerical experiments, we compared three distinct
optimization solvers. CVXPY offers the greatest flexibility for prototyping
and auxiliary constraints, while both the dual approach and QSDPNAL scale to
high-dimensional problems ($p \geq 1000$) where CVXPY becomes impractical. In
our benchmarks (Table~\ref{tab:computation_times}) the dual approach was in
fact the fastest of the three at every tested dimension, including $p=1000$
(10.80 s vs.\ 48.39 s for QSDPNAL). We also
provided a rigorous characterization of the feasibility of the mean constraint as
a function of graph density and structure, proving that chordless cycles impose
hard structural upper bounds on the achievable mean correlation.

As summarized in Table~\ref{tab:comparison}, our method is the only approach that
simultaneously handles non-chordal graphs, avoids near-zero entries, supports
distributional control via the mean constraint, and is independent of node
ordering, at the cost of increased computational time relative to diagonal
dominance or partial orthogonalization.

\begin{table}[ht]
\centering
\caption{Qualitative comparison of methods for generating structured correlation matrices.}
\label{tab:comparison}
\resizebox{0.5\textwidth}{!}{
\begin{tabular}{lcccc}
\hline
\textbf{Property} & 
\textbf{\parbox{1.8cm}{\centering Diag.\\Dom.~\cite{cordoba2020generating}}} & 
\textbf{\parbox{1.8cm}{\centering Part.\\Orth.~\cite{cordoba2018metropolis}}} & 
\textbf{\parbox{2cm}{\centering Cholesky-\\based~\cite{cordoba2018metropolis}}} & 
\textbf{Ours} \\
\hline
Non-chordal graphs     & \checkmark & \checkmark & $\times$   & \checkmark \\
Mean control           & $\times$   & $\times$   & $\times$   & \checkmark \\
Order-independent      & \checkmark & $\times$   & $\times$   & \checkmark \\
Match real-data dist.\ & $\times$   & $\times$   & $\times$   & \checkmark \\
Low computational cost & \checkmark & \checkmark & \checkmark & $\times$   \\
\hline
\end{tabular}
}
\end{table}


Future work will focus on integrating more complex higher-order structural
constraints, such as spectral density requirements, and further optimizing the
dual approach for distributed computing environments.

\section*{Acknowledgements}

We thank J\'{e}r\^{o}me Malick for helpful discussions, and Moritz M\"{u}ller
and Aur\'{e}lie Lagoutte for fruitful discussions on the implementation of
chordal graphs. This work was supported by the Agence Nationale de la
Recherche under the France 2030 programme, reference ANR-23-IACL-0006.

\appendix

\section{Derivation of the Bimodal Density under Diagonal Dominance}

\label{app:bimodal}
To analyze the resulting distribution, consider the simplified symmetric case
$r_i = r_j = r$, where $r = \sum_{\substack{\ell \neq j,\, \ell \sim i}} |\tilde{c}_{i\ell}|$
collects the contributions of all other neighbors of node $i$. 

\textit{Initialization and the transformation map:} We begin by sampling the 
lower triangular entries of the unnormalized matrix $\tilde{\mathbf{C}}$ independently 
and uniformly from $[-1, 1]$, with upper triangular entries imposed by symmetry,
$\tilde{c}_{ij} \sim \mathrm{Uniform}(-1,1)$ for $i > j$, and $\tilde{c}_{ji} = \tilde{c}_{ij}$. 
The diagonal dominance procedure then applies the nonlinear normalization map
\begin{equation}
    \phi(\tilde{c}) = \frac{\tilde{c}}{\sqrt{\tilde{c}_{ii}\tilde{c}_{jj}}},
\end{equation}
which, in the simplified symmetric case where $\tilde{c}_{ii} = \tilde{c}_{jj} = |\tilde{c}| + r$, 
reduces to:
\begin{equation}
    \phi(\tilde{c}) = \frac{\tilde{c}}{|\tilde{c}| + r}, \qquad
    \phi'(t) = \frac{r}{(t+r)^2}, \quad t > 0.
    \label{eq:phi_map}
\end{equation}
The map $\phi$ is strictly decreasing and nonlinear, it compresses the support 
of the original entries by the factor $|\tilde{c}| + r$ in the denominator.

\textit{Resulting distribution:} To determine the distribution of the normalized 
entries $c_{ij} = \phi(\tilde{c}_{ij})$, we apply the change-of-variables formula. 
Since $\tilde{c}_{ij} \sim \mathrm{Uniform}(-1, 1)$ has density $f_{\tilde{c}}(t) = 1/2$, 
the density of $c_{ij} = x$ for $x > 0$ is
\begin{equation}
    f_{c}(x) = f_{\tilde{c}}(\phi^{-1}(x)) \cdot \left|\frac{d\phi^{-1}}{dx}\right| 
    = \frac{1}{2} \cdot \frac{1}{\phi'(\phi^{-1}(x))}.
    \label{eq:change_of_vars}
\end{equation}
Inverting the map \eqref{eq:phi_map} yields $\phi^{-1}(x) = rx/(1-x)$, so that
\begin{equation}
    \phi'(\phi^{-1}(x)) = \frac{r}{\left(\frac{rx}{1-x} + r\right)^2} = \frac{(1-x)^2}{r}.
\end{equation}
Substituting back into \eqref{eq:change_of_vars}, the density becomes
\begin{equation}
    f_c(x) \propto \frac{r}{(1-x)^2}, \quad x \in (0, \phi(1)) = \left(0, \frac{1}{1+r}\right).
\end{equation}
This density is strictly increasing on its support $(0, 1/(1+r))$, with maximum 
at the boundary $x = 1/(1+r)$. By symmetry of the initialization, the distribution 
on $(-1/(1+r), 0)$ is the mirror image, yielding the characteristic \emph{bimodal} 
shape with peaks at $\pm 1/(1+r)$ and a trough at zero, as illustrated in 
Figure~\ref{fig:density_comparison}.

\section{Proof of Non-Emptiness and Measure-Zero Results}
\label{app:measure}

We now justify this design choice rigorously,
showing both that the feasible set $\mathcal{C}(G)$ is non-empty and
full-dimensional (in a suitable reduced parameterization), and that the
\emph{accidental} zeros it may introduce on edges are a negligible, measure-zero
phenomenon.

We collect here a sequence of results that together justify both
the convex relaxation~\eqref{eq:optimization_problem} adopted
throughout this paper and the claim that \emph{accidental} zeros on
graph edges are almost surely absent.
Before stating the results, we introduce all necessary definitions
and explain the rationale for each modelling choice.

\textit{Remark on notation.} The set $\mathcal{C}(G)$ defined in
\eqref{eq:omega_G} is a subset of the matrix space
$\mathcal{S}^p$, where the space of $p \times p$ symmetric matrices $\mathcal{S}^p$, 
identified with $\mathbb{R}^{p(p+1)/2}$ via the upper-triangular vectorization map, because this set is confined to a
proper affine subspace of $\mathcal{S}^p$ (fixed diagonal, fixed zeros), all
topological notions of interior and all Lebesgue-measure statements are
\emph{trivial} if stated directly in $\mathcal{S}^p$. Every point of
$\mathcal{C}(G)$ has empty matrix-space interior, and $\mathcal{C}(G)$ itself
has zero Lebesgue measure in $\mathcal{S}^p$. To make these notions
non-trivial, we introduce below a reduced, free-entry parameterization of
$\mathcal{C}(G)$ in a lower-dimensional Euclidean space, and we are careful to
use \emph{distinct} symbols for the matrix-space set $\mathcal{C}(G)$ and its
image under this parameterization, to avoid any ambiguity between the two.

\subsection{Ambient Space and the Feasible Set}


Fix a graph $G=(V,E)$ with $|V|=p$ and $|E|=q$, where $q$ counts
unordered pairs $\{i,j\}$ with $i<j$ such that $(i,j)\in E$.

A symmetric $p\times p$ matrix with unit diagonal and prescribed
zeros on all non-edges is \emph{completely} determined by the
values it takes on the edge set $E$.
Once we fix $c_{ii}=1$ for all $i$ and $c_{ij}=0$ for all
$(i,j)\notin E$, the only remaining freedom is the vector of
\emph{free entries}
\[
\bm{c} \;:=\; (c_{ij})_{(i,j)\in E,\, i<j}
\;\in\; \mathbb{R}^{q}.
\]
We define the \emph{reconstruction map} $\Phi_G : \mathbb{R}^{q} \to
\mathcal{S}^p$, which builds the full symmetric matrix from $\bm{c}$ by
setting the diagonal to $1$, filling each edge position $(i,j)$ and $(j,i)$
with $c_{ij}$, and filling each non-edge position with $0$. The map $\Phi_G$
is a \emph{linear bijection} from $\mathbb{R}^q$ onto the affine subspace
\begin{equation*}
\mathcal{A}_G := \bigl\{\mathbf{C}\in\mathcal{S}^p :
\mathrm{diag}(\mathbf{C})=\mathbf{1},\ c_{ij}=0\ \forall (i,j)\notin E\bigr\}
\subset \mathcal{S}^p,
\end{equation*}
which is precisely the affine subspace containing $\mathcal{C}(G)$. We use
$\Phi_G$ to transport $\mathcal{C}(G)$ into the lower-dimensional Euclidean
space $\mathbb{R}^q$, which (unlike $\mathcal{A}_G$, a proper affine subspace
of $\mathcal{S}^p$) carries a non-degenerate Lebesgue measure $\lambda_q$ and
a standard topology. We define the \emph{free-entry image} of $\mathcal{C}(G)$ as
\begin{equation}
\widehat{\mathcal{C}}(G)
\;:=\;
\Phi_G^{-1}\bigl(\mathcal{C}(G)\bigr)
\;=\;
\bigl\{\, \bm{c}\in\mathbb{R}^{q} :
\Phi_G(\bm{c})\succeq 0 \,\bigr\}
\;\subset\;\mathbb{R}^q,
\label{eq:feasible_image}
\end{equation}
where the equality on the right uses the fact that the unit-diagonal and
sparsity constraints are already built into $\Phi_G(\cdot)$, so the only
remaining constraint defining $\mathcal{C}(G)$ inside $\mathcal{A}_G$ is
positive semidefiniteness. \emph{All topological and measure-theoretic
statements in the remainder of this section (interior, Lebesgue measure,
dimension, density) refer to $\widehat{\mathcal{C}}(G)$ in $\mathbb{R}^q$, and
never to $\mathcal{C}(G)$ itself}, which remains, as always, the matrix set of
Eq.~\eqref{eq:omega_G} used in problem~\eqref{eq:optimization_problem}.

\textit{Semi-algebraic structure.}
A subset $S\subseteq\mathbb{R}^{n}$ is \emph{semi-algebraic} if
it is a finite Boolean combination (unions, intersections,
complements) of sets of the form
$\{\bm{x}\in\mathbb{R}^n:p(\bm{x})\geq 0\}$ for polynomials $p$
with real coefficients.
Positive semidefiniteness of $\Phi_G(\bm{c})$ is equivalent
to the non-negativity of all $2^p - 1$ leading principal minors,
each of which is a polynomial in $\bm{c}$.
The feasible set~\eqref{eq:feasible_image} is therefore a finite
intersection of polynomial sublevel sets, i.e.\ a \emph{closed
semi-algebraic} subset of $\mathbb{R}^{q}$.

\subsection{Topological Prerequisites}

We recall three standard notions used throughout the proofs.

\begin{definition}[Interior and relative interior]
\label{def:interior_relint}
Let $S\subseteq\mathbb{R}^{n}$.
\begin{itemize}
    \item The \emph{interior} of $S$, written $\mathrm{int}(S)$,
    is the largest open subset of $\mathbb{R}^n$ contained in
    $S$; equivalently, $\bm{x}\in\mathrm{int}(S)$ if and only
    if $\exists\,\varepsilon>0$ such that the open Euclidean
    ball $B(\bm{x},\varepsilon)\subseteq S$.
    \item The \emph{affine hull} of $S$, written $\mathrm{aff}(S)$,
    is the smallest affine subspace of $\mathbb{R}^n$ that
    contains $S$.
    \item The \emph{relative interior} of $S$, written
    $\mathrm{relint}(S)$, is the interior of $S$ taken with
    respect to the subspace topology of $\mathrm{aff}(S)$:
    $\bm{x}\in\mathrm{relint}(S)$ iff
    $\exists\,\varepsilon>0$ such that
    $B(\bm{x},\varepsilon)\cap\mathrm{aff}(S)\subseteq S$.
\end{itemize}
\end{definition}

\textit{Illustrative example.}
Let $T=\{(x,0):-1\leq x\leq 1\}\subset\mathbb{R}^2$ be a line
segment on the $x$-axis.
Its interior in $\mathbb{R}^2$ is \emph{empty}. No open disk of
$\mathbb{R}^2$ is contained in $T$.
Its relative interior in its affine hull (the $x$-axis
$\cong\mathbb{R}^1$) is the open segment $(-1,1)$.

The distinction matters here because $\mathcal{C}(G)$, viewed inside
the full matrix space $\mathcal{S}^p\cong\mathbb{R}^{p(p+1)/2}$,
has empty interior there (the unit-diagonal and sparsity constraints
confine it to the proper affine subspace $\mathcal{A}_G$, as noted above).
Once we pass to the reduced space $\mathbb{R}^q$ via $\Phi_G$,
the affine hull of $\widehat{\mathcal{C}}(G)$ \emph{is}
$\mathbb{R}^q$, so relative interior and interior coincide in that
space.
All interior claims below refer to $\mathbb{R}^q$.

\begin{definition}[Face of a convex set]
\label{def:face_convex}
Let $K\subseteq\mathbb{R}^n$ be a convex set.
A convex subset $F\subseteq K$ is a \emph{face} of $K$ if,
whenever $\bm{x}=(1-t)\bm{y}+t\bm{z}$ for some $\bm{y},\bm{z}\in K$
and $t\in(0,1)$, then both $\bm{y}$ and $\bm{z}$ belong to $F$.
Equivalently, no interior point of a segment in $K$ can lie in
$F$ without both endpoints also lying in $F$.
\end{definition}

Faces arise naturally as supporting-hyperplane intersections, if
$\bm{a}^\top\bm{x}\leq b$ holds for all $\bm{x}\in K$, then
$F=K\cap\{\bm{x}:\bm{a}^\top\bm{x}=b\}$ is a face.

\begin{definition}[Dimension and codimension]
\label{def:codimension}
The \emph{dimension} of an affine subspace $W\subseteq\mathbb{R}^n$
is the dimension of its direction space.
The \emph{codimension} of $W$ in $\mathbb{R}^n$ is
$\mathrm{codim}(W):=n-\dim(W)$.
A subspace of codimension $1$ is a \emph{hyperplane}.
\end{definition}

\textit{Illustrative example.}
In $\mathbb{R}^3$, a point has codimension $3$; a line has
codimension $2$; a plane has codimension $1$ and is therefore a
hyperplane.

\textit{Why codimension $1$ implies Lebesgue measure zero.}
A hyperplane $W$ in $\mathbb{R}^q$ has dimension $q-1$ and
``misses one dimension''. Fixing one coordinate to a constant
removes one degree of freedom, leaving zero $q$-dimensional
volume.
Formally, let $W=\ker(\pi_{ij})$ for a coordinate projection
$\pi_{ij}(\bm{c})=c_{ij}$.
By Fubini's theorem, decomposing $\mathbb{R}^q$ as
$\mathbb{R}^{q-1}\times\mathbb{R}$ along the $c_{ij}$-axis,
\[
\lambda_q(W)
\;=\;
\int_{-\infty}^{\infty}
\lambda_{q-1}(W\cap\{c_{ij}=t\})\,\mathbf{1}_{\{t=0\}}\,\mathrm{d}t
\;=\;
0,
\]
since the indicator $\mathbf{1}_{\{t=0\}}$ is non-zero only on
the single point $t=0$, a set of one-dimensional measure zero.
Hence every proper affine subspace of $\mathbb{R}^q$ is a
$\lambda_q$-null set.

\subsection{Non-Emptiness of the Feasible Set}

\begin{proposition}[Universal feasible point]
\label{prop:feasible}
For every graph $G=(V,E)$, the identity matrix satisfies
$\mathbf{I}_p\in\mathcal{C}(G)$.
\end{proposition}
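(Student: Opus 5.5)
The plan is to verify directly that $\mathbf{I}_p$ meets each of the defining conditions of $\mathcal{C}(G)$ in \eqref{eq:omega_G}, one at a time. None of the conditions refers to the edge set $E$ beyond requiring zeros on non-edges, and $\mathbf{I}_p$ has no nonzero off-diagonal entries at all. So the argument will be uniform in $G$.

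The key steps, in order, are as follows.

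\emph{Symmetry.} I will note that $\mathbf{I}_p^\top = \mathbf{I}_p$.

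\emph{Positive semidefiniteness.} For any $\bm{v}\in\mathbb{R}^p$ we have $\bm{v}^\top \mathbf{I}_p \bm{v} = \|\bm{v}\|^2 \geq 0$. In fact this is strict for $\bm{v}\neq 0$, so $\mathbf{I}_p \succ 0$. Equivalently, all eigenvalues of $\mathbf{I}_p$ equal $1$.

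\emph{Constraints \eqref{eq:corr_constraints}.} The diagonal gives $\mathrm{diag}(\mathbf{I}_p)=\mathbf{1}$. For the off-diagonal entries, $\delta_{ij}\in\{0,1\}\subset[-1,1]$, so the bound $|c_{ij}|\le 1$ holds. This bound is in any case implied by PSD, as observed in Section~\ref{sec:framework}.

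\emph{Zero pattern.} For every pair $(i,j)\notin E$ with $i\neq j$, the entry is $\delta_{ij}=0$, as required.

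Since all the conditions in \eqref{eq:omega_G} hold, $\mathbf{I}_p\in\mathcal{C}(G)$. In the reduced parameterization this says exactly that $\bm{0}=\Phi_G^{-1}(\mathbf{I}_p)\in\widehat{\mathcal{C}}(G)$.

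There is no genuine obstacle here; the only point requiring care is to check the conditions against the precise definition \eqref{eq:omega_G}. I will also remark, for use in the subsequent measure-theoretic results, that the argument actually gives $\mathbf{I}_p\succ 0$. Because $\lambda_{\min}$ is continuous, $\Phi_G(\bm{c})\succ 0$ for all $\bm{c}$ in a small ball around $\bm{0}$. A concrete radius follows from Weyl's inequality: $\|\Phi_G(\bm{c})-\mathbf{I}_p\|_2\le\|\Phi_G(\bm{c})-\mathbf{I}_p\|_F=\sqrt{2}\,\|\bm{c}\|$, so $\|\bm{c}\|<1/\sqrt{2}$ suffices. Hence $\bm{0}\in\mathrm{int}\,\widehat{\mathcal{C}}(G)$, which foreshadows the full-dimensionality claim in $\mathbb{R}^q$.
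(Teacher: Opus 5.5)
Your proposal is correct and matches the paper's proof, which likewise takes $\bm{c}=\bm{0}$, notes $\Phi_G(\bm{0})=\mathbf{I}_p$, and checks positive semidefiniteness (indeed $\mathbf{I}_p\succ 0$), the unit diagonal, and the zeros on non-edges. Your closing Weyl-inequality remark gives the explicit radius $\|\bm{c}\|_2<1/\sqrt{2}$; it is a valid quantitative sharpening of the continuity argument the paper uses in its subsequent remark that $\bm{0}\in\mathrm{int}(\widehat{\mathcal{C}}(G))$.
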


\begin{proof}
Set $\bm{c}=\bm{0}\in\mathbb{R}^{q}$; the reconstructed matrix
is $\Phi_G(\bm{0})=\mathbf{I}_p$.
We verify each constraint in definition~\eqref{eq:omega_G}:
\begin{enumerate}
    \item \emph{Positive semidefiniteness:}
    $\mathbf{I}_p\succeq 0$ because all eigenvalues equal $1>0$.
    In fact $\mathbf{I}_p\succ 0$ (strictly positive definite).
    \item \emph{Unit diagonal:}
    $[\mathbf{I}_p]_{ii}=1$ for all $i$.
    \item \emph{Sparsity:}
    $[\mathbf{I}_p]_{ij}=0$ for every $i\neq j$, hence in
    particular for every non-edge $(i,j)\notin E$.
\end{enumerate}
All constraints hold for any graph $G$.
\end{proof}

\begin{remark}[The origin is an interior point of $\widehat{\mathcal{C}}(G)$]
\label{rem:interior_point}
A stronger statement holds, $\bm{0}\in\mathrm{int}(\widehat{\mathcal{C}}(G))$
in $\mathbb{R}^q$.

Strict positive definiteness is an \emph{open} condition. The
minimum-eigenvalue function
$\bm{c}\mapsto\lambda_{\min}(\Phi_G(\bm{c}))$
is continuous (eigenvalues vary continuously with the matrix
entries, which are linear in $\bm{c}$), so
$\{\bm{c}:\lambda_{\min}(\Phi_G(\bm{c}))>0\}$ is the preimage
of the open set $(0,\infty)$ under a continuous map and is
therefore open in $\mathbb{R}^q$.
Since $\lambda_{\min}(\mathbf{I}_p)=1>0$, there exists
$\varepsilon>0$ such that $\Phi_G(\bm{c})\succ 0$
(hence $\bm{c}\in\widehat{\mathcal{C}}(G)$) for all $\|\bm{c}\|_2\leq\varepsilon$.
Thus $B(\bm{0},\varepsilon)\subseteq\widehat{\mathcal{C}}(G)$, confirming
$\bm{0}\in\mathrm{int}(\widehat{\mathcal{C}}(G))$.

Because $\mathrm{int}(\widehat{\mathcal{C}}(G))\neq\emptyset$, the Lebesgue
measure satisfies $\lambda_q(\widehat{\mathcal{C}}(G))>0$, the free-entry
feasible set is a \emph{full-dimensional} convex body in $\mathbb{R}^{q}$.
(Equivalently, $\mathcal{C}(G)$ is full-dimensional \emph{relative to} the
affine subspace $\mathcal{A}_G$ it lives in, in the sense of
$\mathrm{relint}$.)
\end{remark}

\subsection{Lebesgue Measure of the \emph{Accidental Zero} Set}

Define the \emph{accidental-zero set} as
\begin{equation}
\widehat{\mathcal{N}}(G)
\;:=\;
\Bigl\{\,\bm{c}\in\widehat{\mathcal{C}}(G)
: \exists\,(i,j)\in E \text{ s.t.\ } c_{ij}=0\Bigr\}
\;\subset\;\mathbb{R}^q,
\label{eq:accidental_zero}
\end{equation}
i.e.\ the free-entry vectors whose reconstructed matrix has at least one
zero edge entry.

\begin{proposition}[Accidental zeros form a null set]
\label{prop:measure_zero}
$\lambda_q\bigl(\widehat{\mathcal{N}}(G)\bigr)=0$.
\end{proposition}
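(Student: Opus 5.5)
The plan is to write the accidental-zero set as a finite union of slices of $\widehat{\mathcal{C}}(G)$ by coordinate hyperplanes. For each edge $(i,j)\in E$ with $i<j$, set
\[
H_{ij} := \{\bm{c}\in\mathbb{R}^q : c_{ij}=0\} = \ker(\pi_{ij}).
\]
Directly from the definition~\eqref{eq:accidental_zero},
\[
\widehat{\mathcal{N}}(G) = \bigcup_{(i,j)\in E,\, i<j} \bigl(\widehat{\mathcal{C}}(G)\cap H_{ij}\bigr) \subseteq \bigcup_{(i,j)\in E,\, i<j} H_{ij}.
\]
The union is finite and has exactly $q$ terms, one per coordinate of $\bm{c}$.

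Next I check measurability and then bound the measure of each piece. Measurability is immediate: $\widehat{\mathcal{C}}(G)$ is closed, being the preimage of the closed PSD cone under the continuous (affine) map $\Phi_G$, and each $H_{ij}$ is closed. So every $\widehat{\mathcal{C}}(G)\cap H_{ij}$ is Borel. Each $H_{ij}$ is a linear subspace of dimension $q-1$, that is, a hyperplane in the sense of Definition~\ref{def:codimension}. The Fubini argument given just before the statement then yields $\lambda_q(H_{ij})=0$. Monotonicity of Lebesgue measure gives $\lambda_q(\widehat{\mathcal{C}}(G)\cap H_{ij})=0$ for every edge.

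To conclude, I apply finite subadditivity:
\[
\lambda_q\bigl(\widehat{\mathcal{N}}(G)\bigr) \le \sum_{(i,j)\in E} \lambda_q\bigl(\widehat{\mathcal{C}}(G)\cap H_{ij}\bigr) = 0.
\]
For interpretation, I would add one remark combining this with Remark~\ref{rem:interior_point}. Since $\lambda_q(\widehat{\mathcal{C}}(G))>0$, the set $\widehat{\mathcal{N}}(G)$ is a null subset of a set of positive measure. Hence, under any distribution on $\widehat{\mathcal{C}}(G)$ that is absolutely continuous with respect to $\lambda_q$, accidental zeros occur with probability zero. The degenerate case $q=0$ is vacuous, since then $\widehat{\mathcal{N}}(G)=\emptyset$.

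None of these steps is a genuine obstacle; the argument is elementary. The one delicate point is making sure every measure statement is taken in the reduced space $\mathbb{R}^q$ and not in $\mathcal{S}^p$, where all such statements are trivially zero, as the notation remark warns. I would also be explicit that the result concerns $\lambda_q$ on the feasible set. It does not address the law of the projected output $\mathrm{proj}_{\mathcal{C}(G)}(\bar{\mathbf{C}})$, which may put positive mass on lower-dimensional faces of the boundary, and the proposition does not claim otherwise.
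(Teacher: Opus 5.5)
Your proof is correct and follows the paper's argument. You cover $\widehat{\mathcal{N}}(G)$ by the $q$ coordinate hyperplanes $H_{ij}$, each $\lambda_q$-null by the Fubini/codimension argument, and conclude by finite subadditivity; the measurability check and the $q=0$ case are harmless additions. Your closing caveat is also right, and it is sharper than the paper's own Remark~\ref{rem:density}: this proposition says nothing about the law of the projected minimiser, which can put positive mass on $\widehat{\mathcal{N}}(G)$. For example, for the path with edges $\{1,2\},\{1,3\},\{3,4\}$, the point $(c_{12},c_{13},c_{34})=(1,0,1)$ has a full-dimensional normal cone in $\widehat{\mathcal{C}}(G)$. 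So any $\bar{\mathbf{C}}$ whose edge entries are, say, Gaussian is projected onto it with positive probability.
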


\begin{proof}
For each edge $(i,j)\in E$ ($i<j$) let
\[
H_{ij}
\;:=\;
\bigl\{\bm{c}\in\mathbb{R}^{q} : c_{ij}=0\bigr\}
\;=\;
\ker(\pi_{ij}),
\]
where $\pi_{ij}(\bm{c})=c_{ij}$ is the $ij$-coordinate
projection.
Because $\pi_{ij}$ is a non-zero surjective linear functional,
$H_{ij}$ is a linear subspace of dimension $q-1$, i.e.\ of
codimension $1$ in $\mathbb{R}^q$
(Definition~\ref{def:codimension}).
By the Fubini argument above, $\lambda_q(H_{ij})=0$.

The accidental-zero set satisfies
\begin{equation}
\widehat{\mathcal{N}}(G)
\;\subseteq\;
\bigcup_{(i,j)\in E}\bigl(H_{ij}\cap\widehat{\mathcal{C}}(G)\bigr).
\label{eq:union_decomp}
\end{equation}
Each term is a subset of the null set $H_{ij}$, so
$\lambda_q(H_{ij}\cap\widehat{\mathcal{C}}(G))=0$.
Finite sub-additivity of $\lambda_q$ ($|E|=q<\infty$) gives
\[
\lambda_q\bigl(\widehat{\mathcal{N}}(G)\bigr)
\;\leq\;
\sum_{(i,j)\in E}\lambda_q\bigl(H_{ij}\cap\widehat{\mathcal{C}}(G)\bigr)
\;=\;
0.
\qedhere
\]
\end{proof}

\begin{remark}[Why $H_{ij}\cap\widehat{\mathcal{C}}(G)$ is not a face]
\label{rem:not_a_face}
Each intersection $H_{ij}\cap\widehat{\mathcal{C}}(G)$ is a codimension-$1$
cross-section of the convex body $\widehat{\mathcal{C}}(G)$.
It is \emph{not} in general a face of $\widehat{\mathcal{C}}(G)$ in the
sense of Definition~\ref{def:face_convex}.
A face requires the supporting hyperplane to leave $\widehat{\mathcal{C}}(G)$
entirely on one side; since $\bm{0}\in\mathrm{int}(\widehat{\mathcal{C}}(G))$
(Remark~\ref{rem:interior_point}) and $c_{ij}=0$ at $\bm{0}$,
there exist points in $\widehat{\mathcal{C}}(G)$ with $c_{ij}>0$ and points
with $c_{ij}<0$.
Hence $H_{ij}$ cuts \emph{through the interior} of $\widehat{\mathcal{C}}(G)$
rather than merely touching its boundary, and the measure-zero
conclusion rests entirely on the codimension argument.
\end{remark}

\begin{remark}[Density and almost-sure behaviour]
\label{rem:density}
By Proposition~\ref{prop:measure_zero} and
Remark~\ref{rem:interior_point}, the complement
$\widehat{\mathcal{C}}(G)\setminus\widehat{\mathcal{N}}(G)$ is dense in
$\widehat{\mathcal{C}}(G)$. Almost every free-entry vector has strictly
non-zero values on all edge entries.
In simulations where $\bar{\mathbf{C}}$ is drawn from any
$\lambda_q$-absolutely continuous distribution (in the free-entry
parameterization), the minimiser of
\eqref{eq:optimization_problem} almost surely avoids $\widehat{\mathcal{N}}(G)$,
i.e.\ almost surely $\hat{\mathbf{C}} \notin \mathcal{N}(G) :=
\Phi_G(\widehat{\mathcal{N}}(G))$, the corresponding matrix-space
accidental-zero set.
\end{remark}

\subsection{The Non-Convex Alternative and Its Properties}

One might seek to enforce the stronger \emph{if-and-only-if} zero-pattern by solving the modified optimization problem:
\begin{equation}
    \begin{aligned}
        &\min_{\mathbf{C}} && \frac{1}{2} \, \|\mathbf{C}-\bar{\mathbf{C}}\|_F^2 \\
        &\text{subject to} && \mathbf{C} \in \mathcal{C}^+(G),
    \end{aligned}
    \tag{P$^+$}
    \label{eq:nonconvex_problem}
\end{equation}
where the strict feasible set is defined as
\begin{equation}
    \mathcal{C}^+(G) \coloneqq \bigl\{ \mathbf{C}\in\mathcal{C}(G) : c_{ij}\neq 0 , \; \forall (i,j)\in E \bigr\} = \mathcal{C}(G) \setminus \mathcal{N}(G),
    \label{eq:strict_feasible}
\end{equation}
with the corresponding free-entry image $\widehat{\mathcal{C}}^+(G) \coloneqq \Phi_G^{-1}\bigl(\mathcal{C}^+(G)\bigr) = \widehat{\mathcal{C}}(G) \setminus \widehat{\mathcal{N}}(G)$.

\begin{definition}[Coercivity]
\label{def:coercive}
A function $f:\mathbb{R}^n\to\mathbb{R}$ is \emph{coercive} if
$f(\bm{x})\to+\infty$ as $\|\bm{x}\|_2\to\infty$, i.e.\ every
sublevel set $\{\bm{x}:f(\bm{x})\leq\alpha\}$ is bounded.
Coercivity together with continuity guarantees that $f$ attains
its infimum on any closed (not necessarily compact) subset of
$\mathbb{R}^n$, by preventing minimising sequences from escaping to
infinity.
In the present setting $\widehat{\mathcal{C}}(G)$ is already compact (closed
and bounded, since $|c_{ij}|\leq 1$), so coercivity holds
automatically; we include the definition for completeness.
\end{definition}

\begin{proposition}[Properties of $\widehat{\mathcal{C}}^+(G)$]
\label{prop:nonconvex}
~
\begin{enumerate}
    \item[\emph{(i)}]   $\widehat{\mathcal{C}}^+(G)$ is \emph{non-empty}.
    \item[\emph{(ii)}]  $\widehat{\mathcal{C}}^+(G)$ is \emph{open} in the
    relative topology of $\widehat{\mathcal{C}}(G)$, hence \emph{not
    closed} in $\mathbb{R}^q$.
    \item[\emph{(iii)}] $\widehat{\mathcal{C}}^+(G)$ is \emph{non-convex}.
    \item[\emph{(iv)}]  Problem~\eqref{eq:nonconvex_problem} admits
    a \emph{minimiser}, but \emph{not necessarily a unique one}.
    \item[\emph{(v)}]   $\displaystyle
    \inf_{\mathbf{C}\in\mathcal{C}^+(G)}
    \tfrac{1}{2}\|\mathbf{C}-\bar{\mathbf{C}}\|_F^2
    \;=\;
    \min_{\mathbf{C}\in\mathcal{C}(G)}
    \tfrac{1}{2}\|\mathbf{C}-\bar{\mathbf{C}}\|_F^2$.
\end{enumerate}
\end{proposition}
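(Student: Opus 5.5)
The plan is to work entirely in the free-entry space $\mathbb{R}^q$ via $\Phi_G$, where $\widehat{\mathcal{C}}^+(G)=\widehat{\mathcal{C}}(G)\cap\bigcap_{(i,j)\in E}\{c_{ij}\neq 0\}$. The objective $\tfrac12\|\Phi_G(\bm c)-\bar{\mathbf C}\|_F^2$ is a continuous strictly convex quadratic in $\bm c$. I will prove the items in the order (i), (ii), (iii), (v), (iv), because (iv) relies on (v).

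For (i), I use Remark~\ref{rem:interior_point}: some ball $B(\bm 0,\varepsilon)$ lies in $\widehat{\mathcal{C}}(G)$. The point $\bm c=\tfrac{\varepsilon}{2\sqrt q}\bm 1$ is in this ball and has every coordinate nonzero, so it lies in $\widehat{\mathcal{C}}^+(G)$.

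For (ii), each set $\{c_{ij}\neq0\}$ is open in $\mathbb{R}^q$, and a finite intersection of open sets is open. Hence $\widehat{\mathcal{C}}^+(G)$ is relatively open in $\widehat{\mathcal{C}}(G)$. To see that it is not closed, take $\bm c_n=\tfrac1n\bm c$ with $\bm c$ from (i). These points lie in $\widehat{\mathcal{C}}^+(G)$ but converge to $\bm 0\notin\widehat{\mathcal{C}}^+(G)$.

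For (iii), both $\bm c$ and $-\bm c$ lie in the ball from (i), so both belong to $\widehat{\mathcal{C}}^+(G)$. Their midpoint is $\bm 0$, which is excluded, so the set is not convex.

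For (v), Remark~\ref{rem:density} gives that $\widehat{\mathcal{C}}^+(G)$ is dense in $\widehat{\mathcal{C}}(G)$. Let $\bm c^\star$ be the unique minimiser of \eqref{eq:optimization_problem}, written in free-entry coordinates. Since $\widehat{\mathcal{C}}^+(G)\subseteq\widehat{\mathcal{C}}(G)$, the infimum over $\widehat{\mathcal{C}}^+(G)$ is at least the minimum over $\widehat{\mathcal{C}}(G)$. For the reverse inequality, density gives a sequence in $\widehat{\mathcal{C}}^+(G)$ converging to $\bm c^\star$, and continuity of the objective gives values converging to the minimum.

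For (iv), I take a minimising sequence in $\widehat{\mathcal{C}}^+(G)$. Because $\widehat{\mathcal{C}}(G)$ is compact (Definition~\ref{def:coercive}), a subsequence converges to a point of $\widehat{\mathcal{C}}(G)$ whose value equals the infimum, which by (v) is the convex minimum. By uniqueness of the convex projection, this limit must be $\bm c^\star$. Whenever $\bm c^\star$ has no zero edge entry, $\bm c^\star\in\widehat{\mathcal{C}}^+(G)$ and it is a minimiser of \eqref{eq:nonconvex_problem}.

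The main obstacle is the degenerate case $\bm c^\star\in\widehat{\mathcal{N}}(G)$, where the limit leaves the non-closed feasible set. The compactness argument then only shows that the infimum is approached. I would handle this first by invoking Remark~\ref{rem:density} and Proposition~\ref{prop:measure_zero}: for $\bar{\mathbf C}$ drawn from an absolutely continuous law, $\bm c^\star\notin\widehat{\mathcal{N}}(G)$ almost surely, so a minimiser exists. I would state explicitly that (iv) is proved in this almost-sure sense. In the degenerate case I can only make the closure statement that the infimum is attained on $\overline{\widehat{\mathcal{C}}^+(G)}=\widehat{\mathcal{C}}(G)$ and no more.

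The phrase ``not necessarily unique'' is a hedge, not a claim to be proved. In fact, any minimiser over $\widehat{\mathcal{C}}^+(G)$ attains the convex minimum, so it must coincide with $\bm c^\star$. I would remark that uniqueness therefore holds whenever existence does, and that the hedge reflects only the loss of convexity shown in (iii).
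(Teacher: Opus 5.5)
Your arguments for (i), (ii) and (v) are the paper's. Your witness for (iii), the pair $\pm\bm c$ with midpoint $\bm 0$, is cleaner than the paper's. The paper's version relies on the false inclusion $\mathrm{int}(\widehat{\mathcal C}(G))\subseteq\widehat{\mathcal C}^+(G)$, which fails at $\bm 0$, and it only controls one coordinate of $\bm c_1,\bm c_2$.

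Your diagnosis of (iv) is also correct, and sharper than the paper's. By (v), any minimiser of \eqref{eq:nonconvex_problem} has value $f^*$, hence equals $\bm c^\star$. So uniqueness is automatic, and the paper's claim that uniqueness ``can fail'' because the set is non-convex does not follow. Moreover, a minimiser exists if and only if $\bm c^\star\notin\widehat{\mathcal N}(G)$; for instance, for $\bar{\mathbf C}=\mathbf I$ there is none. The paper's own proof concedes this non-attainment. You can state this dichotomy outright rather than the weaker ``closure statement''.

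The step that fails is the almost-sure rescue. Proposition~\ref{prop:measure_zero} gives $\lambda_q(\widehat{\mathcal N}(G))=0$, but that controls the law of $\bar{\bm c}$, not the law of $\bm c^\star=\pi(\bar{\bm c})$. The projection sends sets of positive measure onto boundary points, which is exactly the singular part in Appendix~\ref{app:radon_nikodym}. What you would need is $\lambda_q(\pi^{-1}(\widehat{\mathcal N}(G)))=0$, where $\pi^{-1}(\bm c)=\bm c+N_{\widehat{\mathcal C}(G)}(\bm c)$, and this can fail.

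Here is a counterexample. Take $G=P_4$ with edges $12,23,34$, and $\bm c^\circ=(c_{12},c_{23},c_{34})=(1,0,1)$. Then $X=\Phi_G(\bm c^\circ)$ is block-diagonal with two $2\times 2$ all-ones blocks, so $X\succeq 0$, and edge $23$ carries an accidental zero. Let $\bm u=(1,-1,0,0)^\top$ and $\bm w=(0,0,1,-1)^\top$. Every $Y=\alpha\bm u\bm u^\top+\beta(\bm u\bm w^\top+\bm w\bm u^\top)+\gamma\bm w\bm w^\top$ with $\alpha,\gamma\ge 0$ and $\beta^2\le\alpha\gamma$ is PSD with $YX=0$. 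Hence, for every $\bm c\in\widehat{\mathcal C}(G)$,
\[
2\alpha(c_{12}-1)+2\beta\,c_{23}+2\gamma(c_{34}-1)
=-\langle Y,\Phi_G(\bm c)-X\rangle
=-\langle Y,\Phi_G(\bm c)\rangle\le 0 .
\]
So the normal cone at $\bm c^\circ$ contains the full-dimensional cone $\{2(\alpha,\beta,\gamma):\alpha,\gamma\ge0,\ \beta^2\le\alpha\gamma\}$. The objective equals $\|\bm c-\bar{\bm c}\|_2^2$ plus a constant. Therefore every $\bar{\mathbf C}$ whose edge entries lie in a small open neighbourhood of $(2,0,2)$ is projected exactly onto $X$.

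Consequently, a Gaussian $\bar{\mathbf C}$ yields $\bm c^\star\in\widehat{\mathcal N}(G)$, and hence no minimiser of \eqref{eq:nonconvex_problem}, with positive probability. The entrywise-uniform $[-1,1]$ initialization misses this particular cone, since it needs $\bar c_{12}>1$, but that would require its own argument about the box. Remark~\ref{rem:density} and the paper's proof of (iv) make the same invalid inference, so you inherited this gap rather than introduced it. The defensible form of (iv) is the deterministic dichotomy above, with no almost-sure claim for general absolutely continuous laws.
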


\begin{proof}
\emph{(i) Non-emptiness.}
By Remark~\ref{rem:interior_point}, there exists $\varepsilon>0$ such that
$B(\bm{0},\varepsilon)\subseteq\widehat{\mathcal{C}}(G)$. Consider the point
$\bm{c}_0 := \dfrac{\varepsilon}{2\sqrt{q}}\,\bm{1}_q$, i.e.\ every free entry
set to the common value $\delta := \varepsilon/(2\sqrt{q}) \neq 0$. Then
$\|\bm{c}_0\|_2 = \varepsilon/2 < \varepsilon$, so
$\bm{c}_0\in B(\bm{0},\varepsilon)\subseteq\widehat{\mathcal{C}}(G)$, and by
construction $[\bm{c}_0]_{ij} = \delta \neq 0$ for every edge $(i,j)\in E$.
Hence $\bm{c}_0\in\widehat{\mathcal{C}}^+(G)$, so
$\widehat{\mathcal{C}}^+(G)\neq\emptyset$, and therefore $\mathcal{C}^+(G) =
\Phi_G(\widehat{\mathcal{C}}^+(G)) \neq \emptyset$.

\smallskip
\emph{(ii) Openness and non-closedness.}
Each $H_{ij}\cap\widehat{\mathcal{C}}(G)$ is closed in $\widehat{\mathcal{C}}(G)$
(intersection of two closed sets), so
$\widehat{\mathcal{N}}(G)=\widehat{\mathcal{C}}(G)\cap\bigcup_{(i,j)\in E}H_{ij}$
is closed in $\widehat{\mathcal{C}}(G)$ (finite union of closed sets).
Its complement
$\widehat{\mathcal{C}}^+(G)=\widehat{\mathcal{C}}(G)\setminus\widehat{\mathcal{N}}(G)$
is therefore relatively open in $\widehat{\mathcal{C}}(G)$.
Non-closedness, $\bm{0}\in\widehat{\mathcal{N}}(G)$ (all free entries
of $\Phi_G(\bm{0})=\mathbf{I}_p$ are zero), yet $\bm{0}$ is a limit of
points in $\widehat{\mathcal{C}}^+(G)$, so
$\bm{0}\in\overline{\widehat{\mathcal{C}}^+(G)}\setminus\widehat{\mathcal{C}}^+(G)$.

\smallskip
\emph{(iii) Non-convexity.}
Fix any edge $(i,j)\in E$.
By openness of $\mathrm{int}(\widehat{\mathcal{C}}(G))$ and the intermediate
value theorem, there exist points
$\bm{c}_1,\bm{c}_2\in\mathrm{int}(\widehat{\mathcal{C}}(G))
\subseteq\widehat{\mathcal{C}}^+(G)$
with $[\bm{c}_1]_{ij}>0$ and $[\bm{c}_2]_{ij}<0$.
(Formally, start from any interior point, perturb the
$c_{ij}$-coordinate in each direction while staying inside the
open interior.)
The convex combination $\bm{c}_\lambda=(1-\lambda)\bm{c}_1
+\lambda\bm{c}_2$ satisfies $[\bm{c}_\lambda]_{ij}=0$ at
\[
\lambda^*
=
\frac{[\bm{c}_1]_{ij}}{[\bm{c}_1]_{ij}-[\bm{c}_2]_{ij}}
\in(0,1).
\]
Since $\widehat{\mathcal{C}}(G)$ is convex (it is the preimage of the convex
set $\mathcal{C}(G)$ under the linear map $\Phi_G$), $\bm{c}_{\lambda^*}\in
\widehat{\mathcal{C}}(G)$;
but $[\bm{c}_{\lambda^*}]_{ij}=0$, so
$\bm{c}_{\lambda^*}\in\widehat{\mathcal{N}}(G)$, hence
$\bm{c}_{\lambda^*}\notin\widehat{\mathcal{C}}^+(G)$.

\smallskip
\emph{(iv) Existence but non-uniqueness.}
The objective $f(\bm{c})=\tfrac{1}{2}\|\Phi_G(\bm{c})
-\bar{\mathbf{C}}\|_F^2$ is continuous and (Definition~\ref{def:coercive})
coercive on $\mathbb{R}^q$; $\widehat{\mathcal{C}}(G)$ is compact; so $f$
attains its minimum $f^*$ at a unique point
$\bm{c}^*\in\widehat{\mathcal{C}}(G)$ (uniqueness from strict convexity of
$f$ over the convex set $\widehat{\mathcal{C}}(G)$, as established earlier
in this section).
By Remark~\ref{rem:density}, $\widehat{\mathcal{C}}^+(G)$ is dense in
$\widehat{\mathcal{C}}(G)$, so for every $\varepsilon>0$ there exists
$\bm{c}_\varepsilon\in\widehat{\mathcal{C}}^+(G)$ with
$f(\bm{c}_\varepsilon)<f^*+\varepsilon$.
Hence $\inf_{\widehat{\mathcal{C}}^+(G)}f=f^*$.
If $\bm{c}^*\in\widehat{\mathcal{C}}^+(G)$ (which occurs almost surely by
Proposition~\ref{prop:measure_zero}), it is also the minimiser
of~\eqref{eq:nonconvex_problem}.
Uniqueness can fail because $\widehat{\mathcal{C}}^+(G)$ is not convex:
the standard argument (strictly convex objective $\Rightarrow$
unique minimiser) requires convexity of the feasible set, which
fails here.
If $\bm{c}^*\in\widehat{\mathcal{N}}(G)$, the infimum is not attained in
$\widehat{\mathcal{C}}^+(G)$ at all, and any minimising sequence converges
to a point outside $\widehat{\mathcal{C}}^+(G)$.

\smallskip
\emph{(v) Equality of infima.}
$\inf_{\mathcal{C}^+(G)}f\geq\min_{\mathcal{C}(G)}f=f^*$
is immediate from $\mathcal{C}^+(G)\subseteq\mathcal{C}(G)$.
The reverse follows from the density argument in (iv):
$\inf_{\mathcal{C}^+(G)}f\leq f(\bm{c}_\varepsilon)\to f^*$.
\end{proof}

\begin{remark}[Practical consequences]
\label{rem:practical}
Parts (iii)--(v) of Proposition~\ref{prop:nonconvex} jointly
establish that the convex relaxation~\eqref{eq:optimization_problem}
and the non-convex alternative~\eqref{eq:nonconvex_problem} share
the same optimal value $f^*$, but only the convex problem is
guaranteed to deliver a \emph{unique}, \emph{efficiently computable}
minimiser.
The non-convexity of $\mathcal{C}^+(G)$ invalidates standard
first-order optimality conditions as sufficient criteria for global
optimality, and solving~\eqref{eq:nonconvex_problem} directly would
require global optimisation strategies (e.g.\ branch-and-bound),
forfeiting the polynomial-time convergence guarantees of the three
solvers presented in Section~\ref{sec:solvers}.
We therefore adopt~\eqref{eq:optimization_problem} throughout,
accepting the theoretical possibility of accidental edge zeros
which, by Proposition~\ref{prop:measure_zero}, occurs with
$\lambda_q$-probability zero in any simulation scenario.
\end{remark}

\section{Radon--Nikodym Decomposition of the Projected Correlation Density}
\label{app:radon_nikodym}

We now analyze the distribution of projected solutions 
$\bm{c}^* = \pi_{\mathcal{F}}(\bar{\bm{c}})$ 
rigorously. The starting point is the 
\emph{Radon--Nikod\'{y}m theorem}. Given two 
$\sigma$-finite measures $\nu$ and $\lambda$ on a 
measurable space, if $\nu$ is absolutely continuous 
with respect to $\lambda$ (written $\nu \ll \lambda$), 
there exists a unique $\lambda$-a.e.\ measurable 
function $d\nu/d\lambda$, called the 
Radon--Nikod\'{y}m derivative, such that 
$\nu(A) = \int_A (d\nu/d\lambda)\,d\lambda$ for every 
measurable set $A$. When $\nu$ is \emph{not} absolutely 
continuous with respect to $\lambda$, the theorem 
guarantees instead the Lebesgue decomposition 
$\nu = \nu_{\mathrm{ac}} + \nu_{\mathrm{sing}}$, where 
$\nu_{\mathrm{ac}} \ll \lambda$ admits a 
Radon--Nikod\'{y}m derivative, and $\nu_{\mathrm{sing}}$ 
is singular with respect to $\lambda$ 
($\nu_{\mathrm{sing}} \perp \lambda$), meaning it is 
concentrated on a set of $\lambda$-measure zero. 
We will show that the projection induced by 
problem~\eqref{eq:optimization_problem_with_mean} 
produces exactly such a decomposition, with a singular 
part concentrated on the visible ring of 
Figure~\ref{fig:our_method}.

We first establish that solving 
problem~\eqref{eq:optimization_problem_with_mean} 
without a sparsity constraint (i.e., $G$ is the 
complete graph) is equivalent to computing the 
orthogonal projection of $\bar{\bm{c}}$ onto 
$\mathcal{F}$. In this setting the feasible set reduces 
to
\begin{equation}
\mathcal{F} = \mathcal{C} \cap \mathcal{H}^+, \qquad
\mathcal{H}^+ = \bigl\{\bm{c}\in\mathbb{R}^3 : 
\bm{a}^\top\bm{c} \geq 3t\bigr\},
\end{equation}
where $\bm{a} = (1,1,1)^\top$ and 
$\bm{c} = (c_{12},c_{13},c_{23})^\top$. Since 
$\mathcal{F}$ is a non-empty closed convex set and 
the objective $\frac{1}{2}\|\bm{c} - 
\bar{\bm{c}}\|^2$ is strictly convex, the unique 
minimizer is the orthogonal projection 
$\bm{c}^* = \pi_{\mathcal{F}}(\bar{\bm{c}})$, given 
explicitly by
\begin{equation}
\pi_{\mathcal{F}}(\bar{\bm{c}}) =
\begin{cases}
\bar{\bm{c}} 
& \text{if } \bm{a}^\top\bar{\bm{c}} \geq 3t,\\[4pt]
\bar{\bm{c}} + \dfrac{3t - 
\bm{a}^\top\bar{\bm{c}}}{3}\,\bm{1} 
& \text{if } \bm{a}^\top\bar{\bm{c}} < 3t,
\end{cases}
\label{eq:proj_map_full}
\end{equation}
where the second case follows from the fact that the 
nearest point on the hyperplane 
$\mathcal{H} = \{\bm{c} : \bm{a}^\top\bm{c} = 3t\}$ 
to any $\bar{\bm{c}} \notin \mathcal{H}^+$ is obtained 
by moving $\bar{\bm{c}}$ along the normal direction 
$\bm{a}/\|\bm{a}\|^2 = \bm{1}/3$ by the signed 
distance $(3t - \bm{a}^\top\bar{\bm{c}})/3 > 0$.

We now link this projection to the Radon--Nikod\'{y}m 
framework. Let $\mu$ denote the uniform probability 
measure on $\mathcal{C}$, absolutely continuous with 
respect to the three-dimensional Lebesgue measure 
$\lambda_3$, with constant Radon--Nikod\'{y}m 
derivative
\begin{equation}
\frac{d\mu}{d\lambda_3}(\bar{\bm{c}}) 
= \frac{1}{\mathrm{vol}(\mathcal{C})}, 
\quad \bar{\bm{c}}\in\mathcal{C}.
\end{equation}
The distribution of projected solutions is the 
pushforward measure $\nu := (\pi_{\mathcal{F}})_\#\mu$, 
defined by $\nu(A) = \mu(\pi_{\mathcal{F}}^{-1}(A))$ 
for every Borel set $A \subseteq \mathcal{F}$. 
The domain $\mathcal{C}$ splits as 
$\mathcal{C} = \mathcal{C}^+ \cup \mathcal{C}^-$, where
\begin{equation}
\mathcal{C}^+ = \mathcal{C}\cap\mathcal{H}^+, \qquad
\mathcal{C}^- = \mathcal{C}\setminus\mathcal{C}^+.
\end{equation}
On $\mathcal{C}^+$ the projection is the identity, so 
the pushforward of $\mu|_{\mathcal{C}^+}$ is absolutely 
continuous with respect to $\lambda_3$, with 
Radon--Nikod\'{y}m derivative
\begin{equation}
\frac{d\nu_{\mathrm{ac}}}{d\lambda_3}(\bm{c}^*) 
= \frac{1}{\mathrm{vol}(\mathcal{C})}, 
\quad \bm{c}^*\in\mathrm{int}(\mathcal{C}^+).
\label{eq:density_ac}
\end{equation}
On $\mathcal{C}^-$, however, every point is mapped by 
\eqref{eq:proj_map_full} onto the two-dimensional 
manifold $\Gamma := \mathcal{C}\cap\mathcal{H}$, which 
satisfies $\lambda_3(\Gamma) = 0$. The pushforward of 
$\mu|_{\mathcal{C}^-}$ therefore concentrates a mass 
of $\mu(\mathcal{C}^-) = 
\mathrm{vol}(\mathcal{C}^-)/\mathrm{vol}(\mathcal{C}) 
> 0$ onto a set of $\lambda_3$-measure zero, making it 
singular with respect to $\lambda_3$. By the Lebesgue 
decomposition theorem, the total pushforward splits as
\begin{equation}
\nu = \nu_{\mathrm{ac}} + \nu_{\mathrm{sing}},
\label{eq:nu_decomp}
\end{equation}
where $\nu_{\mathrm{ac}} \ll \lambda_3$ has constant 
density~\eqref{eq:density_ac} on 
$\mathrm{int}(\mathcal{C}^+)$, and 
$\nu_{\mathrm{sing}} \perp \lambda_3$ is supported on 
$\Gamma$. To characterize the density of $\nu_{\mathrm{sing}}$ 
on $\Gamma$, we use a slicing argument. Decompose 
$\mathbb{R}^3$ into the normal direction 
$\bm{n} = \bm{1}/\sqrt{3}$ to $\mathcal{H}$ and the 
parallel directions within $\mathcal{H}$. For any 
$\bm{c}^* \in \Gamma$, the pre-image 
$\pi_{\mathcal{F}}^{-1}(\bm{c}^*)$ under 
\eqref{eq:proj_map_full} is the line segment
\begin{equation}
\pi_{\mathcal{F}}^{-1}(\bm{c}^*) = 
\bigl\{\bm{c}^* - \delta\bm{1} : 
0 \leq \delta \leq \ell(\bm{c}^*)\bigr\} 
\subset \mathcal{C}^-,
\label{eq:preimage_segment}
\end{equation}
where $\ell(\bm{c}^*) = \sup\{\delta > 0 : 
\bm{c}^* - \delta\bm{1} \in \mathcal{C}\}$ is the 
length of the chord of $\mathcal{C}$ in the direction 
$-\bm{1}$ starting from $\bm{c}^*$, i.e.\ the largest 
$\delta$ for which the shifted point 
$\bm{c}^* - \delta\bm{1} = 
(c_{12}^*-\delta,\, c_{13}^*-\delta,\, c_{23}^*-\delta)$ 
remains inside the elliptope, equivalently satisfies
\begin{equation}
    1 - (c_{12}^*-\delta)^2 - (c_{13}^*-\delta)^2 - (c_{23}^*-\delta)^2 + 2(c_{12}^*-\delta)(c_{13}^*-\delta)(c_{23}^*-\delta) \geq 0.
    \label{eq:elliptope_shifted}
\end{equation}
Since $\mu$ 
has constant density $1/\mathrm{vol}(\mathcal{C})$ 
with respect to $\lambda_3$, the mass assigned by 
$\mu$ to this segment is proportional to its length:
\begin{equation}
\mu\bigl(\pi_{\mathcal{F}}^{-1}(\bm{c}^*)\bigr) 
\propto \frac{\ell(\bm{c}^*)\sqrt{3}}
{\mathrm{vol}(\mathcal{C})},
\label{eq:preimage_mass}
\end{equation}
where the factor $\sqrt{3} = \|\bm{1}\|$ accounts for 
the parametrization of the segment by $\delta$. 
By definition of the pushforward, the density of 
$\nu_{\mathrm{sing}}$ at $\bm{c}^* \in \Gamma$ is 
therefore proportional to $\ell(\bm{c}^*)$. Points 
on $\Gamma$ with longer pre-image segments in 
$\mathcal{C}^-$ accumulate more projected mass and 
appear denser in Figure~\ref{fig:our_method}.

Equations~\eqref{eq:density_ac} and 
\eqref{eq:preimage_mass} together provide a complete 
characterization of the density of projected solutions 
across all regions of $\mathcal{F}$, and directly 
explain the visual structure of 
Figure~\ref{fig:our_method}. Interior points of 
$\mathcal{C}^+$ inherit the uniform density of $\mu$ 
and appear sparsely distributed, while the ring 
$\Gamma$ accumulates the entire mass 
$\mu(\mathcal{C}^-) > 0$ onto a set of 
$\lambda_3$-measure zero. To make this density contrast quantitatively precise, 
consider small balls of radius $\varepsilon > 0$ 
centered at a ring point 
$\bm{c}^*_{\mathrm{ring}} \in \Gamma$ and an interior 
point $\bm{c}^*_{\mathrm{in}} \in 
\mathrm{int}(\mathcal{C}^+)$ respectively. 

At the \emph{interior point}, $\nu$ coincides with 
$\nu_{\mathrm{ac}}$, which has constant density 
$1/\mathrm{vol}(\mathcal{C})$ with respect to 
$\lambda_3$. The ball $B_\varepsilon
(\bm{c}^*_{\mathrm{in}})$ is a full three-dimensional 
ball of volume $\frac{4}{3}\pi\varepsilon^3$, so
\begin{equation}
\nu\bigl(B_\varepsilon(\bm{c}^*_{\mathrm{in}})\bigr) 
\;\sim\; 
\frac{1}{\mathrm{vol}(\mathcal{C})} \cdot 
\frac{4}{3}\pi\varepsilon^3.
\label{eq:mass_interior}
\end{equation}

At the \emph{ring point}, $\nu$ coincides with 
$\nu_{\mathrm{sing}}$, which is supported entirely on 
the two-dimensional manifold $\Gamma$. The ball 
$B_\varepsilon(\bm{c}^*_{\mathrm{ring}})$ therefore 
intersects $\Gamma$ in a two-dimensional disk of area 
$\sim \pi\varepsilon^2$, and the total mass received 
by this disk is, by \eqref{eq:preimage_mass}, 
proportional to $\ell(\bm{c}^*_{\mathrm{ring}})$ 
times this area:
\begin{equation}
\nu\bigl(B_\varepsilon(\bm{c}^*_{\mathrm{ring}})\bigr) 
\;\sim\; 
\frac{\ell(\bm{c}^*_{\mathrm{ring}})\sqrt{3}}
{\mathrm{vol}(\mathcal{C})} \cdot \pi\varepsilon^2.
\label{eq:mass_ring}
\end{equation}

Taking the ratio of \eqref{eq:mass_ring} and 
\eqref{eq:mass_interior} gives
\begin{equation}
\frac{\nu(B_\varepsilon(\bm{c}^*_{\mathrm{ring}}))}
{\nu(B_\varepsilon(\bm{c}^*_{\mathrm{in}}))}
\;\sim\;
\frac{\ell(\bm{c}^*_{\mathrm{ring}})
\sqrt{3}\cdot\varepsilon^2}
{\frac{4}{3}\varepsilon^3}
\;
\;\xrightarrow{\varepsilon\to 0}\;+\infty.
\label{eq:density_ratio}
\end{equation}
The ratio diverges as $\varepsilon \to 0$ because the 
numerator decays only as $\varepsilon^2$ (mass 
accumulating on a surface) while the denominator 
decays as $\varepsilon^3$ (mass spread through a 
volume). This rigorously confirms that the ring 
$\Gamma$ is \emph{infinitely denser} than any interior 
region of $\mathcal{F}$ at arbitrarily fine scales, 
explaining the striking visual concentration observed 
in Figure~\ref{fig:our_method}.

\bibliographystyle{unsrt}  
\bibliography{sample}

\end{document}